\newcommand{\norm}[1]{\left\lVert#1\right\rVert}
\newtheorem{proposition}{Proposition}
\title{Understanding Uncertainty in Bayesian Deep Learning} 
\author{Cooper Lorsung} 
\theoremstyle{definition}
\newtheorem{definition}{Definition}[section]
\begin{document}


\pagenumbering{roman} 

\thesistitlepage
\copyrightpage

\renewcommand{\contentsname}{\protect\centering\protect\Large Contents}
\renewcommand{\listtablename}{\protect\centering\protect\Large List of Tables}
\renewcommand{\listfigurename}{\protect\centering\protect\Large List of Figures}

\tableofcontents 

\listoftables
\listoffigures
\begin{acknowledgments}
  The author would like to acknowledge Finale Doshi-Velez, Weiwei Pan, and Yaniv Yacoby for their gracious advising and mentoring, and Yaniv Yacoby additionally for collaborative work. The author would also like to acknowledge Sujay Thakur for collaborative work. Lastly, the author would like to acknowledge Harvard FAS Research Computing for providing computational resources that made this thesis possible.
\end{acknowledgments}
\begin{dedication}
  To my parents Chris and Jenni, my step-parents Jandeen and John, my sister Sophie, and especially my brother Conor for our machine learning discussions.
\end{dedication}

\begin{abstract}
  Neural Linear Models (NLM) are deep Bayesian models that produce predictive uncertainty by learning features from the data and then performing Bayesian linear regression over these features. 
  Despite their popularity, few works have focused on formally evaluating the predictive uncertainties of these models. Furthermore, existing works point out the difficulties of encoding domain knowledge in models like NLMs, making them unsuitable for applications where interpretability is required.
  In this work, we show that traditional training procedures for NLMs can drastically underestimate uncertainty in data-scarce regions. 
  We identify the underlying reasons for this behavior and propose a novel training method that can both capture useful predictive uncertainties as well as allow for incorporation of domain knowledge.
\end{abstract}

\pagenumbering{arabic} 


\chapter{Introduction}\label{ch:1}


In high-stakes, safety critical applications of machine learning, reliable measurements of model predictive uncertainty matter just as much as predictive accuracy. 
Traditionally, applications requiring predictive uncertainty relied on Gaussian Processes (GPs) \citep{rasmussen_gp} for two reasons: (1) they produce high predictive uncertainty in data-scarce regions and low uncertainty in data rich ones, and (2) their predictive uncertainty can be easily and meaningfully tuned via a set of hyperparameters of the kernel function (for example, the expressiveness of RBF kernels can be tuned via the length-scale and amplitude), allowing domain experts to encode task-relevant knowledge.
However, due to the computational complexity of GP inference, recent works have advocated for the use of deep Bayesian models with approximate inference as fast and scalable GP alternatives \citep{springenberg_bayesian,Snoek}. These alternatives, unfortunately, often do not retain the two desired properties of GPs.
That is, these models are often overly certain on test points coming from data-poor regions of the input space \citep{uci_gap}, and it is unintuitive to tune their hyperparameters in order to achieve task-appropriate behavior \citep{sun2018functional}. 
Bayesian Neural Networks (BNNs) \citep{neal2012bayesian}, for example, provide a way of explicitly capturing model uncertainty - uncertainty from having insufficient observations to determine the ``true'' predictor - by placing a prior distribution over network weights. 
Like GP inference, rather than point estimates, Bayesian inference for BNNs produces distributions over possible predictions, whose variance can be used as an indicator of model confidence during test time. 
However, inference for large BNNs remains challenging;
that is, many tractable approximations of BNN posteriors yield posterior predictive uncertainties that can identify out-of-distribution points far from the training data, but  cannot reliably distinguish data-rich from data-poor regions close to the training data \citep{uci_gap, uncertainty_quality} - these approximations underestimate the so-called "in-between" uncertainties.

For this reason, Neural Linear Models (NLM), a model similar to BNNs but with tractable inference, is gaining popularity ~\citep{Snoek, rl_nlm, activelearning_nlm, automl_nlm}.
NLMs place a prior only on the last layer of weights and learn point estimates for the remaining layers; inference for the last layer can then be performed analytically. 
One can interpret the deterministic layers as a finite dimensional feature-space embedding of the data, and the last layer of NLMs as performing Bayesian linear regression on the \emph{feature basis}, that is, the basis defined by the feature embedding of the data. 

Although NLMs are easy to implement and are scalable~\citep{Snoek}, in order to deploy them in applications that require useful predictive uncertainties, we nonetheless need to verify that these models retain desirable properties of GPs. But despite their increasing popularity, little work has been done to formally evaluate the quality of uncertainty estimates produced by NLMs. 
In the first paper to do so \citep{Rasmussen}, the authors show that NLMs can achieve high log-likelihood on test data sampled from training data-scarce regions; they treat this as evidence that NLM uncertainties can distinguish data-scarce and data rich regions. 
However, as noted by \cite{uncertainty_quality}, log-likelihood measures only how well predictive uncertainty aligns with the variation in the actual data and not how well these uncertainties predict data-scarcity. In fact, we will show that, like BNNs learned with various approximate inference methods, the predictive uncertainties of NLMs resulting from traditional inference are overly confident on data-scarce regions close to the training data (NLMs underestimate in-between uncertainties). 
Furthermore, unlike in the case of GP models, it is much more difficult to encode domain or functional knowledge in deep Bayesian models \citep{sun2018functional}, and hence the predictive uncertainties of these models are often difficult to interpret in context of a specific downstream task.

\section{Bayesian Statistics}
In statistics, broadly, two schools of thought prevail in approaching problems.
They are the frequentist and Bayesian approaches.
Frequentist statistics views probabilities as long-term outcomes of repeatable experiments (\cite{MIT_stats_17}).
Frequentists place probabilities over data, given a hypothesis.
That is, the hypothesis is formulated, then the probability the given data was collected given the hypothesis is calculated.
\begin{equation*}
    \mathcal{L}(H; D) = P(D|H)
\end{equation*}
If the probability is sufficiently high (or low), the hypothesis is accepted (or rejected).

The primary focus of this work is in Bayesian methods and we therefore focus more on understanding those methods here.
In Bayesian statistics, probability is viewed differently.
Probability distribution are placed over both data and hypotheses, and probabilities are updated according to Bayes' Rule:
\begin{equation*}
    P(H|D) = \frac{P(D|H)P(H)}{P(D)}
\end{equation*}
Here, the distribution over the hypothesis $P(H)$ is known as the \textit{prior distribution}, and is the Bayesian way of incorporating prior knowledge into the model.
The distribution $P(H|D)$ is known as the \textit{posterior distribution} because it is our model \textit{after} seeing data.
The \textit{likelihood} function is how we incorporate data into our model, and calculates how likely the data is given a specific hypothesis.
Prediction can be done both before and after the model has been updated with data.
Given a model, we can make a predictive inference about a data point by integrating over the prior distribution, known as the \textit{prior predictive}:
\begin{equation*}
    p(y) = \int p(y|\theta)p(\theta) d\theta
\end{equation*}
Of much focus and concern, is predictive inference over new points $\Tilde{y}$ after updating our model with our given data.
This is known as the \textit{posterior predictive} distribution, and is calculated by integrating over our model parameters $\theta$:
\begin{equation*}
    p(\Tilde{y}|y) = \int p(\Tilde{y}|\theta) p(\theta|y) d\theta
\end{equation*}
Using this basic mathematical machinery, it is possible to create much more complex models that capture data trends in many situations (\cite{gelmanbda04}).
Examples of this machinery in action are given throughout the remainder of this text, with the simplest example given in section \ref{sec:bayesian_linear}.

\section{Uncertainty}
\label{sec:uncertainty}
Bayesian statistics makes interpreting uncertainty intuitive.
People have an intuitive understanding of what uncertainty means in every day decisions.
For example, if someone is 50\% certain it will rain today, they believe it is as likely to rain as it is to not rain.
In other words, this person is unsure if it will rain or not.
In the case of machine learning, this understanding of uncertainty can be applicable.
For a model that is classifying images as either a cat or dog, if the model outputs a probability of 50\% cat and 50\% dog for a given image, we would rightfully interpret this as the model being uncertain if the image is a cat or dog.

In a regression setting where we are predicting continuous values, the interpretation is a bit less intuitive, but can still easily understood.
For example, say historical data says the temperature tomorrow is supposed to be $50^{\circ}$F, and the standard deviation of temperature is $10^{\circ}$F.
A model that predicts the temperature to be 52$^{\circ}$F should do so with relatively small uncertainty.
Historical trends suggest this is a good prediction since it is within one standard deviation of the mean.
However, if the model predicts the temperature to be $90^{\circ}$, it should do so with relatively high uncertainty.
Historical trends suggest this prediction is incredibly unlikely since it is four standard deviations away from the mean.
Another way to say this is that predictions well-supported by the data should have low uncertainty.
The model should be fairly certain in predictions that are near the data: predicting a temperature that is within a standard deviation or two of the historical mean.
Models should be uncertain of predictions that are not near the data: predicting a temperature many standard deviations away from the historical mean.
This analogy works because we understand that weather tends to vary day-to-day, but not too drastically.
If the predicted temperature from our model is $2000^{\circ}$, the model should have incredibly high uncertainty, considering the ambient temperature on earth has never been this high in all of recorded human history.
Jumps this large, with correspondingly high uncertainties can sometimes be ideal, however.
In materials science, small increases in temperature can cross a phase transition boundary, leading to a large difference in heat capacity.

This begs the question: how do we know if the uncertainty from a prediction is any good or not?
Before this question can be answered, the different types of uncertainty must be understood.

\subsection{Types of Uncertainty}
\label{sec:types_of_uncertainty}
Broadly, the two type of uncertainty are \textit{aleatoric} and \textit{epistemic} uncertainty.
Aleatoric uncertainty is the type of uncertainty inherent to the data.
This can come from the data generation or collection process.
Aleatoric uncertainty can be incorporated into a model as a data noise parameter, for example in bayesian regression.
An easy way to remember this is to think \textit{aleatoric} is inherent to \textit{all} of the data.
Epistemic uncertainty is the uncertainty that is present due to a lack of data.
For the remainder of this work, epistemic uncertainty and uncertainty will be used interchangeably.
Aleatoric uncertainty will mostly be referred to data noise or output noise.

\subsection{Good Uncertainty}
\label{sec:good_uncertainty}
Having a 'good' uncertainty estimate is critical in detecting out-of-distribution data, as well as regions with little or no training data.
'Good' is in quotes because there is no one metric that can say whether an uncertainty estimate is good or not.
Additionally, 'good' uncertainty is task and context dependent.
What is good uncertainty on one task may be bad uncertainty in another, potentially with the same training data.
This dependence on task and context means there is no ground truth good uncertainty.
The lack of ground truth good uncertainty makes the task of quantification difficult.

In this work, we use visual analysis, average uncertainty, and propose a new method of benchmarking uncertainty against gold-standard methods, like GPs, to better understand uncertainty.
First, an intuitive understanding of what constitutes 'good' uncertainty must be established.
Good uncertainty usually should increase where there is no data.
Similar to how we decide things in the face of little knowledge, we want our machine learning models to do the same.
For example, if all we know is the temperature outside right now, we would be very uncertain about whether or not it will be rainy tomorrow.
Next, we generally want our uncertainties to increase smoothly as we get farther from our data.
For example, when forecasting weather, weather models are less certain about the weather two weeks from now than they are two hours from now.
While these are often good guidelines, they do not hold in every case.

We see this 'ideal' behavior clearly in \ref{fig:low_good_high}.
On the left, there is no increase in uncertainty where there is no data.
This hold true in both the gap region, and the out-of-distribution regions on either side.
The model is therefore overly confident in its demonstrably bad predictions in the gap region, where it both does not capture the ground truth, and also has low uncertainty.
The middle plot shows good uncertainty.
Where there is no data, the uncertainty increases smoothly the farther from the data we get.
The uncertainty is in the same order of magnitude as the data, and captures simple, but unexpected perturbations in the data within two standard deviations.
The right plot is an example of when a model has too much uncertainty.
The model does not capture the unexpected perturbations like in the other two plots, but the uncertainty is orders of magnitude larger than the data.
This is also not ideal because the model is saying, in effect, that predictions that vary wildly and unrealistically outside of the data are as likely as predictions that don't.
To draw on the weather example once again, a prediction with uncertainty like this is saying we are incredibly uncertain what the weather will be in 15 minutes, and just as uncertain what it will be in 15 minutes as in 2 weeks.

However, as a counterexample, say the ground truth was simply a cubic function rather than a perturbed cubic function.
Additionally, the data was generated from a physics experiment where we know a priori the class of possible functions in this context are all scalar multiples of $f(x) = Cx^3$.
Then it follows that the left plot has better uncertainty because it is as confident as we are that the ground truth is cubic.
Both the middle and right plots have too much uncertainty in the gap since the bands of uncertainty include functions that cannot be described by a function $f(x) = Cx^3$.

Additionally, in other contexts, the right most plot may be the ideal uncertainty.
For example, if the out-of-distribution regions are forbidden or incredibly unlikely, we would expect the model to have near-infinite uncertainty (for standard regression models infinite uncertainty is impossible, so a significant increase is what we would expect).
In this case, the left and middle plots do not have the ideal uncertainty because there is far too little.

We may also want our uncertainty to be calibrated.
That is, where the 95th percentile predictive uncertainty captures 95 percent of the data.
In simple examples like this cubic gap, it is fairly easy to have all of these predictive uncertainty characteristics.

\begin{figure}[H]
    \centering
    \includegraphics[width=1.0\linewidth]{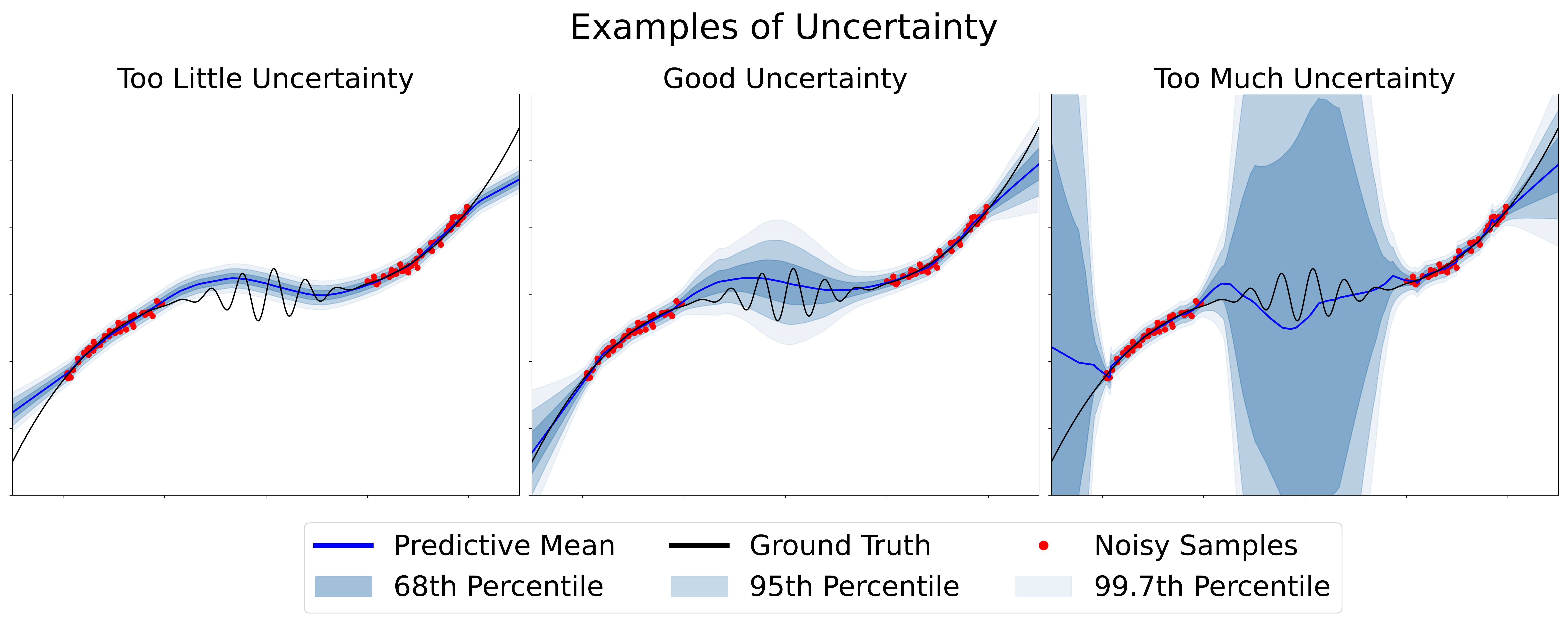}
    \caption{On a cubic function with a gap in training data where there are unexpected perturbations, we see examples of good and bad uncertainty. Bad uncertainty can be both in the form of too much or too little uncertainty.}
    \label{fig:low_good_high}
\end{figure}

However, hen visual analysis is difficult or impossible, such as in higher dimensions, we need another way to determine if uncertainty is good or bad.
In the case of real data experiments, the input is often high dimensional and while visualization may be helpful, this if often only in specific cases.
For example, for UCI gap data, we can visualize the gap, and get an idea of how the uncertainty increases, but this is primarily helpful because we know where the gap is beforehand.
For UCI Gap\cite{uci_gap} data, we sort along a feature and remove the middle third of the data for training.
This artificially introduces a gap into the data.
In figure \ref{fig:gap_high_d}, we see that the uncertainty clearly increases in the gap.
However, this visualization is only useful because we know the gap in data is there.
\begin{figure}[H]
    \centering
    \includegraphics[width=1.0\linewidth]{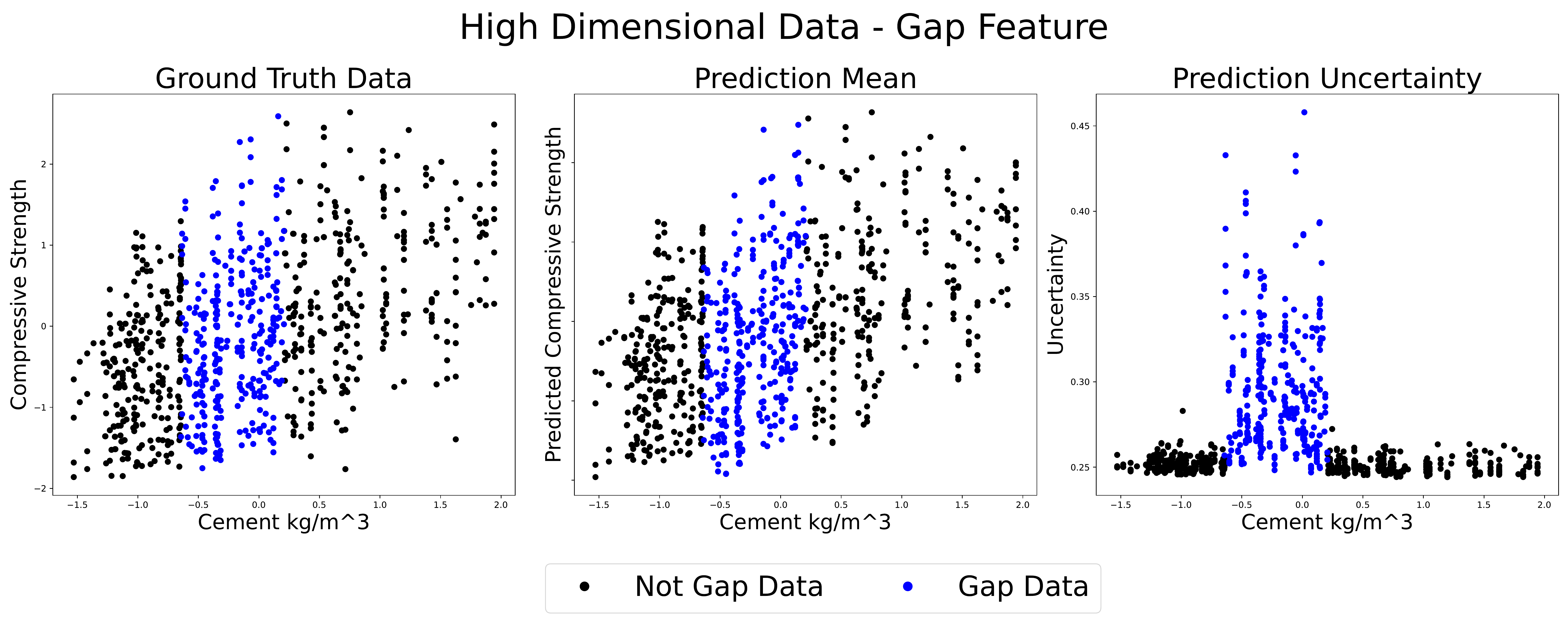}
    \caption{Concrete compressive strength with a gap introduced into the Cement feature. We see there is clearly an increase in uncertainty in the gap region.}
    \label{fig:gap_high_d}
\end{figure}

In figure \ref{fig:not_gap_high_d}, we plot a feature we did not introduce a gap into.
We see there is an increase in uncertainty for some of the data, but there is no way to determine visually where this gap is or why there is an increase.
\begin{figure}[H]
    \centering
    \includegraphics[width=1.0\linewidth]{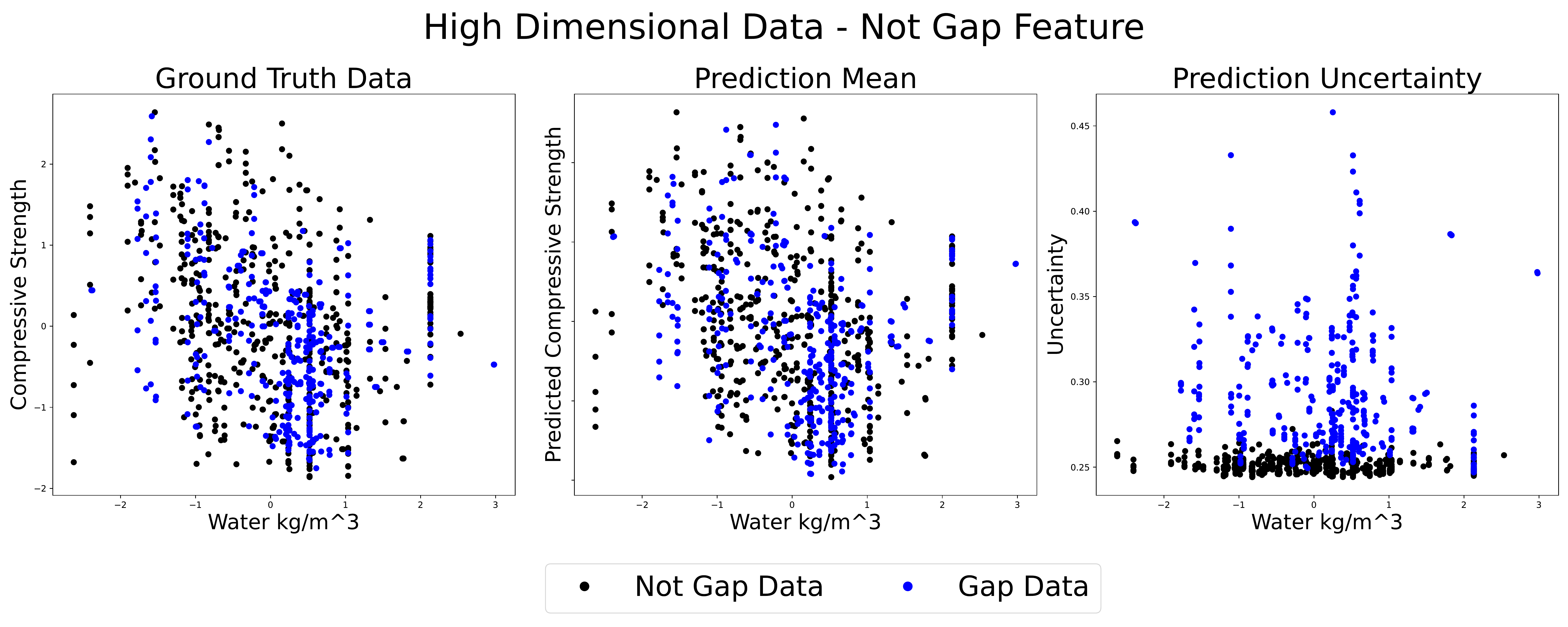}
    \caption{Concrete compressive strength with a gap introduced into the Cement feature plotted along the Water feature. We see there is clearly an increase in uncertainty for some of the data points, but there is no clear relationship between the Water feature and the increase in uncertainty.}
    \label{fig:not_gap_high_d}
\end{figure}
Because of this, in higher dimensional data, there is difficulty in establishing a standard benchmark as to what "good" uncertainty is.
In this work, "good" uncertainty in high dimensional data is simply a significant increase from known, in-distribution data.
For example, in the high-dimensional UCI data sets, we simply take the average uncertainty across all gap and not-gap data points.
If a circular gap region was introduced in the middle of two features, or a spherical in the middle of three, visualization would become impossible.
In order to create a more standard benchmark for these models, a novel experiment is proposed in section \ref{sec:uncertainty_benchmark}

\section{This Work}

The remainder of this work is dedicated to, first, understanding how and why popular models fail to adequately quantify uncertainty.
In Chapter \ref{ch:2}, gold-standard and benchmark models are explored.
The gold-standard models are Gaussian Processes and Bayesian neural networks (BNN) sampled with Hamiltonian Monte Carlo (HMC).
The benchmark models explored are BNN trained with Variational Inference (VI), the Neural Linear Model (NLM), Monte Carlo Dropout (MCD), Bootstrapped Ensembles, Anchored Ensembled, and Spectral-normalized Neural Gaussian Processes (SNGP).
Extra emphasis on building understanding the models from the ground up is given through explanation of neural networks, standard training algorithms, and Bayesian linear regression.

Second, recent advances and related works are explored to understand the current landscape of models.
Advances in GPs, BNNs, and other models are discussed to determine their viability for quantifying uncertainty.

Third, a novel framework for quantifying uncertainty is developed.
This framework, UNA, augments the NLM with auxiliary regressors.
These auxiliary regressors are either penalized to create principled uncertainty, or fit to reference functions.
UNA is shown to be able to fit data well, and provide consistently good uncertainty.

Fourth, a diverse set of experiments are run in order to test uncertainty in a variety of tasks.
The novel radial uncertainty benchmark is explored in both one and two dimensions.
This benchmark aims to provide a standard way to compare models to gold-standard methods.
The experiments are simple, and easy to interpret.
Toy examples are also explored to see the how well each model can capture in-between uncertainty.
Additionally, toy examples are given to demonstrate encoding uncertainty to get specific results.
Real data experiments are then run using the standard UCI regression benchmark data sets.
Both standard and gap sets are used.
The downstream task of Bayesian Optimization was selected to determine the utility of the uncertainty of each model.
Four optimization tasks are used.
The last experiment done is with applying concept whitening in order to create interpretable uncertainty in UNA.

Lastly, the results are discussed to better understand them.
Potential shortcoming of UNA and the uncertainty benchmark are further discussed.
Future directions are briefly looked at.
Appendices are also given primarily for experimental details and determining the effect of hyperparameters on UNA.

\chapter{Background}\label{ch:2}
Many machine learning models exist for regression tasks in both the frequentist and Bayesian framework.
Some of the more popular methods are covered here.
Model formulation, inference procedure, as well as shortcomings are discussed here.
Due the the substantial amount of literature present for most methods outlined here, this chapter is not an exhaustive exposition of these models.
Readers are referred to the literature cited in the beginning of each subsection for a more detail exploration of each model.

\section{Neural Networks}
\label{sec:nn}
\begin{figure}
    \centering
    \includegraphics[width=0.5\linewidth]{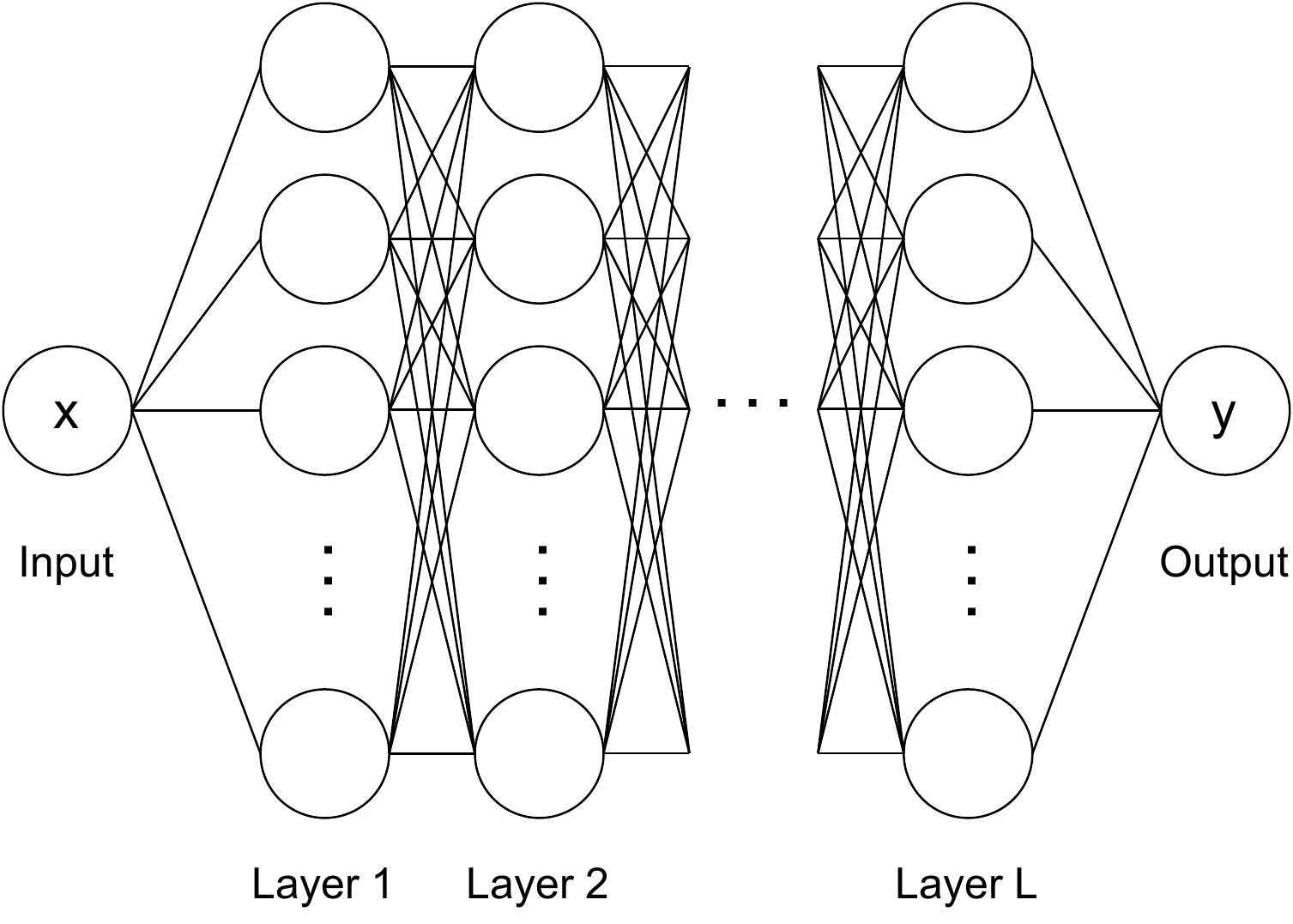}
    \caption{Neural Network with $L$ hidden layers.}
    \label{fig:my_label}
\end{figure}
Neural Networks are one of the most popular and well studied methods in machine learning.
Neural Networks can be understood as a series of nested nonlinear transformations.
Given the architecture seen above, the output is given by
\begin{equation}
    f(x) = g_l(g_{l-1}(\cdots g_{0}(\mathbf{x})\cdots )) = h(\mathbf{W}_{l}h(\mathbf{W}_{l-1}h(\cdots h(\mathbf{W}_0\mathbf{x} + \mathbf{b}_0) + \mathbf{b}_{l-1}) + \mathbf{b}_l)
    \label{eq:nn}
\end{equation}
for a given input $\mathbf{x}$.
For a given neural network $f_{\theta}$, we say it has the learnable parameters $\theta$ that encompass the weights and biases of each layer.
In the context of regression, one of the most popular loss functions is mean squared error (MSE), defined as
\begin{equation}
    \mathcal{L}(\mathbf{X}, \mathbf{y}) = \frac{1}{N}\sum_{i = 1}^{N} \left\lVert f(\mathbf{x}_i) - \mathbf{y}_i \right\rVert^2
    \label{eq:mse}
\end{equation}

\subsection{Training}
\label{sec:nn_train}
In order to use Neural Networks, they must be trained on the data.
The standard training involves taking gradients of the loss function and propagating those gradients back through the network, also known as backpropagation.
This standard procedure can be accomplished with multiple different ways of calculating gradients, each with their own benefits and downsides.
For each of these training algorithms, the gradients can be either calculated one point at a time ($N$ = 1 in equation \ref{eq:mse}), or in a batched way, where the data is subdivided into $B$ batches of $M$ points ($N$ = $M$ in equation \ref{eq:mse}) and the average gradient is used.

\subsubsection{Gradient Descent}
\label{sec:gd}
One of the most popular and easily implemented optimization methods is gradient descent.
Gradient descent generally calculates gradients using all of the data at once, and steps parameters in the direction of steepest decrease in loss.

\begin{algorithm}
\begin{algorithmic}[1]
\State{\textbf{Input}: Neural Network $f_{\theta}$, learning rate $\eta$, data $\mathcal{D} = \{\mathbf{x}_i, \mathbf{y}_i\}_{i=1}^N$, training epochs $T$, target distribution $\pi(\mathbf{q})$}
\State{\textbf{Output}: Trained Neural Network $f_{\theta}$}
\For{$i$ in $1, \ldots, T$}
    \State{$\theta_{i+1} \gets \theta_{i} + \eta\nabla_{\theta}\mathcal{L}(\mathbf{X}, \mathbf{y})$}
\EndFor
\end{algorithmic}
\caption{Gradient Descent}
\label{algo:gd}
\end{algorithm}

One of the downsides to gradient descent is it gets stuck in local minima.
There is no way for the loss to increase outside of a large learning rate.
This is an unreliable method of getting out of local minima and often leads to unstable performance.

\subsubsection{Stochastic Gradient Descent}
\label{sec:sgd}
In order to improve upon the local minimum issue of gradient descent, stochasticity is introduced in the form of batched gradients using a subset of points, rather than using the entire data set.

\begin{algorithm}
\begin{algorithmic}[1]
\State{\textbf{Input}: Neural Network $f_{\theta}$, learning rate $\eta$, data $\mathcal{D} = \{\mathbf{x}_i, \mathbf{y}_i\}_{i=1}^N$, training epochs $T$, loss function $\mathcal{L}(\mathbf{X}, \mathbf{y})$}
\State{\textbf{Output}: Trained Neural Network $f_{\theta}$}
\For{$i$ in $1, \ldots, T$}
    \State{Shuffle data \textbf{X}, \textbf{y}}
    \For{$j$ in $1, \ldots, N$}
        \State{$\theta_{j+1} \gets \theta_{j} + \eta\nabla_{\theta}\mathcal{L}(\mathbf{x}_j, \mathbf{y}_j)$}
    \EndFor
\EndFor
\end{algorithmic}
\caption{Stochastic Gradient Descent}
\label{algo:gd}
\end{algorithm}
While performing better than traditional gradient descent, SGD tends to lead to long training times.

\subsubsection{Adam}
\label{sec:adam}
Stochastic methods such as SGD can often be benefited by adding momentum terms.
Momentum is incorporated as an additional term that helps the optimizer get out of local minima and take larger steps.
With larger steps, momentum-based algorithms often require fewer training epochs.
Adam is a very popular momentum-based optimization algorithm\cite{AdamOpt2014}.
Adam has shown to be very successful in machine learning applications and is the primary optimization algorithm used in this work.

\begin{algorithm}
\begin{algorithmic}[1]
\State{\textbf{Input}: Neural Network $f_{\theta}$, learning rate $\eta$, data $\mathcal{D} = \{\mathbf{x}_i, \mathbf{y}_i\}_{i=1}^N$, training epochs $T$, loss function $\mathcal{L}(\mathbf{X}, \mathbf{y})$, weight decays $\beta_0, \beta_1 in [0, 1)$}
\State{\textbf{Output}: Trained Neural Network $f_{\theta}$}
\State{Initialize first and second momentum vectors $m_0, v_0 = 0$}
\For{$i$ in $1, \ldots, T$}
    \State{$g_{t} \gets \nabla_{\theta}\mathcal{L}(\mathbf{X}, \mathbf{y})$}
    \State{$m_{t} \gets \beta_1 m_{t-1} + (1 - \beta_1)g_{t}$}
    \State{$v_t \gets \beta_2 v_{t-1} + (1 - \beta_2)g_{t}^2$}
    \State{$\hat{m}_t \gets \frac{m_t}{1 - \beta_1^t}$}
    \State{$\hat{v}_t \gets \frac{v_t}{1 - \beta_2^t}$}
    \State{$\theta_t \gets \theta_{t-1} - \eta \frac{\hat{m}_t}{\sqrt{\hat{v}_t} + \epsilon}$}
\EndFor
\end{algorithmic}
\caption{Adam optimization}
\label{algo:gd}
\end{algorithm}

\section{Bayesian Linear Models}
\label{sec:bayesian_linear}
Linear models are often considered the 'workhorse' of machine learning.
Intuitive interpretations, as well as exact inference make this model one of the first covered in many machine learning courses.
In this work, the focus is Bayesian regression, known as Bayesian linear regression.

Following the derivation from section 2.1.1 of \cite{rasmussen_gp}, we start with the standard linear regression formulation that has gaussian noise.
$$f(\mathbf{x}) = \mathbf{w}^T\mathbf{x}, \quad y = f(\mathbf{x}) + \epsilon, \quad \epsilon \sim \mathcal{N}\left(0, \sigma^2_n\right)$$
Here, $\sigma^2$ represents the noise inherent in the data, and is independent for each sample.

With Gaussian noise, our likelihood is Gaussian.
If we place a Gaussian prior over the weights $\mathbf{w} \sim \mathcal{N}\left(0, \sigma^2_w\mathbb{I}\right)$, then by Bayes' Rule, our posterior is also Gaussian.
It is important to note that the weights can have a non-diagonal covariance matrix as well.
In this work, diagonal covariance is used exclusively, so the diagonal notation is used to emphasize this.

That is, 
\begin{align*}
    p(\mathbf{w}| \mathbf{x}, \mathbf{y}) \propto& p(\mathbf{y}|\mathbf{x}, \mathbf{w})p(\mathbf{w}) \\
    \propto& \mathcal{N}(\mathbf{y}|\mathbf{x}^T\mathbf{w}, \sigma^2_n)\mathcal{N}(0, \sigma^2_w\mathbb{I}) \\
    \propto& \mathcal{N}(\sigma^{-2}_nA^{-1}\mathbf{x}\mathbf{y}, A^{-1})
\end{align*}
Where $A = \sigma^{-2}_n\mathbf{x}\mathbf{x}^T + \sigma^{-2}_w\mathbb{I}$
In order to calculate the predictive distribution $f^*$ at some test point $f^*(\mathbf{x}^*)$, we must integrate over all possible weights $\mathbf{w}$, weighted by their posterior probability.
The posterior predictive is therefore given by 
\begin{equation}
    p(f_*|\mathbf{x}_*, \mathbf{x}, \mathbf{y}) = \int p(f_* | \mathbf{x}_*, \mathbf{x}, \mathbf{y}, \mathbf{w})p(\mathbf{w} | \mathbf{x}, \mathbf{y})
    = \mathcal{N}(\sigma^{-2}_n\mathbf{x}_*^{T}A^{-1}\mathbf{x}\mathbf{y}, \mathbf{x}_*^{T} A^{-1} \mathbf{x}_*)
\end{equation}
Thus, we have the predictive mean and variance for a test point.
This formulation often suffers from a lack of expressivity.
Low expressivity can be often be seen in prediction where points far from the data have low uncertainty, or the model struggles to fit to the data.
A common remedy to this is to project the data into a higher dimensional feature space, and perform bayesian regression on the new feature set.

Polynomial regression is often used, and can be represented by a feature basis of monomials: $\boldsymbol\Phi(x) = [1, x, x^2]$.
The feature map $\boldsymbol\Phi$ is often called the basis set, and the transformed data $\boldsymbol\Phi(\mathbf{x})$ is often called the feature basis or design matrix.
Here, the feature map transforms points $\boldsymbol\Phi: \mathbb{R}^1 \to \mathbb{R}^3$.
The posterior and posterior predictive are updated accordingly.
$A = \sigma^{-2}_n \boldsymbol\Phi(\mathbf{x})\boldsymbol\Phi(\mathbf{x})^T + \sigma^{-2}_{w}\mathbb{I}$, which gives us:
\begin{equation}
    p(\mathbf{w}| \mathbf{x}, \mathbf{y}) = \mathcal{N}(\sigma^{-2}_nA^{-1}\boldsymbol\Phi(\mathbf{x})\mathbf{y}, A^{-1})
\end{equation}
\begin{equation}
    p(y_*|\mathbf{x}_*, \mathbf{x}, \mathbf{y}) = \mathcal{N}(\sigma^{-2}_n\boldsymbol\Phi(\mathbf{x}_*)^{T}A^{-1}\boldsymbol\Phi(\mathbf{x})\mathbf{y}, \boldsymbol\Phi(\mathbf{x}_*)^{T} A^{-1} \boldsymbol\Phi(\mathbf{x}_*))
\end{equation}
This idea of using a feature basis underpins many of the models discussed in this work, including GPs, NLMs, SNGP, and UNA.

\section{Bayesian Neural Networks}
\label{sec:bnn}
An early model that combines both neural networks and the Bayesian framework is the Bayesian Neural Network (BNN).
Simply put, a BNN is a regular NN with a prior placed on the weights and biases.
We denote the set of parameters $\boldsymbol\theta = \{\mathbf{w}, \mathbf{b}\}$ such that we have the neural network $y = g_{\theta}(\mathbf{x})$.
We then place a prior (generally a Gaussian) over the neural network's parameters:
\begin{equation}
    \begin{split}
    \boldsymbol\theta &\sim \mathcal{N}(\mathbf{0}, \boldsymbol\sigma_{\boldsymbol\theta}^2) \\
    \boldsymbol\mu_n &= g_{\boldsymbol\theta}(\mathbf{x}_n) \\
    \mathbf{y}_n &\sim \mathcal{N}(\boldsymbol\mu_n, \boldsymbol\sigma_{\mathbf{y}}^2)
    \end{split}
    \label{eq:bnn_eq}
\end{equation}
We see this in figure \ref{fig:bnn_model}.
By Bayes' rule, we know the posterior is given by
\begin{equation}
    p(\boldsymbol\theta|\mathbf{x}, \mathbf{y}) = \frac{p(\mathbf{y}|\mathbf{x},\boldsymbol\theta)p(\boldsymbol\theta)}{\int p(\mathbf{y}|\mathbf{x}, \boldsymbol\theta')p(\boldsymbol\theta')d\boldsymbol\theta'}
    \label{eq:bnn_posterior}
\end{equation}
The posterior here is intractable because the denominator, called the evidence, cannot be computed easily.
Because of this, approximate techniques are required in order perform inference with BNNs.

\begin{figure}
    \centering
    \includegraphics[width=0.5\linewidth]{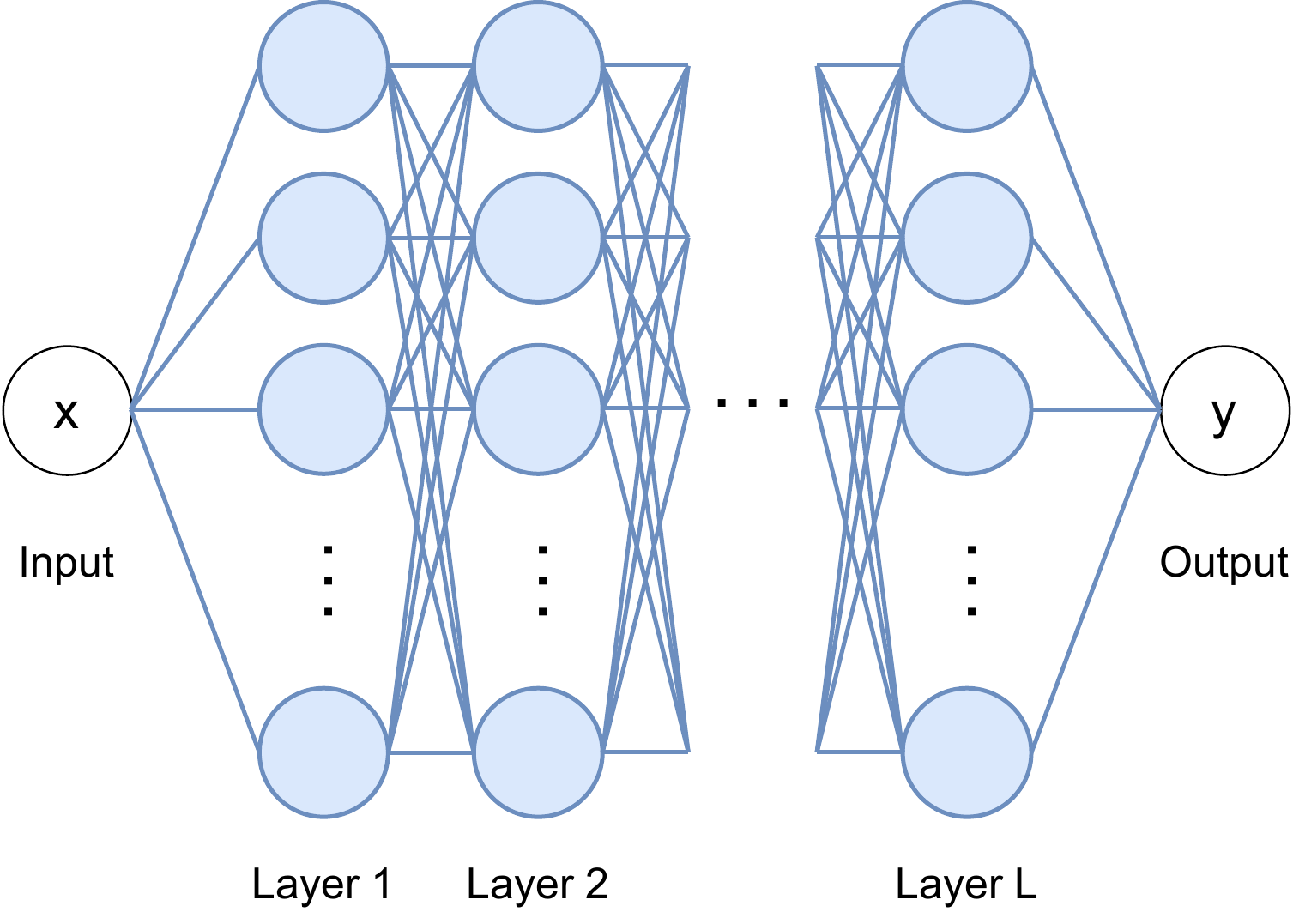}
    \caption{Fully bayesian neural network. The blue nodes and connecting lines indicate a bayesian prior is places over the biases and weights, respectively.}
    \label{fig:bnn_model}
\end{figure}

\subsection{Hamiltonian Monte Carlo}
\label{sec:bnn_hmc}
A gold-standard method for qualitatively good uncertainty is the physics-inspired Hamiltonian Monte Carlo (HMC).
The explanation here follows \cite{neal2012bayesian} closely.
HMC builds on traditional Markov-Chain Monte Carlo methods by using Hamiltonian dynamics to propose new points.
Monte Carlo methods stem from the Metropolis-Hastings algorithm.
Simply put, the Metropolis algorithm samples a distribution probabilistically accepting or rejecting a proposed point\cite{Metropolis1953}.
The distribution in question is called the canonical distribution and is give by
\begin{equation}
    P(q, p) = \frac{1}{\mathcal{Z}}\exp\left(-H(q, p)/T\right)
\end{equation}
The Hamiltonian, $H$ is comprised of the position $q$ and momentum $p$:
\begin{equation}
    H(q, p) = U(q) + K(p)
    \label{eq:hamiltonian}
\end{equation}
Where $U(p)$ is the potential energy, $K(p)$ is the kinetic energy, and $T$ is the temperature.
The constant $\mathcal{Z}$ is a normalizing constant that allows the canonical distribution to be interpreted as a probability of the state $\{q, p\}$.
Following the traditional Metropolis algorithm, we can sample the canonical ensemble by probabilistically accepting proposed states $\{p_{proposed}, q_{proposed}\}$ according to:
\begin{align*}
    \alpha &< \min \left\{1, \frac{\exp\left(-H(q_{proposed}, p_{proposed}/T)\right)}{\exp\left(-H(q, p)/T\right)} \right\} \\
    \alpha &< \min \left\{1, \exp\left[\frac{1}{T}\left(U(q) - U(q_{proposed}) + K(p) - K(p_{proposed})\right)\right]\right\}
\end{align*}
If the equation is true, we accept the proposed state and update $q \gets q_{proposed}$.
For proper comparison, we take the minimum between $1$ and the relative probabilities.
This is simply guaranteeing moves with higher probability are always accepted.
It should be noted here that the normalization constants cancel out and therefore do not need to be calculated.

The equations of motion describe how the system evolves over time and are given by
\begin{equation}
    \begin{split}
        \frac{dq}{dt} &= \frac{\partial H}{\partial p} \\
        \frac{dp}{dt} &= -\frac{\partial H}{\partial q}
    \end{split}
    \label{eq:ham_eom}
\end{equation}
Given a target distribution $\pi(q)$, we can sample this distribution by setting $U(q) = -log(\pi(q))$.
In this case, the target distribution is the BNN posterior.
The kinetic energy $K(p)$ is often defined as $K(p) = \frac{p^TM^{-1}p}{2}$, where $M$ is the symmetric, positive-definite mass matrix that is often a scalar multiple of the identity matrix.
This scalar multiple, called the mass parameter, can be tuned for better training.
The equations of motion must be numerically integrated in order to simulate the dynamics.
The Leapfrog integrator is chosen for this task because it is simple and offers high accuracy.
Quite simply, we take a half step of the momentum, full step of the position, and half step of the momentum again.
With a learning rate, $\eta$, this takes the form:
\begin{align}
    p(t + \eta/2) &= p(t) - \frac{\eta}{2}\frac{\partial H}{\partial q} \\
    q(t + \eta) &= q(t) + \frac{\eta}{2}\frac{\partial H}{\partial p} \\
    p(t + \eta) &= p(t + \eta/2) - \frac{\eta}{2}\frac{\partial H}{\partial q}
\end{align}

Combining all of this, we start with an initial condition $q$, sample an initial momentum $p$, simulate the dynamics to reach a new position, and decide whether or not to accept the new position.
The algorithm is given below in Algorithm \ref{algo:hmc}.
Proof of ergodicity and convergence are give in \cite{neal2012bayesian}.

\begin{algorithm}
\begin{algorithmic}[1]
\State{\textbf{Input}: Neural Network $g_{\theta}$, learning rate $\eta$, data $\mathcal{D} = \{\mathbf{x}_i, \mathbf{y}_i\}_{i=1}^N$, training epochs $T$, gradient steps $L$, potential energy function $U(q)$, kinetic energy function $K(p)$, initial guess $q_{current}$}
\State{\textbf{Output}: List of accepted $q$ steps, $q_{trace}$}
\For{$i$ in $1, \ldots, T$}
    \Statex{}
    \State{\# Get initial positions}
    \State{$q_{proposed} \gets q_{current}$}
    \State{$p_{proposed} \sim \mathcal{N}(\mathbf{0}, \mathbf{M}^2)$}
    \Statex{}
    \State{\# Perform leapfrog integration}
    \State{$p_{proposed} \gets p_{proposed} -\frac{\eta}{2}U'(q)$}
    \For{$j$ in $1, \ldots, L$}
        \State{$q_{proposed} \gets q_{proposed} + \eta p_{proposed}$}
        \State{$p_{proposed} \gets p_{proposed} - \eta U'(q_{proposed})$}
    \EndFor
    \State{$p_{proposed} \gets \frac{\eta}{2}U'(q_{proposed})$}
    \State{$p_{proposed} \gets -p_{proposed}$}
    \Statex{}
    \State{\# Calculate Hamiltonian of current and proposed states}
    \State{$U_{current} = U(q_{current}$})
    \State{$K_{current} = K(p_{current}$})
    \State{$U_{proposed} = U(q_{proposed}$})
    \State{$K_{proposed} = K(p_{proposed}$})
    \Statex{}
    \State{\# Accept or reject the move}
    \State{$\alpha \sim \mathcal{U}(0, 1)$}
    \If{$\alpha < \exp\left(U_{current} - U_{proposed} + K_{current} - K_{proposed}\right)$}
        \State{$q_{current} \gets q_{proposed}$}
        \State{Append $q_{current}$ to $q_{trace}$}
    \Else
        \State{Append $q_{current}$ to $q_{trace}$}
    \EndIf
\EndFor
\caption{Hamiltonian Monte Carlo}
\label{algo:hmc}
\end{algorithmic}
\end{algorithm}

It is well known that HMC scales poorly to large data sets.
Due to this scaling, it is rarely used in practice and other methods are required.

\subsection{Variational Inference}
\label{sec:bnn_vi}

Variational Inference (VI) is one such method that has seen widespread use\cite{Blei_2017}.
VI is well suited for large data sets, in part because it does not attempt to learn the true posterior.
A variational family is proposed to approximate the true posterior.
The variational family often times is a family of functions that has been restricted in some way to make inference easier.
A measure of difference, usually the Kullback-Leibler Divergence, is minimized between the true posterior and the variational family.

The variational family $\mathcal{Q}$ is a family of densities over the latent variables $\mathbf{z}$.
VI works to find the best density $q(\mathbf{z}) \in \mathcal{Q}$ by calculating
\begin{equation}
    q^*(\mathbf{q}) = \underset{q(\mathbf{q}) \in \mathcal{Q}}{\text{argmin}} \mathbb{KL}\left(q(\mathbf{z}) \lvert p(\mathbf{z}|\mathbf{x})\right)
    \label{eq:vi_obj}
\end{equation}
As we saw earlier, the posterior for a BNN is given by equation \ref{eq:bnn_posterior}.
The same trouble in calculating the evidence is present here.
The evidence lower bound (ELBO) is optimized in place of the KL divergence.
The ELBO is simply $-D_{KL} + \log p(\mathbf{x})$ and is given by:
\begin{equation}
    ELBO(q) = \mathbb{E}\left[\log p(\mathbf{z}, \mathbf{x}) \right] - \mathbb{E}\left[\log q(\mathbf{z}) \right]
\end{equation}
Because $p(\mathbf{x})$ has no dependence on $\mathbf{z}$, maximizing the ELBO is equivalent to minimizing the KL divergence.

Perhaps the most popular variational family, the Mean-field variational family assumes that latent variables are mutually independent.
That is,
\begin{equation}
    q(\mathbf{z}) = \prod_{j=1}^{m}q_j(\mathbf{z}_j)
\end{equation}
In this setting, each $q$ generally takes the form of a univeriate Gaussian distribution.
With the lack of correlation between parameters, the variational family effectively fits to the mean of the data posterior but cannot capture any covariance between parameters.

With the variational family chosen and fitting re-framed as a tractable optimization problem, fitting can take place.
A popular, and fairly intuitive, choice for this is coordinate ascent variational inference (CAVI) \cite{bishop:2006:PRML}.
This algorithm can be understood as updating each variational factor $q_{j}(\mathbf{z}_j)$ while keeping others constant.
Updates are performed as
\begin{equation}
    q_j^*(z_j) \propto \exp\left\{\mathbb{E}_{-j}\left[\log p(z_j | \mathbf{z}_{-j}, \mathbf{x})\right]\right\}
\end{equation}
Where the notation $-j$ means with respect to all coordinates not equal to $j$.
That is, the variational density $q_{-j}(\mathbf{z}_{-j}) = \prod_{i \neq j}q_{i}(\mathbf{z}_i)$.
After all coordinates have been updated, the ELBO is computed once again.
If the ELBO has converged then CAVI is complete.

Computing updates is often times quite challengind.
To avoid having to work through tedious calculations, a black-box approach is often taken, known as Black-box Variational Inference (BBVI) \cite{ranganath2013black}.

\section{Gaussian Processes}
\label{sec:gp}

Gaussian Processes (GP)\cite{rasmussen_gp} are an extremely popular model that offer exact inference that scales cubically with number of data points.
Due to this cubic scaling, inference on large datasets is often computationally infeasible.

\subsection{Model Details}
\label{sec:gp_model}
Model details follow the excellent reference\cite{rasmussen_gp}, and a more detailed exploration of GPs is given there.
GPs can be interpreted in multiple ways.
First, they can be viewed as a linear combination of inputs in a high-dimensional feature space.
This interpretation is known as the weight space view, because predictions are made using a weighted combination of the data.
The second, less intuitive interpretation is in function space.
This interpretation views GPs as a distribution over functions.

\subsubsection{Weight Space}
\label{sec:gp_weight}
The weight space view of GPs follows bayesian regression with a basis set from section \ref{sec:bayesian_linear} with the kernel trick.
Because GPs work in high dimensional feature space without the assumption that $w \sim \mathcal{N}(0, \Sigma_w)$ has diagonal covariance, it is more appropriate to do the derivation in general terms.
We therefore replace $\sigma_w^2$ with $\Sigma_w$.

The kernel trick involves replacing instances of $\boldsymbol\Phi(\mathbf{x})^T \Sigma_w \boldsymbol\Phi(\mathbf{x})$ with a kernel, or covariance, matrix $K$.
The kernel matrix acts as an inner product in $d$-dimensional feature space and can be thought of as a function $K: \mathbb{R}^{d\times d} \to \mathbb{R}$.

Once we do this, and simplify in the same way as \cite{rasmussen_gp}, we have the new posterior predictive:

\begin{equation}
\begin{split}
    p(y_*| \mathbf{x}_*, \mathbf{x}, \mathbf{y}) = \mathcal{N}(&\boldsymbol\Phi(\mathbf{x}_*)^T\Sigma_w\boldsymbol\Phi(\mathbf{x})(K + \sigma^2_n\mathbb{I})^{-1}\mathbf{y}, \\
    &\boldsymbol\Phi(\mathbf{x}_*)^T\Sigma_w\boldsymbol\Phi(\mathbf{x}_*)  - \boldsymbol\Phi(\mathbf{x}_*)^T\Sigma_p \boldsymbol\Phi(\mathbf{x})(K + \sigma^2_n \mathbb{I})\boldsymbol\Phi(\mathbf{x})^T\Sigma_w \boldsymbol\Phi(\mathbf{x}_*))
\end{split}
\end{equation}

\subsubsection{Function Space}
\label{sec:gp_func}
The other, equivalent, interpretation of a GP is in function space.
That is, we view a GP as a distribution over functions defined as\cite{rasmussen_gp}:
\begin{definition}
    A \textbf{Gaussian Process} is a collection of random variables, any finite number of which have a joint Gaussian distribution.
\end{definition}
In this view, a GP can be completely specified by its mean and covariance functions: $f(\mathbf{x}) \sim \mathcal{GP}(m(\mathbf{x}), k(\mathbf{x}, \mathbf{x}'))$.
Here, the mean and covariance functions are defined as:
\begin{equation}
    m(\mathbf{x}) = \mathbb{E}[f(\mathbf{x})]
\end{equation}
\begin{equation}
    k(\mathbf{x}, \mathbf{x}') = \mathbb{E}[(f(\mathbf{x}) - m(\mathbf{x}))(f(\mathbf{x}') - m(\mathbf{x}'))]
\end{equation}
The mean function is often set to 0.
This is reasonable to do when dealing with normalized data.
Stemming from the definition, if we have $y_1 \sim \mathcal{N}(\mu_1, \sigma_1)$, and $y_2 \sim \mathcal{N}(\mu_2, \sigma_2)$, then it must be true that $(y_1, y_2) \sim \mathcal{N}(\boldsymbol\mu, \boldsymbol\Sigma)$.
This can be satisfied by simply requiring that $\sigma_1 = \Sigma_{11}$ and $\sigma_2 = \Sigma_{22}$, also known as consistency.

With this, we can evaluate the posterior GP: $f|\mathcal{D} \sim \mathcal{GP}(m_{\mathcal{D}}, k_{\mathcal{D}})$.
Where for a point $\mathbf{x}$,
\begin{equation}
    m_{\mathcal{D}}(\mathbf{x}) = m(\mathbf{x}) + \Sigma_{*,\mathbf{x}}^T\Sigma^{-1}(\mathbf{f} - \boldsymbol{\mu})
\end{equation}
\begin{equation}
    k_{\mathcal{D}}(\mathbf{x}, \mathbf{x}') = k(\mathbf{x}, \mathbf{x}') - \Sigma_{*,\mathbf{x}}^T\Sigma^{-1}\Sigma_{*,\mathbf{x}'}
\end{equation}
In this case, $\Sigma^{-1}$ is the inverse covariance matrix for all points $\mathbf{x} \in \mathcal{D}$, $\Sigma_{\*,\mathbf{x}}$ is the covariance vector between all points in the training set and the new point $\mathbf{x}$, $\boldsymbol\mu$ is the vector of $m(x)$ evaluated at all points in the training set, and $\mathbf{f}$ is the GP evaluated at all points in the training set.
Predictions can be made in this way for new test points $x_*$.

\subsection{Inference}
\label{sec:gp_inference}
Inference for a GP is inference for the kernel's hyperparameters.
Because the posterior can be computed exactly using standard Gaussian update formulae, the kernel's hyperparameters are the only parameters that can be tuned.
The most popular way to do this is to integrate out the function values, $\mathbf{f}$, leading to output conditioned on just the data:
\begin{equation}
    p(\mathbf{y}|\mathbf{x}) = \int p(\mathbf{y}|\mathbf{f}, \mathbf{x})p(\mathbf{f}|\mathbf{x})d\mathbf{f}
\end{equation}
This leads to:
\begin{equation}
    \log p(\mathbf{y}|\mathbf{x}) = -\frac{1}{2}\mathbf{y}^T(K + \sigma_n^2\mathbb{I})^{-1}\mathbf{y} - \frac{1}{2}\log\left|K + \sigma^2_n\mathbb{I}\right| - \frac{n}{2}\log 2\pi
\end{equation}
where $\Sigma = K + \sigma^2_n\mathbb{I}$.
For a more detailed explanation, readers are referred to \cite{rasmussen_gp}.

\section{Neural Linear Model}
\label{sec:nlm}
\begin{figure}
    \centering
    \includegraphics[width=0.5\linewidth]{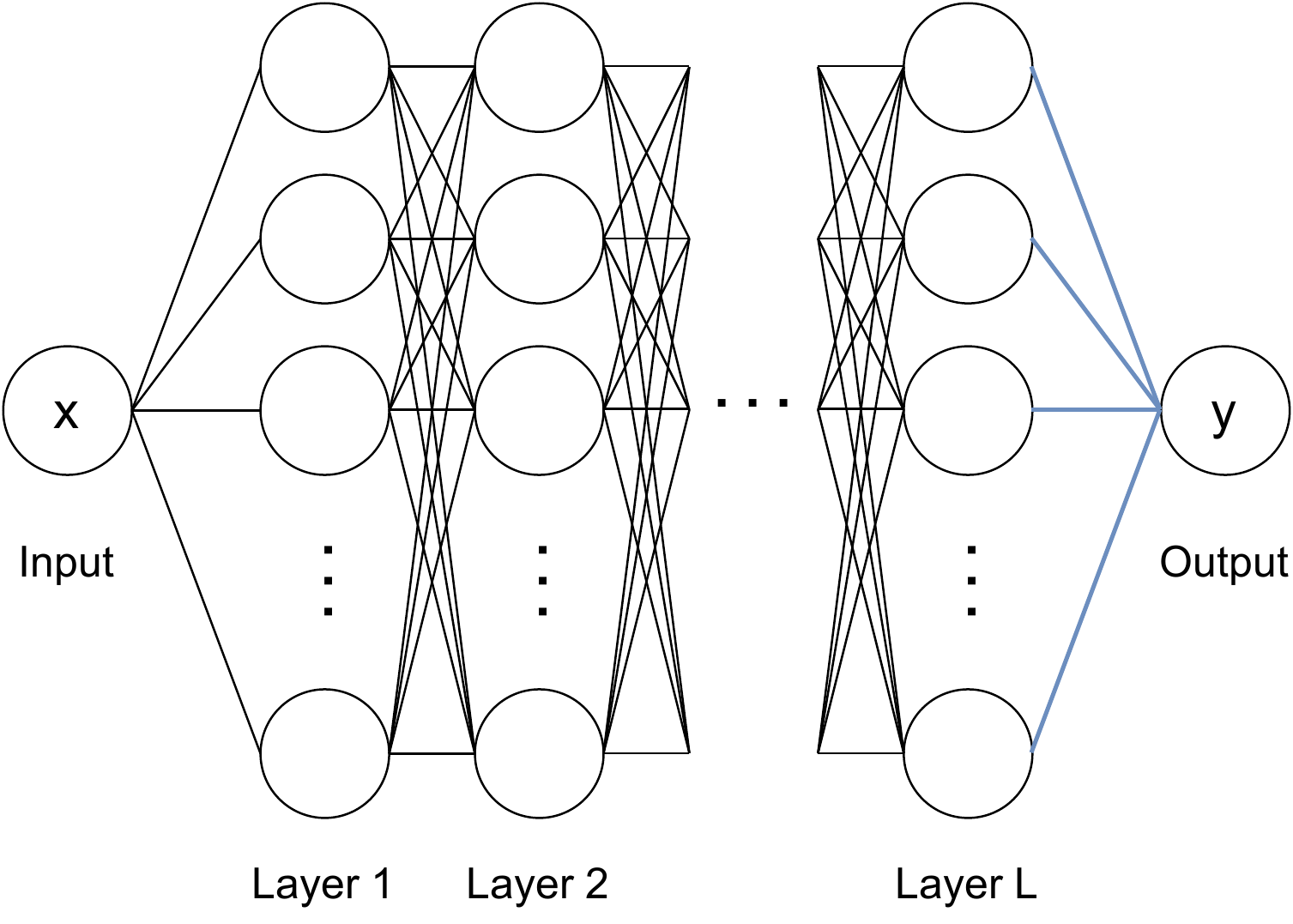}
    \caption{Neural Linear Model. Blue lines indicate a bayesian prior is placed over the last layer, allowing for Bayesian regression at prediction time.}
    \label{fig:my_label}
\end{figure}

Let the input space be $D$-dimensional, and suppose we have a dataset $\mathcal{D} = \{(\mathbf{x}_1, y_1), \ldots,  (\mathbf{x}_N, y_N)\}$, where $\mathbf{x}_n \in \mathbb{R}^D$ and $\mathbf{y}_n \in \mathbb{R}$. A Neural Linear Model (NLM) consists of: (1) a feature map $\phi_\theta:\mathbb{R}^D \to \mathbb{R}^L$, parameterized by a neural network with weights $\theta$, and (2) a Bayesian linear regression model fitted on the the data embedded in the feature space:
\begin{equation*}
\mathbf{y} \sim \mathcal{N}(\bm{\Phi}_\theta \mathbf{w}, \sigma^2\mathbf{I}), \quad 
\mathbf{w} \sim \mathcal{N}(\mathbf{0}, \alpha\mathbf{I}),
\end{equation*}
where the design matrix 
$
\bm{\Phi}_\theta = [\widetilde{\phi_\theta(\mathbf{x}_1)}, \hdots, \widetilde{\phi_\theta(\mathbf{x}_N)]^T}
$ is called the \emph{feature basis} and $\widetilde{\phi_\theta(\mathbf{x}_n)}$
is the feature vector $\phi_\theta(\mathbf{x}_n)$ augmented with a 1 (for a bias term). Thus, for the NLM, given the learned feature map, the posterior, marginal and posterior predictive distributions are all computed analytically. Intuitively, an NLM represents a neural network with a Gaussian prior over the last-layer weights $\mathbf{w}$, and with deterministic weights $\theta$ for the remaining layers. 

Inference for NLMs consists of two steps: 

\textbf{Step 1:} Learn $\theta$.

\textbf{Step 2:} Given $\theta$, infer $p(\mathbf{w} | \mathcal{D}, \theta)$ analytically.

In Step 1, there are three accepted methods of learning $\theta$: maximum likelihood (MLE), maximum a posteriori (MAP) and Marginal-Likelihood, of which MAP is the most common. 

\subsection{MAP Training}
\label{sec:nlm_map}
Here, we largely follow the specification of ~\cite{Rasmussen}. NLM uses a neural network to parameterize basis functions for a Bayesian linear regression model by treating the output weights of the network probabilistically, while treating the rest of the network's parameters $\theta$ as hyperparameters. 

Using notation from Section \ref{sec:bayesian_linear} and following standard Bayesian linear regression analysis, we can derive the posterior predictive as
$$p(y_\star|\mathbf{x}_\star, \mathcal{D}) = \mathcal{N}(y_\star; \mathbf{w}_N^T\phi_\theta(\mathbf{x}_\star), \sigma^2 + \phi_\theta(\mathbf{x}_\star)^T\mathbf{V}_N\phi_\theta(\mathbf{x}_\star))$$
where
\begin{equation}
    \begin{split}
        \mathbf{w}_N &= \frac{1}{\sigma^2}\mathbf{V}_N\bm{\Phi}_\theta^T\mathbf{y}\\
        \mathbf{V}_N^{-1} &= \frac{1}{\alpha}\mathbf{I}_{M\times M} + \frac{1}{\sigma^2}\bm{\Phi}_\theta^T\bm{\Phi}_\theta.
    \end{split}
\label{eq:nlm_posterior}
\end{equation}
For the MAP-trained NLM, we maximize the objective
\begin{equation*}
    \begin{split}
        \mathcal{L}_{\mathrm{MAP}}(\theta_\mathrm{Full}) &= \log{\mathcal{N}\left(\mathbf{y}; \bm{\Phi}_\theta\mathbf{w}, \sigma^2\mathbf{I}\right)} - \gamma\norm{\theta_\mathrm{Full}}_2^2\\
&=-\frac{N}{2}\log{2\pi\sigma^2} - \frac{1}{2\sigma^2}\norm{\mathbf{y}-\bm{\Phi}_\theta\mathbf{w}}_2^2\\
&\quad \quad -\gamma\norm{\theta_\mathrm{Full}}_2^2
    \end{split}
\end{equation*}
where $\theta_\mathrm{Full}$ represents the parameters of the full network (including the output weights). We would then extract $\theta$ from $\theta_\mathrm{Full}$ and perform the Bayesian linear regression as above.

In MAP training~\citep{Snoek}, one maximizes the likelihood of the observed data with respect to $\theta$ and a point estimate, $\mathbf{\widetilde{w}}$, for the weights of the last layer (i.e. we train the entire network deterministically), with an $\ell_2$-regularization term on the weights of the entire network:
\begin{equation}
    \mathcal{L}_{\text{MAP}}(\theta_\text{Full}) = \log{ \mathcal{N}\left(\mathbf{y}; \bm{\Phi}_\theta\mathbf{w}, \sigma^2\mathbf{I}\right)} - \gamma\norm{\theta_\text{Full}}_2^2
    \label{eq:nlm_obj}
\end{equation}

where $\theta_\text{Full} = (\theta, \mathbf{\widetilde{w}})$ are weights of the full network. 
In Step 2, $\mathbf{\widetilde{w}}$ is \emph{discarded} and we use the $\theta$ learned in Step 1 to infer $p(\mathbf{w} | \mathcal{D}, \theta)$.
MLE training is the same as MAP training, but with $\gamma = 0$. 

\subsection{Marginal Likelihood Training}
\label{sec:marginal_ll}

The NLM is defined the same as above, but we optimize $\theta$ to maximize the evidence or log marginal likelihood of the data instead (by integrating out $\mathbf{w}$). For training stability and identifiability, we further regularize $\theta$ as done by ~\cite{Rasmussen}. The full objective is hence:
\begin{equation*}
    \begin{split}
        \mathcal{L}_{\mathrm{Marginal}}(\theta_\mathrm{Full}) &= \log{\int p(\mathbf{y}|\mathbf{X}, \mathbf{w})p(\mathbf{w})d\mathbf{w}}\\
&=-\frac{N}{2}\log{2\pi\sigma^2} - \frac{1}{2\sigma^2}\norm{\mathbf{y}-\bm{\Phi}_\theta\mathbf{w}}_2^2\\
&\quad \quad -\frac{M}{2}\log{\alpha}-\frac{1}{2\alpha}\norm{\mathbf{w}_N}_2^2\\
&\quad \quad -\frac{1}{2}\log{|\mathbf{V}_N|}
    \end{split}
\end{equation*}
\cite{Rasmussen} note that the addition of a regularization term $\gamma\norm{\theta}_2^2$ to $\mathcal{L}_{\mathrm{Marginal}}$ is necessary for the estimates of the output noise since this tends to zero when the objective is unregularized.

\section{Monte Carlo Dropout}
\label{sec:mcd}
\begin{figure}
    \centering
    \includegraphics[width=0.5\linewidth]{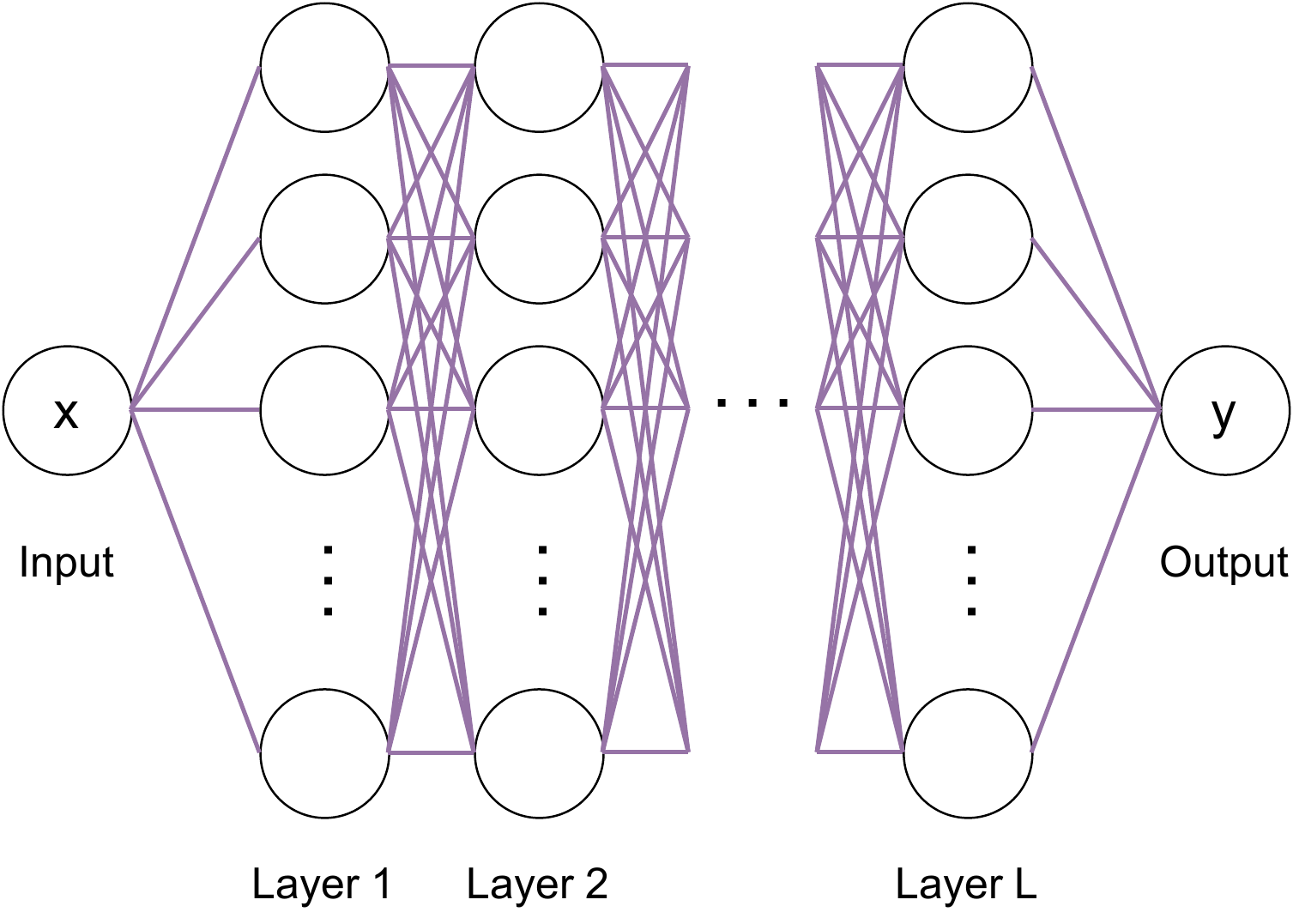}
    \caption{Monte Carlo Dropout is closely related to standard neural networks. The purple weights here signify that dropout is used in both training and prediction time.}
    \label{fig:my_label}
\end{figure}
Monte Carlo Dropout (MCD) casts dropout training in neural networks as approximate Bayesian inference in deep Gaussian processes \citep{mcd}. 
While dropout during training is a common feature of modern neural network architectures, MCD maintains the dropout during testing time too. 
Using multiple stochastic forward passes through the network and averaging the results, MCD is able to obtain a predictive distribution during inference.

\section{Ensemble Methods}
\label{sec:ens}
Ensembling is a common way of constructing multiple Neural Networks in order to have prediction mean and uncertainty.
Many such models exist, and they offer the built-in ability to train in parallel, which can significantly improve training time.

\subsection{Bootstrap Ensembles}
\label{sec:boot_ens}
Perhaps one of simplest and most intuitive type of ensemble, Bootstrap Ensembles are an easy to implement method.
Bootstrap Ensembles simply train each neural network on a different bootstrapped sample of the data.
That is, with $M$ networks in the ensemble, we sample the data $X_j, Y_j \sim \left\{ x_i, y_i \right\}_{i=1}^{N}$, for $j \in [1, M]$, with replacement.
Neural Network $j$ is then trained on the data set $\{X_j, Y_j\}$.
Prediction is then done by predicting with each neural network to get $y_i^*$, then taking the mean and standard deviation of all predictions.
That is, prediction is done by:
\begin{equation}
    y^* = \frac{1}{M}\sum_{j=i}^{M}y^*_j, \quad \sigma_{y^*} = \sqrt{\frac{1}{M}\sum_{j=1}^{M}(y^*_j - y^*)^2}
\label{eq:ens_predict}
\end{equation}
Bootstrapping is an intuitive way of incorporating data noise into the model, however, when regularization is added to the neural networks, the predictions tend to collapse where there is no data.

\subsection{Anchored Ensembles}
\label{sec:anc_ens}
Anchored Ensembles is a method of estimating uncertainty in a Bayesian framework using an ensemble of neural networks.
The key idea of the method is to regularize each neural net's parameters against an anchoring distribution:
\begin{equation*}
    Loss_j = \frac{1}{N}\left\lVert\mathbf{y} - \hat{\mathbf{y}}_j\right\rVert_2^2 + \frac{1}{N} \left\lVert\bm{\Gamma}^{1/2}(\bm{\theta}_j - \bm{\theta}_{anc, j})\right\rVert_2^2
\end{equation*}
The regularization matrix is defined as $\text{diag}(\bm{\Gamma}) = \sigma_{\epsilon}^2/\sigma_{prior_i}^2$, where $\sigma_{\epsilon}$ is the noise variance of the data, $\sigma_{prior_i}$ is the anchor variance.
That is, the anchoring parameters are sampled according to $\bm{\sigma} \sim \mathcal{N}\left( \bm{\mu}_{prior}, \bm{\Sigma}_{prior} \right)$.
In our case, we used one value of $\sigma_{prior}$ and always set $\bm{\mu}_{anchor} = \mathbf{0}$.
Additionally, we follow the original work and decouple the initial parameters from the anchoring distribution, where initial parameters are sampled according to $\theta \sim \mathcal{N}(0, \sigma_{init}^2)$.

These anchoring points ensure that the ensemble fits to the data but also maintains a the variety in initializations.
This variety in initializations is what allows Anchored Ensembles to capture uncertainty in data-scarce regions.
Prediction is done in the same way as Bootstrap Ensembles, seen in equation \ref{eq:ens_predict}

\section{Spectral-normalized Neural Gaussian Process}
\label{sec:sngp}
\begin{figure}
    \centering
    \includegraphics[width=0.5\linewidth]{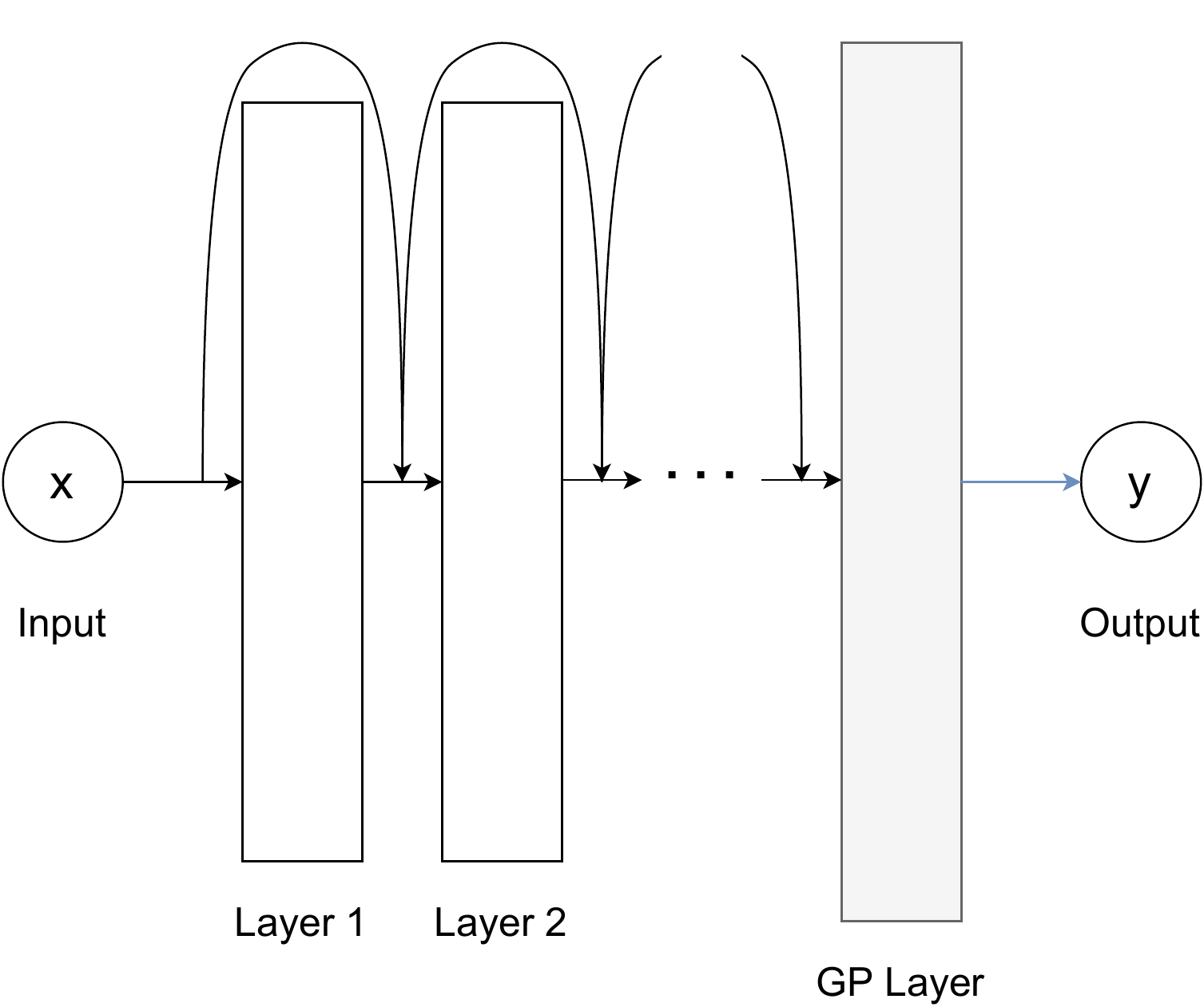}
    \caption{The grey layer indicates that the GP layer is the fixed RFF approximation. The blue connecting line indicates a bayesian prior on the last layer weights, allowing for Bayesian regression at prediction time.}
    \label{fig:my_label}
\end{figure}
The Spectral-normalized Neural Gaussian Process is a model was originally proposed for classification.
In this work we have adapted this model for regression by replacing the laplace approximation of the posterior with the analytical posterior.
The SNGP uses a distance-aware Recurrent Neural Network (RNN) to learn a feature map for the data. This feature map is then used as input for a GP.
Distance awareness is used for uncertainty estimation.
The original work \cite{liu2020simple} used a Laplace approximation to the random feature expansion of a GP, in this case Random Fourier Features.

We have used both the Laplace approximation with RFF last layer (RFF SNGP) and a GP with RBF kernel last layer (SNGP).
SNGPs enforce distance preservation in the RNN through spectral normalization.
That is, a normalization factor $c$ is chosen, and after each gradient update, the weights of each RNN layer are normalized according to 
\begin{equation*}
    \textbf{W} = \begin{cases} c\textbf{W}_l/\hat{\lambda} \\ \textbf{W}_l \end{cases}
\end{equation*}
where $\hat{\lambda}$ is the approximate largest eigenvalue of the weight matrix $W$, obtained with power iteration.

RFF SNGPs also estimate the posterior covariance through updating the precision matrix cheaply as $\hat{\Sigma}_{t}^{-1} = (1-m)\hat{\Sigma}_{t-1}^{-1} + m\sum_{i=1}^M\hat{p}_{i}(1-\hat{p}_{i})\bm{\Phi}_i\bm{\Phi}_i^T$.
Here $M$ is batch size $m$ is a scaling coefficient, $\Phi$ is the data feature, and $\hat{p}_i$ is the model prediction under MAP estimates.
Precision updating is done for each batch in the training cycle.

In the case of using a GP, the precision update Step 1s excluded as posterior covariance is learned in the GP training procedure.

At prediction time, the feature for a data point is calculated by:
\begin{equation*}
    \bm{\Phi}_{D_L \times 1} = \sqrt{2/D_L}\cos\left(\mathbf{W}_Lh(\mathbf{x})+\mathbf{b}_L\right)
\end{equation*}
This feature is then used to calculate the posterior mean and covariance as $\mu(\mathbf{x}) = \bm{\Phi}^T\beta$, $var(\bm{x}) = \bm{\Phi}^T\hat{\Sigma}\bm{\Phi}$.

For posterior re-learning, we simply fix the learned RNN and perform the precision update step for new data. This makes SNGP suitable for transfer learning and generalization experiments.

For the real data and BayesOpt benchmarks, SNGP was updated to compute the posterior analytically using the bayesian linear regression process as LUNA and NLM.

\chapter{Related Works}\label{ch:3}

\section{Gaussian Process Advances}
Due to the great predictive performance of GPs, many methods have been proposed to improve upon the cubic scaling of exact inference.
Primarily, these improvements come in the form of approximate methods.
However, more recent advances have shown exact inference to be tractable for large data sets.

\subsection{Approximate Methods}
An approximate method such as inducing points \cite{snelson2006sparse} work by introducing a pseudo data set $\Bar{\mathcal{D}} = \left\{\Bar{\mathbf{X}_i}, \Bar{\mathbf{y}_i} \right\}_{i=1}^M$ of size $M$, such that $M < N$ and $N$ is the number of training points.
A Gaussian prior is placed over the pseudo data outputs $p(\Bar{\mathbf{y}} | \Bar{\mathbf{X}}) = \mathcal{N}(\mathbf{0}, \mathbf{K}_M)$, where $\mathbf{K}_M$ is the covariance matrix for all points $K(\Bar{\mathbf{x}}_m, \Bar{\mathbf{x}}_{m'})$.
Now, for inference, the covariance is computed using $\boldsymbol\Lambda = \text{diag}(\boldsymbol\lambda)$, where $\lambda_n = K_{nn} - \mathbf{k}_n^T\mathbf{K}_m^{-1}\mathbf{k}_n$ and $\mathbf{k}_{m}$ is the covariance between a point $\mathbf{x}$ and the points $\Bar{\mathbf{x}}$.
This now replaces the standard covariance inversion we see in GP inference.
Because $\boldsymbol\Lambda$ is diagonal, inversion is $O(n)$, and the method reduces inference time to $O(M^2N)$ due to the other matrix multiplications.
This can offere a significant speedup when $M \lll N$

Additionally, using neural networks to learn the kernel function have been proposed \cite{wilson2016deep}.
This model uses a nonlinear transformation with a neural network $g$ parametrized by weights and biases $\mathbf{w}$
The data is transformed and the kernel function with parameters $\boldsymbol{\theta}$ is then applied to this latent representation as:
\begin{equation}
    k(\mathbf{x}_i, \mathbf{x}_j) \to k(g(\mathbf{x}_i, \mathbf{w}), g(\mathbf{x}_j, \mathbf{w}) | \boldsymbol\theta, \mathbf{w})
\end{equation}
The GP then performs inference as seen in section \ref{sec:gp_inference}.
However, before inference can be done, the transformation and kernel parameters, $\boldsymbol\gamma = \left\{\mathbf{w}, \boldsymbol\theta \right\}$ must be learned.
In practical use, the covariance matrix $\mathbf{K}_{\boldsymbol\gamma}$ is approximated using the KISS-GP covariance matrix (\cite{wilson2015thoughts}, \cite{wilsonKISSGP}).
This is chosen because it allows for $O(n)$ scaling with training data.

Stochastic variational optimization (\cite{cheng2017variational}) is a works by first decoupling the bases for the mean and covariance of a GP.
Once this is done, the GP is known as a Decoupled Gaussian Process (DGP).
Variational inference is then done with the decoupled subspace parametrization.
One of the downsides mentioned is that variance tends to be overestimated.

\subsection{Exact Inference}
A recent breakthrough in exact inference has shown it to be tractable for large data sets \cite{wang2019exact}.
The method involves reducing both the memory overhead as well as the computational overhead.
Traditionally, GP inference relies on the Cholesky decomposition of the covariance for inversion.
Here, the kernel matrix is never explicitly computed.
In its place, the kernel is partitioned to reduce memory requirements of matrix-vector multiplications (MVM).
Inference is then done in constant-sized pieces, which reduces the memory overhead to $O(n)$.
These partitioned calculations can then be done in parallel, and due to the new constant-size MVMs, the communication overhead is also reduced to $O(n^2/p)$, where $p$ is the number of partitions.

These approximate techniques improve scaling to large data sets, but it is currently unknown how well the exact GP posterior is approximated.
Parallel exact inference with mulitple GPUs is promising, but may be inappropriate when computational resources are limited.
Furthermore, is is unclear if GPs can exploit complex structure in data, such as features in images, or local/global dependencies in natural language, as well as NN based models.

\section{Bayesian Neural Network Advances}
Early work on Bayesian Neural Networks (BNN) inference focuses on Hamiltonian Monte Carlo (HMC) \citep{neal2012bayesian}, as seen in section \ref{sec:bnn_hmc}, and Laplace approximations of the posterior \citep{mackay1992practical, buntine1991bayesian}.

While HMC remains the ``gold standard'' for BNN inference, it does not scale well to large architectures or datasets; classical Laplace approximation, like Linearised Laplace (LL) \citep{mackay1992practical, ritter2018scalable}, has similar difficulties scaling to modern architectures with large parameter sets.
The Laplace approximation uses the second-order Taylor expansion of the posterior around a MAP estimate of the NN parameters $\boldsymbol\theta^*$.
\begin{equation}
    \log p(\boldsymbol\theta | \mathcal{D}) \approx \log p(\boldsymbol\theta^*|\mathcal{D}) - \frac{1}{2}(\boldsymbol\theta - \boldsymbol\theta^*)^T\Bar{\mathbf{H}}(\boldsymbol\theta - \boldsymbol\theta^*)
\end{equation}
Where $\Bar{\mathbf{H}}$ is the Hessian of the posterior averaged over the $N$ data points.
The posterior predictive can then be approximated for a given data point by sampling the approximate posterior.
For $T$ samples, the predictive mean is calculated according to:
\begin{equation}
    p(\mathbf{y}^* | \mathcal{D}) \approx \frac{1}{T}\sum_{t=1}^T p(\mathbf{y} | \boldsymbol\theta_t)
\end{equation}

Although variational inference methods can be easily applied to BNN models with larger architectures, a number of these methods like mean-field variational inference (VI), seen in Section \ref{sec:bnn_vi} \citep{anderson1987mean, hinton1993keeping, blundell2015weight} and Monte Carlo Dropout (MCD) \citep{mcd}, seen in Section \ref{sec:mcd} (which can be recast as a form of approximate variational inference) have recently been shown to underestimate predictive uncertainty in data-scarce regions \citep{uci_gap, uncertainty_quality, mcd_pathologies}.

Another drawback of BNNs is the difficulty of incorporating task-specific functional knowledge into the prior over weights.
There are works which specifies priors for BNNs directly over functions \citep{shi2019scalable, sun2018functional}.
That is, VI is done in function space, rather than weight space.
This is analogous to the weight space and function space interpretations of GPs.
However, inference in function space is challenging.

\section{Ensemble Advances}
Many models have been developed in order to quantify uncertainty.
Models such as Bootstrapped Ensembles and Anchored Ensembles discussed previously in sections \ref{sec:boot_ens} and \ref{sec:anc_ens}, respectively, work by training multiple neural networks on the same data and obtaining uncertainty from the difference in predictions across models.
Other works exist based on this idea.

While some of these models have quantified uncertainty, and empirically have been shown to perform well across benchmarks, the drawback is always that multiple models need to be trained.
This downside is mitigated with parallelized training, but can still be prohibitive to users that lack access to such computational resources.

Non-traditional ensembles have also been explored.
One such example treats an single NN as multiple subnetworks (\cite{havasi2020training}).
Training is done by calculating loss for multiple inputs and multiple corresponding outputs simultaneously.
This approach is based on the 'lottery ticket' hypothesis that states that neural netowrks (dense, fully connected ones) can be pruned down to a subnetwork that has comparable performance (\cite{frankle2019lottery}).
The authors demonstrate that multiple subnetworks are indeed learned, however peak performance is achieved at very few subnetworks, usually $3$.
This improves upon baseline implementations, but such few networks limits the expressivity.

\section{Bayesian Models with Deterministic Neural Network Features}

Models combining the deterministic training and simple Bayesian models have been proposed to bypass difficulties in BNN inference.
The simple Bayesian model allows for exact and/or scalable inference.
The Marginalized Neural Network for large scale regression tasks was introduced by \cite{lazaro2010marginalized}.
This model uses a neural network to parametrize a deterministic transformation of the data into features.
The features are then fed into Bayesian linear regression.
Later, this model was re-introduced as the Neural Linear Model (NLM) by \cite{Snoek}.
The NLM was used for Bayesian Optimization (BayesOpt) on large datasets that would normally be difficult for GP inference.
While NLMs have been successfully applied in a number of applications requiring predictive uncertainty (like BayesOpt), \cite{Rasmussen} were the first to formally evaluate the uncertainty of NLMs.
This work shows that traditional inference for NLMs will, in most cases, understimate uncertainty in data-scarce regions.
Manifold Gaussian Process (MGP) is a similar model that jointly trains a neural network feature map of the data and a GP model on top of these features \citep{calandra2016manifold}.
This idea is made scalable by \cite{liu2020simple}, who use a random feature expansion GP approximation and isometry-enforcing regularization on the neural network feature map (SNGP).
While MGP and SNGP are similar to NLM in form, inference for these models can be much slower when using GPs on the features, and when approximating GPs with a sufficiently large number of random features~\citep{random_bases}. 

\section{Uncertainty Benchmarks}
While benchmarks certainly exist to evaluate and compare models, such benchmarks are often real data such as the UCI data sets \cite{uci_data}.
These data sets are useful in determining how well models perform on high dimensional, noisy data.
While this is helpful in determining performance, the standard metrics of root mean squared error (RMSE) and log-likehliood (LL) lack the additional information to understand performance.
In the context of uncertainty, these data sets lack a known gap in the data.
When one is introduced, we can visualize it easily as in figure \ref{fig:gap_high_d}, but when we do not know where the gap is a priori, we cannot, as in figure \ref{fig:not_gap_high_d}.
Outside of these data sets, toy data sets are often used, but rarely consistently across papers.
With this in mind, a standard benchmark is introduced in section \ref{sec:uncertainty_benchmark} that can be both visualized easily, as well as scaled to higher dimensions, and makes comparing different aspects of uncertainty across models easy.

Task dependent benchmarks are also given, and these benchmarks are used in section \ref{sec:bayesopt_experiments} for Bayesian Optimization.
Benchmark functions used are functions that are difficult or impossible to optimize with traditional methods.
Hyperparameter searches for support vector machines and logistic regression classification are common benchmarks (\cite{Snoek}).
Function with many local optima, such as the Branin (\cite{}) and Hartmann (\cite{}) functions are also used.
There is little work on understanding what aspect of a model's uncertainty would lead to good or bad performance.




\chapter{Uncertainty-Aware (UNA) Bases for Bayesian Regression}\label{ch:4}
\begin{figure}
    \centering
    \includegraphics[width=0.5\linewidth]{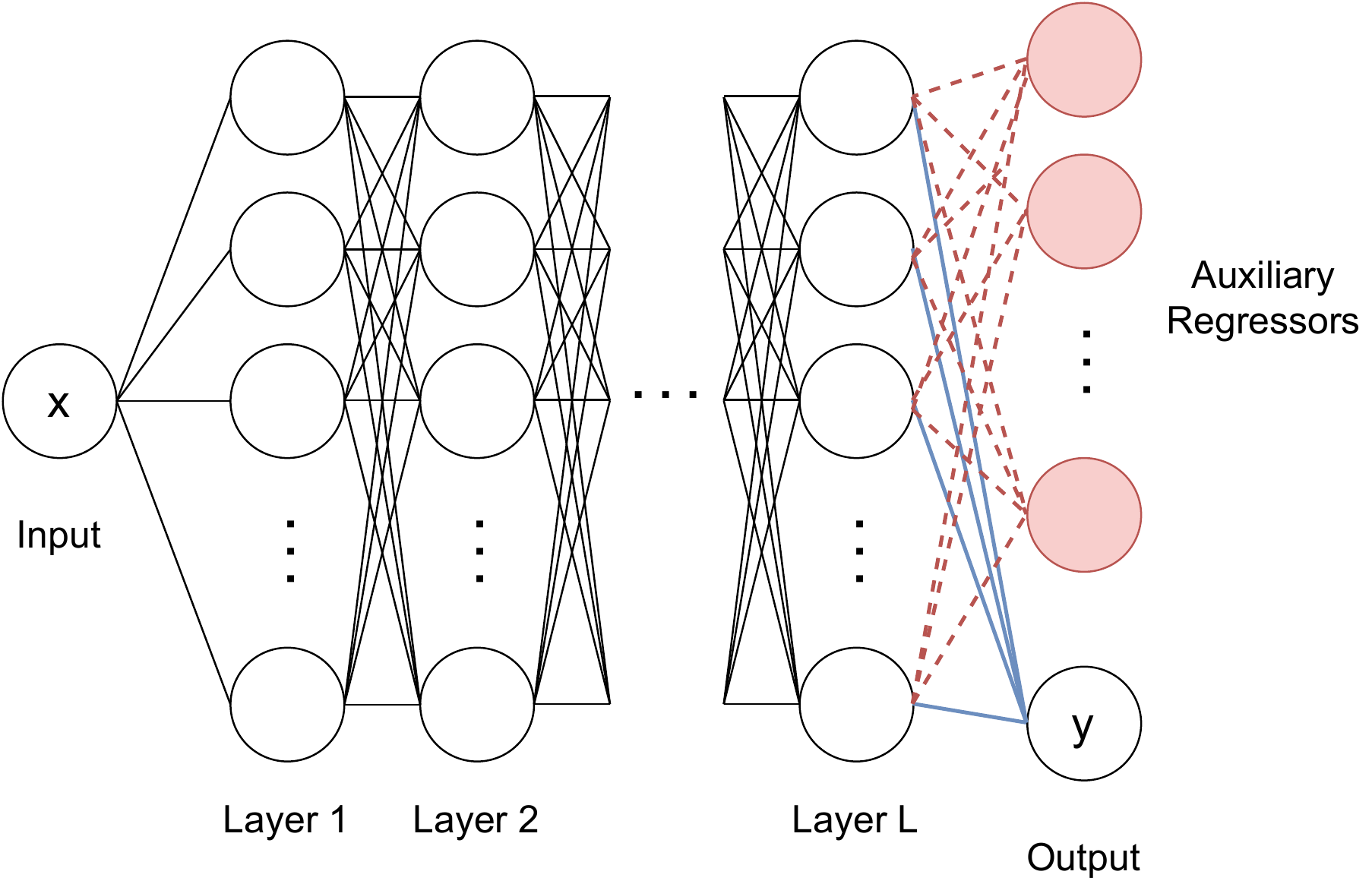}
    \caption{The UNA framework encompasses both LUNA and TUNA. Here we have the red auxiliary regressors, indicating they are discarded at prediction time. Just as in the NLM, the blue lines indicate a bayesian prior placed on the last layer for bayesian regression.}
    \label{fig:lb_nlm}
\end{figure}
First, the expressivity of the Neural Linear Model is explored.
Theoretical reasons for a lack of expressivity are given.
Next, we introduce a novel training framework that avoids the  failure  modes  of  the  three  traditional  NLM  training methods, allowing us to learn models with expressive posterior predictive distributions that can distinguish between data-poor and data-rich regions.
Furthermore, our frame-work allows domain experts to easily and intuitively encode task-specific knowledge into the feature basis.
To learn NLM feature bases that span a diverse set of functions in the prior predictive, we propose a novel training framework, UNcertainty Aware Training (UNA), in which we directly train the feature map $\phi_\theta$ to encode for functional diversity. UNA consists of two steps:

\textbf{Step 1: Feature Training with Diverse and Task-Appropriate Auxiliary Regressors}. We train $M > 1$ auxiliary linear regressors, $f_m(\mathbf{x}) = \bm{\Phi}_\theta\mathbf{\widetilde{w}}_m$, on a shared feature basis $\bm{\Phi}_\theta$ (see illustration in Figure \ref{fig:lb_nlm}).
If the auxiliary regressors $f_m(x)$ are trained to be diverse, then the shared feature basis will support a diverse prior predictive distributions over functions. Furthermore, we can impose constraints on the auxiliary regressors expressing domain knowledge. 

\textbf{Step 2: Bayesian Linear Regression on Features.} After optimizing the feature map, \emph{we discard the $M$ auxiliary regressors} $\mathbf{\widetilde{w}}_m$ and perform Bayesian linear regression on the expressive feature basis $\bm{\Phi}_\theta$ learned in Step 1. That is, we infer the posterior $p(\mathbf{w} | \mathcal{D}, \theta)$.

In the following, we describe a concrete instantiation of our framework: LUNA, with a training objective for Step 1 of the framework that is suited for capturing in-between uncertainties.
In Section \ref{sec:tuna}, we describe a second instantiation: TUNA, with a Step 1 training objective that can easily and intuitively encode functional or domain knowledge. 

\section{Neural Linear Model Uncertainty}
\label{sec:nlm_pathologies}
In Proposition \ref{thm:marg_blowup}, we also show that without the regularization term $\gamma\norm{\theta}_2^2$, the features $\bm{\Phi}_\theta$ experience pathological blow-up for ReLU networks, since large $\bm{\Phi}_\theta$ reduce the magnitude of the posterior mean $\mathbf{w}_N$, and hence increase $\mathcal{L}_{\mathrm{Marginal}}$.

Like MAP training, Marginal Likelihood training (Eqn. \ref{eq:marginal_ll}) discourages learning models with expressive prior predictive when $\gamma > 0$.
In Appendix \ref{sec:appendix_nlm} Figure \ref{fig:marg_highgamma}, we show that with $\gamma > 0$, the feature bases learned by optimizing $\mathcal{L}_{\mathrm{Marginal}}$ do not span diverse functions across random restarts; hence the corresponding posterior predictive distributions are inexpressive and are unable to capture in-between uncertainty. In Appendix \ref{sec:appendix_nlm} Figure \ref{fig:marg_lowgamma} we show that even with $\gamma$ set close to zero, the learned feature bases are rarely expressive across random restarts.

Without regularization, we show that Marginal Likelihood training suffers from a new failure: the feature map blows up for ReLU networks. 
Intuitively, increasing the magnitude of $\bm{\Phi}_\theta$ by a scalar multiple allows us to decrease the weights of the last layer by the same multiple with no loss to the likelihood, and thus we can trivially increase $\mathcal{L}_{\mathrm{Marginal}}$ by scaling the feature basis $\bm{\Phi}_\theta$. We formalize this intuition in the proposition below.

We choose ReLU activations functions. For $\theta$, $\mathbf{w}$ and any $c >0$, we define $\theta^c$ as equal to $\theta$ up to the last layer of weights, and at the last layer equal to the last layer of $\theta$ scaled by $c$. We also define $\mathbf{w}^c$ as $\frac{1}{c} \mathbf{w}$.
\begin{proposition}
\label{thm:marg_blowup}
Fix $\theta$, $\mathbf{w}$. For sufficiently large $C>0$ and any $c> C$, we have that $$\mathcal{L}_{\mathrm{Marginal}}(\theta_\mathrm{Full}) < \mathcal{L}_{\mathrm{Marginal}}(\theta^c_\mathrm{Full}),$$ 
where $\theta_\mathrm{Full} = (\theta, \mathbf{w})$ and $\theta^c_\mathrm{Full} = \left(\theta^c, \mathbf{w}^c\right)$.
\end{proposition}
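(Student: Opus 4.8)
The plan is to reduce the statement to the positive homogeneity of the ReLU activation together with a term-by-term inspection of $\mathcal{L}_{\mathrm{Marginal}}$. Since $\mathrm{ReLU}(cz)=c\,\mathrm{ReLU}(z)$ for every $c>0$, multiplying the weights and bias of the final feature layer by $c$ multiplies each learned feature vector by $c$, so that the feature basis obeys $\bm{\Phi}_{\theta^c}=c\,\bm{\Phi}_\theta$ (up to the appended bias column, discussed below). Granting this identity, I would substitute $\bm{\Phi}_\theta\mapsto c\bm{\Phi}_\theta$ and $\mathbf{w}\mapsto\mathbf{w}/c$ into the closed-form posterior quantities of Eqn.~(\ref{eq:nlm_posterior}) and into each additive term of $\mathcal{L}_{\mathrm{Marginal}}$, and then let $c\to\infty$.

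First I would dispose of the inert terms: $-\tfrac{N}{2}\log 2\pi\sigma^2$ and $-\tfrac{M}{2}\log\alpha$ are independent of $(\theta,\mathbf{w})$, and the data-fit term is invariant under the idealized identity $\bm{\Phi}_{\theta^c}\mathbf{w}^c=(c\bm{\Phi}_\theta)(\mathbf{w}/c)=\bm{\Phi}_\theta\mathbf{w}$ (and bounded once the bias column is included), which is the ``no loss to likelihood'' phenomenon. The two remaining terms carry the content. After the substitution, $\mathbf{V}_N^{-1}$ becomes $\tfrac1\alpha\mathbf{I}+\tfrac{c^2}{\sigma^2}\bm{\Phi}_\theta^T\bm{\Phi}_\theta$; diagonalizing $\bm{\Phi}_\theta^T\bm{\Phi}_\theta$ with eigenvalues $\mu_j\ge 0$, the determinant term becomes $-\tfrac12\log|\mathbf{V}_N|=\tfrac12\sum_j\log(\tfrac1\alpha+\tfrac{c^2}{\sigma^2}\mu_j)$, which is strictly increasing in $c$ and diverges to $+\infty$ as long as some $\mu_j>0$, i.e. as long as the learned features are not identically zero. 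For the weight-norm term, the same diagonalization gives a posterior mean whose coefficient in direction $\mathbf{v}_j$ with $\mu_j>0$ is of order $c/(\sigma^2/\alpha+c^2\mu_j)=O(1/c)$ (null-space directions drop out, since $\bm{\Phi}_\theta\mathbf{v}_j=0$ kills the matching coefficient of $\bm{\Phi}_\theta^T\mathbf{y}$), so $\norm{\mathbf{w}_N}_2$ stays bounded and $-\tfrac{1}{2\alpha}\norm{\mathbf{w}_N}_2^2$ relaxes toward $0$. Thus the weight-norm penalty moves favorably, exactly as the stated intuition suggests, but the decisive, unbounded contribution is the log-determinant term; since every other term is bounded, $\mathcal{L}_{\mathrm{Marginal}}(\theta^c_\mathrm{Full})\to+\infty$, and as $\mathcal{L}_{\mathrm{Marginal}}(\theta_\mathrm{Full})$ is a fixed finite number there is a threshold $C$ with $\mathcal{L}_{\mathrm{Marginal}}(\theta_\mathrm{Full})<\mathcal{L}_{\mathrm{Marginal}}(\theta^c_\mathrm{Full})$ for all $c>C$.

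I expect the main obstacle to be making the feature-scaling identity exact in the presence of the bias-augmented design matrix, because homogeneity rescales only the learned columns while the appended constant column $\mathbf{1}$ in $\widetilde{\phi_\theta(\mathbf{x}_n)}$ is left unchanged. I would resolve this by working with the block form $\bm{\Phi}_{\theta^c}=[\,c\,\widehat{\bm{\Phi}}_\theta\mid\mathbf{1}\,]$ and checking that (i) the data-fit and weight-norm terms remain bounded, the bias column contributing only $O(1)$ corrections, and (ii) the log-determinant of $\tfrac1\alpha\mathbf{I}+\tfrac1{\sigma^2}\bm{\Phi}_{\theta^c}^T\bm{\Phi}_{\theta^c}$ still grows like $2\,\mathrm{rank}(\widehat{\bm{\Phi}}_\theta)\log c$, so that an $O(1)$ perturbation cannot offset a term diverging like $\log c$. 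The only genuine hypothesis needed is that the learned features do not vanish (otherwise the determinant stays constant and the claim is vacuous), which I would record as the mild nondegeneracy condition $\widehat{\bm{\Phi}}_\theta\neq 0$. With the block computation and this nondegeneracy in hand, the term-by-term accounting closes the argument.
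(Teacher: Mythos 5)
Your proof is correct relative to the paper's stated form of $\mathcal{L}_{\mathrm{Marginal}}$, but it establishes the proposition by a genuinely different mechanism than the paper's own proof. The paper never touches the log-determinant term: it assumes $\bm{\Phi}_\theta^T\bm{\Phi}_\theta$ is invertible, argues asymptotically that $\mathbf{V}_N\to\sigma^2\left(\bm{\Phi}_\theta^T\bm{\Phi}_\theta\right)^{-1}$ so that $\norm{\mathbf{w}_N}\thicksim 1/\norm{\bm{\Phi}_\theta}$, notes the data-fit invariance $\bm{\Phi}_{\theta^c}\mathbf{w}^c=\bm{\Phi}_\theta\mathbf{w}$, and then concludes the inequality directly from $\norm{\mathbf{w}^c_N}<\norm{\mathbf{w}_N}$ --- that is, it attributes the entire increase to the bounded term $-\frac{1}{2\alpha}\norm{\mathbf{w}_N}_2^2$. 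You instead make the log-determinant $-\frac{1}{2}\log|\mathbf{V}_N|$ (with the sign exactly as the paper writes it) the decisive term, show that it diverges like $2\,\mathrm{rank}(\widehat{\bm{\Phi}}_\theta)\log c$ while every other term stays uniformly bounded in $c$, and finish with a threshold argument. Your route buys three things the paper's does not: completeness, since the paper's accounting simply omits one term of its own objective, and a bounded gain in the $\mathbf{w}_N$ term could never by itself overcome that omitted term if it moved unfavorably --- indeed, under the textbook evidence decomposition, where this term appears as $+\frac{1}{2}\log|\mathbf{V}_N|$, it tends to $-\infty$ and the inequality would reverse for large $c$, so the truth of the proposition hinges precisely on the term you track and the paper ignores; generality, since you never need invertibility of $\bm{\Phi}_\theta^T\bm{\Phi}_\theta$ (null-space directions drop out of $\mathbf{w}_N$, as you note) and you isolate the minimal nondegeneracy hypothesis $\widehat{\bm{\Phi}}_\theta\neq 0$, without which the claim is vacuous; and care with the bias-augmented column, which does not scale under ReLU homogeneity, a point the paper elides when it asserts $\bm{\Phi}_{\theta^c}=c\,\bm{\Phi}_\theta$ and exact data-fit invariance, and which you repair with the block computation. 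What the paper's route buys is alignment with its narrative --- the surrounding text explains the feature blow-up as training ``reducing $\norm{\mathbf{w}_N}$'' --- but as a standalone argument it is strictly weaker than yours.
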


\begin{proof}
To demonstrate this, we first assume that $\bm{\Phi}_\theta^T\bm{\Phi}_\theta$ is invertible. Let us establish the relationship between $\mathbf{w}_N$ and $\bm{\Phi}_\theta$ in the asymptotic case $\norm{\bm{\Phi}_\theta}\to\infty$.
From Eq \ref{eq:nlm_posterior}, 
\begin{align*}
    \frac{1}{\sigma^2}\bm{\Phi}_\theta^T\bm{\Phi}_\theta \gg \frac{1}{\alpha}\mathbf{I}_{M\times M} 
    &\implies \mathbf{V}_N^{-1}\to\frac{1}{\sigma^2}\bm{\Phi}_\theta^T\bm{\Phi}_\theta\\
    &\implies \mathbf{V}_N\to\sigma^2\left(\bm{\Phi}_\theta^T\bm{\Phi}_\theta\right)^{-1}
\end{align*}
Hence,
$$\mathbf{w}_N\to\left(\bm{\Phi}_\theta^T\bm{\Phi}_\theta\right)^{-1}\bm{\Phi}_\theta^T\mathbf{y}\implies\norm{\mathbf{w}_N}\thicksim\frac{1}{\norm{\bm{\Phi}_\theta}}.$$
Note that the loss $\| \mathbf{y} -\bm{\Phi}_\theta\mathbf{w}\|$ is equal to $\| \mathbf{y} -\bm{\Phi_{\theta^c}}\mathbf{w}^c\|$, since $\bm{\Phi_{\theta^c}}$ is $\bm{\Phi_{\theta}}$ scaled by $c$ and this scaling is canceled by $\mathbf{w}^c = \frac{1}{c} \mathbf{w}$. 
Thus, since $\|\mathbf{w}^c_N\| < \|\mathbf{w}_N\|$, we have that $\mathcal{L}_{\mathrm{Marginal}}(\theta_\mathrm{Full}) < \mathcal{L}_{\mathrm{Marginal}}(\theta^c_\mathrm{Full})$.
\end{proof}
The above proposition tells us that that we can continue to increase $\mathcal{L}_{\mathrm{Marginal}}$ by reducing $\norm{\mathbf{w}_N}$. Hence, if we do not regularize $\theta$, the training will continually increase $\norm{\bm{\Phi}_\theta}$ to affect a decrease in $\norm{\mathbf{w}_N}$.

However, the addition of the regularization term $\gamma\norm{\theta}_2^2$ to $\mathcal{L}_{\mathrm{Marginal}}$ biases training towards inexpressive feature bases for the same reason we identified previously. In Figure \ref{fig:marg_highgamma}, we show that with regularization, the feature bases learned by optimizing $\mathcal{L}_{\mathrm{Marginal}}$ are inexpressive. In Figure \ref{fig:marg_lowgamma} we see that even with $\gamma$ set close to zero, the learned feature bases are not consistently expressive across random restarts.

Lastly, Marginal-Likelihood training consists of maximizing the likelihood of the observed data, after having marginalized out the weights $\mathbf{w}$:
\small
\begin{equation}\label{eq:marginal_ll}
\mathcal{L}_{\text{Marginal}}(\theta) = \log{\mathbb{E}_{p(\mathbf{w})} \left\lbrack \mathcal{N}\left(\mathbf{y}; \bm{\Phi}_\theta\mathbf{w}, \sigma^2\mathbf{I}\right) \right\rbrack} - \gamma\norm{\theta}_2^2
\end{equation}
\normalsize

In this paper, we show that \emph{all three} inference methods for learning the feature basis (determined by $\theta$) in Step 1 produce models that are unable to distinguish between data-poor and data-rich regions (i.e. these models fail to capture in-between uncertainty).
In Chapter \ref{ch:4} we then propose a novel framework for training NLMs that learns models capable of expressing in-between uncertainty, and that allows domain experts to tailor posterior predictive uncertainties for specific tasks.

\section{Learned Uncertainty-Aware (LUNA) Bases}
\label{sec:luna_main_body}
In the first instantiation of UNA, we choose a training objective $\mathcal{L}_\text{LUNA}$ for Step 1 that maximizes the mean log-likelihood of the auxiliary regressors on the training data, measured by $\mathcal{L}_\text{FIT}$, while encouraging for functional diversity amongst them, measured by $\mathcal{L}_\text{DIVERSE}$ (defined later):
\begin{equation}
\mathcal{L}_\text{LUNA}(\Psi) = \mathcal{L}_\text{FIT}(\Psi) - \lambda \cdot \mathcal{L}_\text{DIVERSE}(\Psi) 
\end{equation}
where $\Psi = (\theta, \mathbf{\tilde{w}}_1, \hdots \mathbf{\tilde{w}}_M)$, $\theta$ parameterizes the shared designed matrix and $\mathbf{\tilde{w}}_m$ are the weights of the auxiliary regressor $f_m =  \bm{\Phi}_\theta\mathbf{\tilde{w}}_m$. The constant $\lambda$ controls for the degree to which we prioritize diversity. We encourage for diversity in the regressors trained on our basis, since our analysis in Section \ref{sec:nlm_pathologies} shows that if the feature basis spans diverse functions under the prior $p(\mathbf{w})$, the posterior predictive can capture in-between uncertainty.

After optimizing our feature map via: 
$$
\theta_\text{LUNA}, \{\mathbf{\tilde{w}}_{m}^*\}  = \mathrm{argmax}_\Psi\; \mathcal{L}_\text{LUNA}(\Psi),
$$
we discard the auxiliary regressors $\{\mathbf{\tilde{w}}_{m}^*\}$ and perform Bayesian linear regression on the diversified feature basis, the LUNA basis. That is, we analytically infer the posterior $p(\mathbf{w} | \mathcal{D}, \theta_{\text{LUNA}})$ over the last Bayesian layer of weights $\mathbf{w}$ in the NLM.  
\emph{\textbf{In summary, LUNA training results in a basis that supports a diverse set of predictions by varying $\mathbf{w}$.}}

\textbf{$\mathcal{L}_\text{FIT}$: Fitting the Auxiliary Regressors.}
We learn the regressors jointly with $\bm{\Phi}_\theta$, by maximizing the average train log-likelihood of the regressors on the training data, with $\ell_2$ penalty on $\theta$ as well as on the weights of each regressor:
\begin{equation*}
\mathcal{L}_\text{FIT}(\Psi) = \frac{1}{M}\sum_{m=1}^M \log \mathcal{N}(\mathbf{y}; f_m(\mathbf{x}), \sigma^2 \mathbf{I})- \gamma\norm{\Psi}_2^2.
\end{equation*}

\textbf{$\mathcal{L}_\text{DIVERSE}$: Enforcing  diversity.}
We enforce diversity in the auxiliary regressors as a proxy for the diversity of the functions spanned by the feature basis. 
We adapt the Local Independence Training (LIT) objective in \cite{diversity_enforcement} to encourage our regressors to extrapolate differently away from the training data, where $\text{CosSim}$ is cosine similarity:
\begin{equation*}
\mathcal{L}_\text{DIVERSE}(\Psi) = 
\sum_{i=1}^M\sum_{j=i+1}^M
\text{CosSim}^2\left(\nabla_{\mathbf{x}}f_i(\mathbf{x}), \nabla_{\mathbf{x}}f_j(\mathbf{x}) \right).
\label{eq:luna_obj}
\end{equation*}

Here we encourage extrapolation difference in every pair of regressors $f_i$ and $f_j$ by penalizing non-orthogonal gradients. Furthermore, 
we avoid expensive gradient computations using a finite difference approximation.
\subsection{Diversity Penalty}
We adopted the diversity penalty in LUNA's objective from \cite{diversity_enforcement}.
We use the cosine similarity function on the gradients of the auxiliary regressors:
\begin{equation*}
    \begin{split}
        \text{CosSim}^2&\left(\nabla_{\mathbf{x}} f_i(\mathbf{x}), \nabla_{\mathbf{x}} f_j(\mathbf{x}) \right) = \\
        &\frac{\left(\nabla_{\mathbf{x}} f_i(\mathbf{x})^T \nabla_{\mathbf{x}} f_j(\mathbf{x})\right)^2}
        {\left(\nabla_{\mathbf{x}} f_i(\mathbf{x})^T  \nabla_{\mathbf{x}} f_i(\mathbf{x})  \right) \left( \nabla_{\mathbf{x}} f_j(\mathbf{x})^T \nabla_{\mathbf{x}} f_j(\mathbf{x})\right)}
    \end{split}
\end{equation*}
This acts as a measure of orthogonality, equal to one when the two inputs are parallel, and 0 when they are orthogonal.
A higher penalty, $\lambda$, in the training objective penalizes parallel components, hence enforcing diversity.

In practice, these gradients can be computed using a finite differences approximation.
That is, we approximate gradients as:
\begin{equation*}
    \frac{\partial f_i(\mathbf{x})}{\partial x_d} \approx \frac{f_i\left(\mathbf{x} + \delta\mathbf{x}_d\right) - f_i\left(\mathbf{x}\right)}{\delta x_d}
\end{equation*}
where $\delta\mathbf{x}_d = [0,\hdots,0,\delta x_d,0, \hdots,0]^T$ represents a D-dimensional vector of zeros with a small perturbation in the $d^\text{th}$ dimension. 
We sample these perturbations according to $\delta x_d \sim \mathcal{N}(0, \epsilon)$, where $\epsilon$ is often either 0.1 or 0.01.
The perturbation variance

In the experiments, LUNA's diversity penalty was often annealed according to a number of different schedules. The diversity penalty was also scaled by a factor $C = 2n/(m(m-1))$, where $n$ is batch size, and $m$ is the number of auxiliary regressors.
Where $N$ is the number of epochs, the three schedules tested were $f_{sqrt}(x) = C\sqrt{x/N}$, $f_{sigmoid} = C/(1 + exp(-6x/N + 3))$, and $f_{tanh}(x) = C(\tanh(6x/N - 3) + 1)/2$.

\begin{figure}[H]
    \centering
    \includegraphics[width=0.8\linewidth]{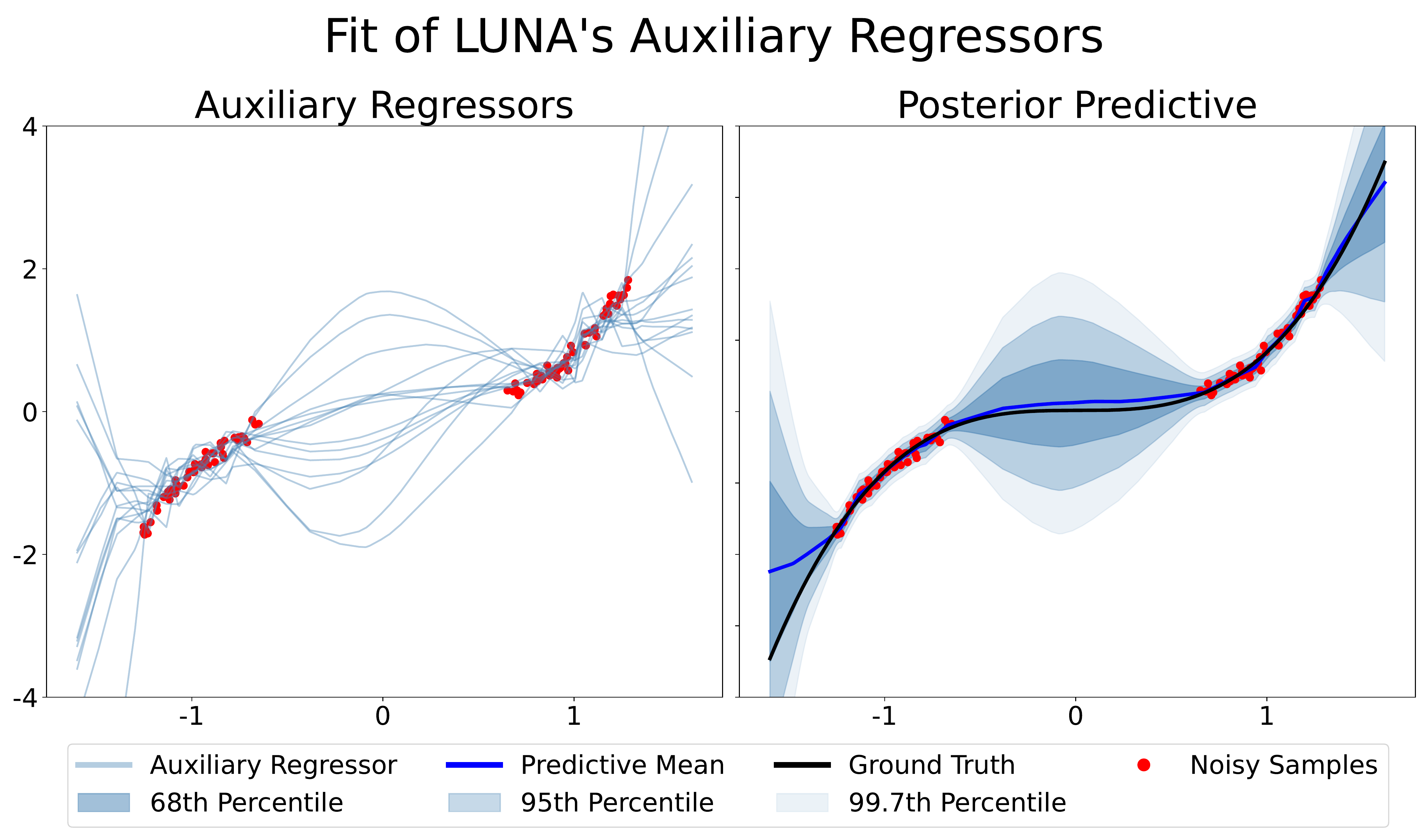}
    \caption{LUNA's auxiliary regressors are seen here to fit the data and extrapolate differently away from the data. This leads to good posterior predictive uncertainty.}
    \label{fig:luna_aux}
\end{figure}

If we use a large number of auxiliary regressors, we can run into issues where not all of them fit the data cleanly, as seen in Figure \ref{fig:luna_aux_bad}.
The Bayesian regression layer is able to use the diversity of the auxiliary regressors to still fit the data wiht good predictive uncertainty.
\begin{figure}[H]
    \centering
    \includegraphics[width=0.8\linewidth]{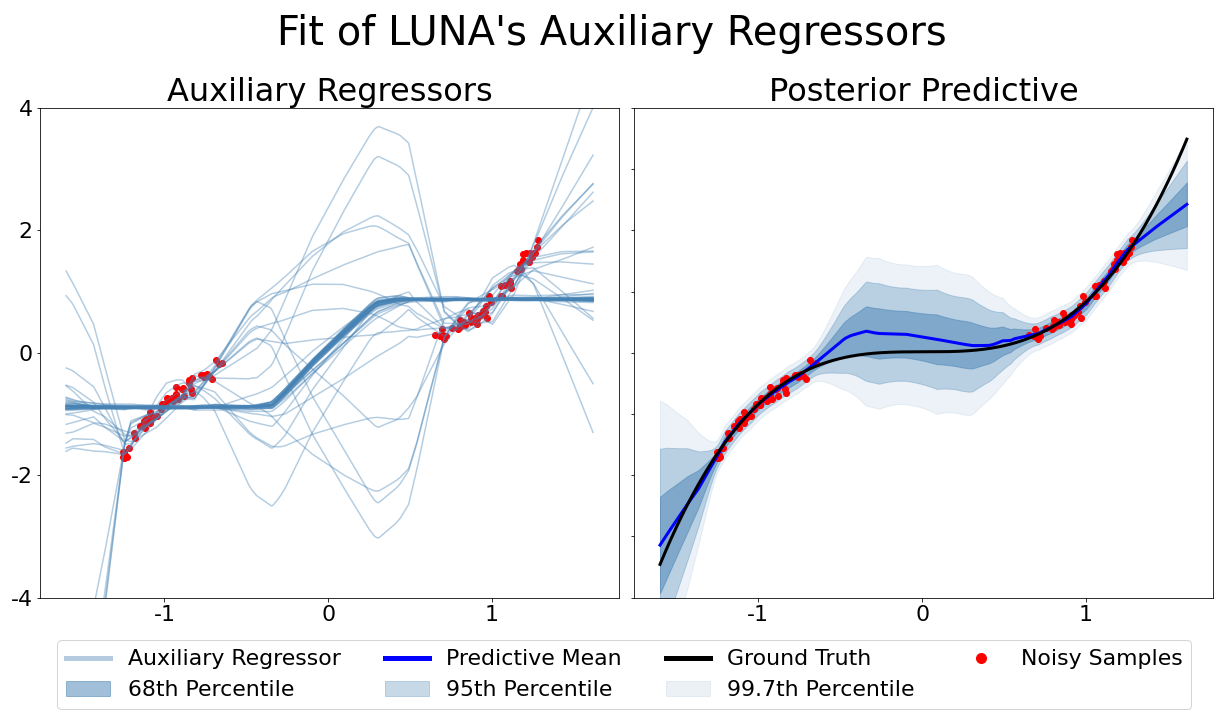}
    \caption{LUNA has good predictive uncertainty even when not all of the auxiliary regressors fit the data.}
    \label{fig:luna_aux_bad}
\end{figure}
The diversity penalty is useful for model selection as well.
By scaling the diversity penalty by the number of auxiliary regressors, the diversity can be compared across architectures and number of auxiliary regressors.
That is, after training, the diversity penalty is calculated on the validation set and caled by $1/{M \choose 2}$, the number of combinations of auxiliary regressors.
This is used later in the UCI gap experiment in section \ref{sec:uci_gap}.

\section{Tuned Uncertainty-Aware (TUNA) Bases}
\label{sec:tuna}

LUNA bases span functions that both fit the observed data and generalize differently in data-poor regions; these bases are useful when we simply want to capture in-between uncertainty. Now, we propose an alternative instantiation of our training framework, TUNA, that optimizes NLM prior predictive distributions to match any functional prior or, indeed, any set of reference functions. In the context of a specific application, by selecting a target functional prior or a set of reference functions with desired task-relevant properties (e.g. by drawing functions from a GP prior with task-appropriate kernel), TUNA's posterior predictive is ``automatically tuned'' for the downstream task. 

For TUNA, Step I of the framework is as follows:

1. \textbf{Select a set of reference points:} We select a set of inputs $\{\widetilde{\mathbf{x}}_i\}_{i=1}^I$ on which we want the NLM's prior predictive to match our target prior predictive. This set of points may include training inputs as well as additional inputs. In Chapter \ref{ch:5}, $\{\widetilde{\mathbf{x}}_i\}_{i=1}^I$ includes training inputs as well as the set of training inputs perturbed with Gaussian noise of a fixed variance $\sigma^2_x$. 

2. \textbf{Select reference functions encoding task-specific desiderata:} We evaluate a set of samples of functions $\{g_m\}_{m=1}^M$ from our target prior predictive or reference function set at each $\widetilde{\mathbf{x}}_{i}$. This produces $M$ sets $\{\widetilde{\mathbf{x}}_{i}, g_m(\widetilde{\mathbf{x}}_i)\}_{i=1}^I$. 

3. \textbf{Each auxiliary regressor learns a reference function:} We train the $M$ auxiliary regressors to capture the $M$ reference functions by minimizing the following objective with respect to the basis parameters, $\Psi = (\theta, \mathbf{\widetilde{w}}_1, \dots, \mathbf{\widetilde{w}}_M)$:
\begin{align}
    \mathcal{L}_\text{TUNA}(\Psi) &= \frac{1}{M} \sum\limits_{m=1}^M || g_m(\widetilde{\mathbf{X}}) - \Phi_\theta \mathbf{\widetilde{w}}_m ||^2_2
\end{align}
where the $i$-th row in $\widetilde{\mathbf{X}}\in \mathbb{R}^{I\times D}$ is $\widetilde{\mathbf{x}}_i$. 
\begin{figure}[H]
    \centering
    \includegraphics[width=\linewidth]{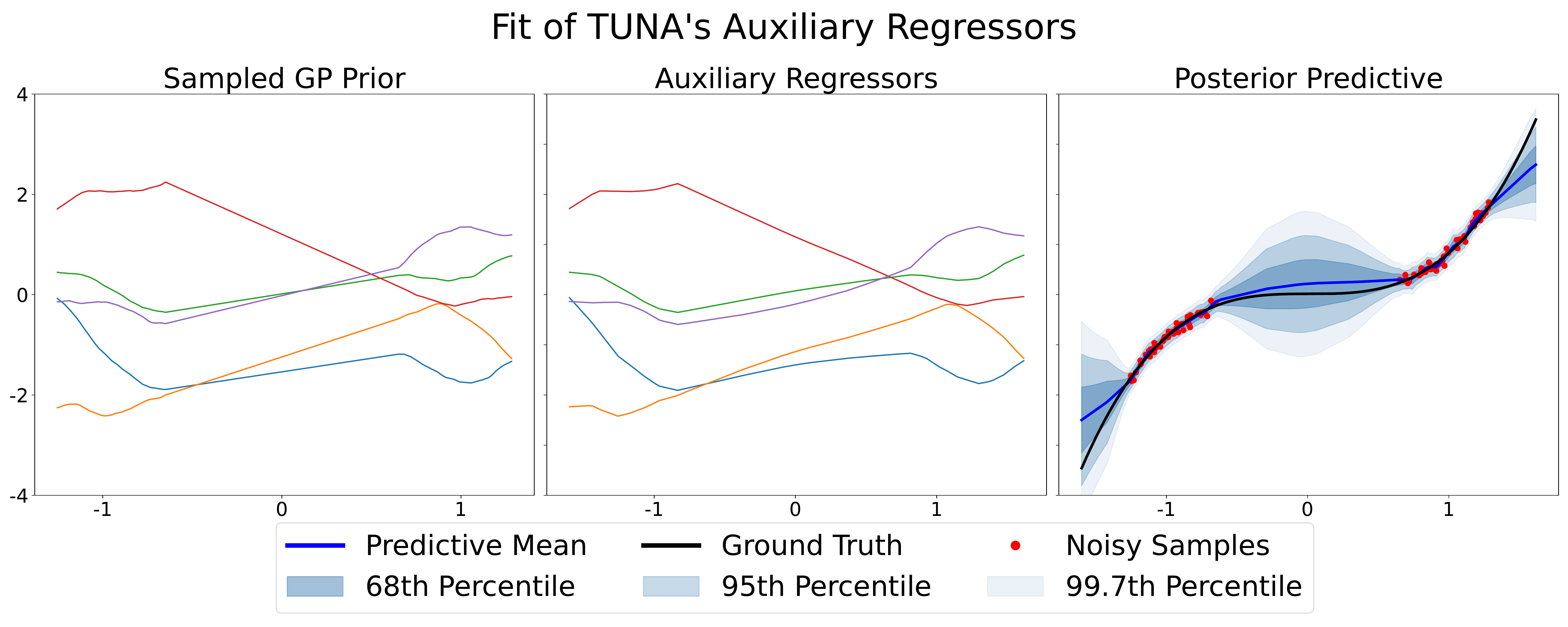}
    \caption{TUNA's auxiliary regressors and the sampled reference functions are almost indistinguishable. This diversity in auxiliary regressors leads to good predictive uncertainty. Plotted are the first five (of 40) regressors. The sampled reference functions correspond to the auxiliary regressor of the same color.}
    \label{fig:tuna_aux}
\end{figure}

In Chapter \ref{ch:5}, we show that through training the auxiliary regressors against the reference functions, we are able to encode task-specific functional knowledge in our feature basis. In particular, when $g_m$ is sampled from a target functional prior $p(g)$, we show that $\mathcal{L}_\text{TUNA}$ is an Empirical Bayes MAP Type II approximation of the KL divergence between $p(g)$ and the function prior implied by the NLM prior $p(\mathbf{w})$ over the last layer of weights. 
That is, \textbf{\emph{TUNA yields prior predictive distributions that well-approximates target functional priors, e.g. GP priors}}.
For example, in Section \ref{sec:encoding_uncertainty}, we show that the prior predictive distribution for a TUNA basis trained against a target GP prior captures salient features of the target prior; and in Figure \ref{fig:gp_tuna_strip}, we show that the posterior predictive distribution for such a TUNA basis captures salient features of the target GP posterior.

Again, after optimizing the feature map, \emph{we discard the auxiliary regressors} $\mathbf{\widetilde{w}}_m$ and perform Bayesian linear regression on the expressive feature basis $\boldsymbol\Phi_{\boldsymbol\theta}$.

\subsection{TUNA Training as a KL Approximation}
\label{sec:tuna-kl}

TUNA's training objective can be understood as minimizing the KL divergence between its target prior predictive, and the NLM's prior predictive.
We see in Figure \ref{fig:tuna_prior_pred_match}, TUNA's prior predictive is very similar to a GP's.
\begin{figure}[H]
    \centering
    \includegraphics[width=0.8\linewidth]{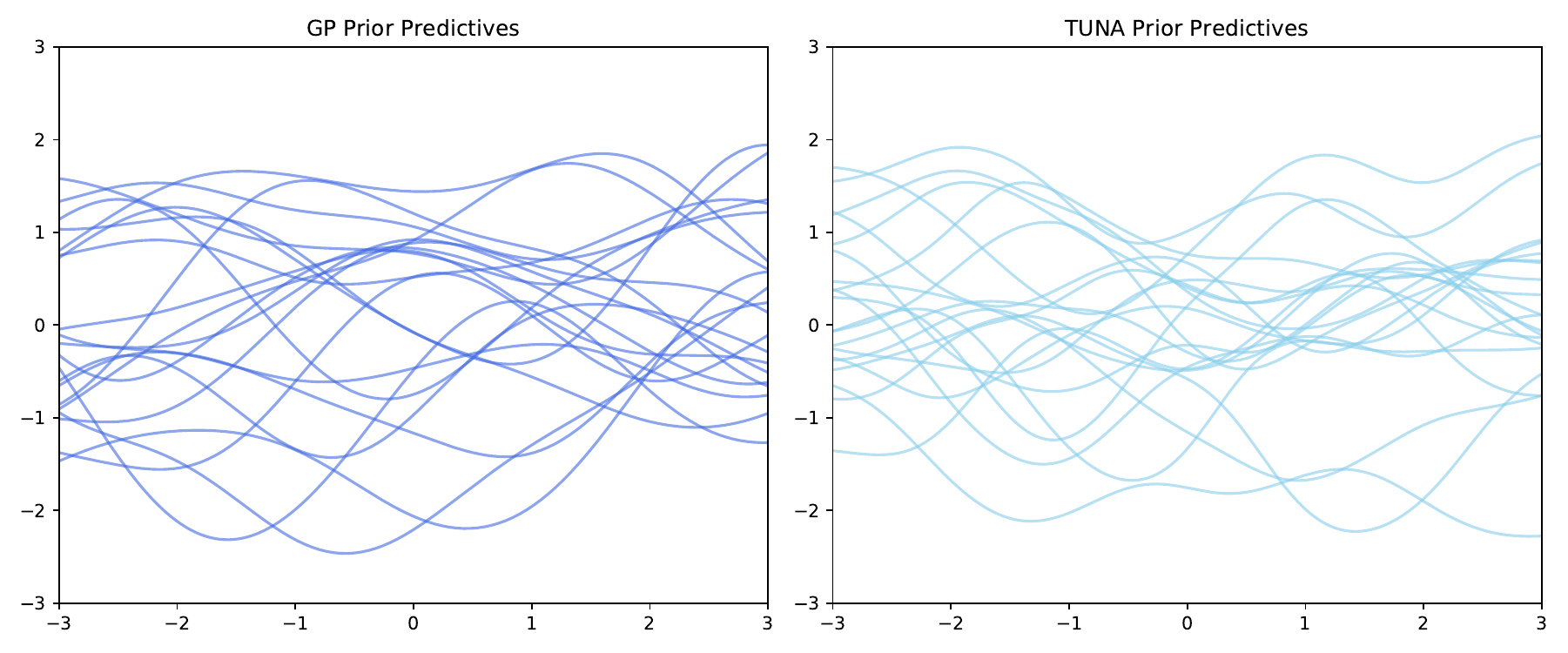}
    \caption{TUNA's prior predictive is very similar to the GP's.}
    \label{fig:tuna_prior_pred_match}
\end{figure}
We are given a set of inputs $\mathcal{I} = \{\mathbf{x}_i\}_{i=1}^I$ on which we want the NLM's prior predictive to match our target prior predictive.
We define the NLM's prior predictive as,
\begin{align*}
    p(\mathbf{y} | \mathbf{x}, \theta) &= \mathbb{E}_{p(\mathbf{w})} \lbrack \mathcal{N}(\mathbf{y}; \boldsymbol\Phi_{\boldsymbol\theta} \mathbf{w}, \sigma^2\mathbf{I}) \rbrack
\end{align*}
and let $q(\mathbf{y} | \mathbf{x})$ be the target prior predictive,
\begin{align*}
    q(\mathbf{y} | \mathbf{x}) &= \mathbb{E}_{p(g)} \lbrack \mathcal{N}(\mathbf{y}; g(\mathbf{x}), \sigma^2\mathbf{I}) \rbrack
\end{align*}
where $p(g)$ is a prior over function (e.g. a GP prior at $\mathcal{I}$). 
We note that the $\sigma^2$ used here to learn the TUNA basis need not be the same one used later in posterior inference. 
Our goal is to find $\theta_\text{TUNA}$ that matches the NLM's prior predictive with the target prior predictive:
\begin{align*}
    \theta_\text{TUNA} &= \mathrm{argmin}_{\theta}
    D_{\text{KL}} \left[ 
    q(\mathbf{y} | \mathbf{x})
    \middle\| 
    p(\mathbf{y} | \mathbf{x}, \theta)
    \right] \\
    &= 
    \text{argmin}_{\theta}
    \mathbb{E}_{q(\mathbf{y} | \mathbf{x})} \left[
    \log \frac{q(\mathbf{y} | \mathbf{x})}{p(\mathbf{y} | \mathbf{x}, \theta)}
    \right] \\
    &= 
    \text{argmin}_{\theta}
    \mathbb{E}_{q(\mathbf{y} | \mathbf{x})} \left[
    -\log p(\mathbf{y} | \mathbf{x}, \theta)
    \right] \\
    &\approx 
    \text{argmin}_{\theta}
    \frac{1}{M} \sum\limits_{m=1}^M
    -\log p(\mathbf{y}_m | \mathbf{x}, \theta)
\end{align*}
where $\mathbf{y}_m \sim q(\mathbf{y} | \mathbf{x}) $.
We then approximate $p(\mathbf{y}_m | \mathbf{x}, \theta)$ using an Empirical Bayes MAP Type II estimate as follows:
\begin{align}
    p(\mathbf{y}_m | \mathbf{x}, \theta) &= \mathbb{E}_{p(\mathbf{w})} \lbrack \mathcal{N}(\mathbf{y}_m; \boldsymbol\Phi_\theta \mathbf{w}, \sigma^2\mathbf{I}) 
    \rbrack \label{eq:before-eb} \\
    &\approx \mathcal{N}(\mathbf{y}_m; \boldsymbol\Phi_\theta \mathbf{\tilde{w}}, \sigma^2\mathbf{I})
\end{align}
where $\mathbf{\tilde{w}} = \mathrm{argmax}_{\mathbf{w}} \mathcal{N}(\mathbf{y}_m; \boldsymbol\Phi_\theta \mathbf{w}, \sigma^2\mathbf{I})$.
This approximation is accurate when the expectation in Equation \ref{eq:before-eb} is dominated by a single $\mathbf{w}$ (i.e. $\mathbf{\tilde{w}}$ explains the samples from the target prior predictive well). 
Plugging this approximation back into the objective, we have:
\begin{align*}
    \theta_\text{TUNA}, \mathbf{\tilde{w}} &\approx \mathrm{argmin}_{\theta, \mathbf{w}}
    \frac{1}{M} \sum\limits_{m=1}^M
    -\log \mathcal{N}(\mathbf{y}_m; \boldsymbol\Phi_\theta \mathbf{w}, \sigma^2\mathbf{I}) \\
    &= 
    \mathrm{argmin}_{\theta, \mathbf{w}} \frac{1}{M}\sum_{m=1}^M \norm{\mathbf{y}_m - \boldsymbol\Phi_\theta \mathbf{w}}_2^2
\end{align*}
Putting it all together, we obtain the TUNA training objective:
\begin{align*}
    \min_{\theta, \mathbf{w}} \frac{1}{M}\sum_{m=1}^M \norm{\mathbf{y}_m - \boldsymbol\Phi_\theta \mathbf{w}}_2^2, \quad \mathbf{y}_m \sim q(\mathbf{y} | \mathbf{x})
\end{align*}
We note that we need not be able to evaluate $q(\mathbf{y} | \mathbf{x})$, which allows us to to use reference functions as opposed to a prior predictive. 

\chapter{Experiments}\label{ch:5}

Because 'good' uncertainty is task dependent, UNA must be benchmarked across many different tasks to determine its utility.
The first such task is a novel benchmark created as a way to easily compare multiple aspects of uncertainty across models.
Next, the models are benchmarked on different toy data sets.
The toy data sets can be easily visualized and offer an intuitive way to understand more subtle aspects of uncertainty.
After that, LUNA is compared against benchmark models on UCI \cite{uci_data} data sets.
Both standard and gap \cite{uci_gap} sets are used.
The downstream task of Bayesian Optimization, that relies on using uncertainty, is used as a functional benchmark.
Lastly, concept learning is incorporated in order to encode knowledge into UNA's uncertainty.

\section{Radial Uncertainty Benchmark}
\label{sec:uncertainty_benchmark}

As discussed earlier in \ref{sec:uncertainty}, what constitutes good uncertainty is dependent on the task at hand, as well as prior knowledge.
Here the Radial Uncertainty Benchmark (RUB) is proposed and UNA, as well as other benchmark models, are compared against our gold-standard method, GPs.
The aim of this benchmark is to provide a fast, easy to understand experiment that captures multiple facets of 'good' uncertainty.

The RUB works by first samples data in a way that creates a gap in the middle.
The data $\mathbf{x}$ is sampled such that $1 \leq \left\lVert\mathbf{x} \right\rVert_2 \leq 2$, with ground truth $f(\mathbf{x}) = \left\lVert\mathbf{x} \right\rVert_2 + \epsilon$, and $\epsilon \overset{\text{iid}}{\sim} \mathcal{N}(0, 10^{-5})$.
This function has signal that is simple enough we expect all of our models to be able to learn it.
In 1D, we sample 50 data points, in 2D we sample 200 data points, and in 3D we sample 500 data points.
Each model is then trained on this sampled data.
In order to capture both predictive uncertainty mean and smoothness, we then sample radially distributed rays outward from the origin.
Predictive uncertainty is measured along those rays, and then the rays are averaged over distance from the origin.

In this case, we say 'good' uncertainty increases smoothly away from the data, matches the symmetry of the data, and has uncertainty that is between signal and noise in uncertainty.
Since data is radially symmetric, ideal uncertainty should be as well.
A 'reasonable' increase in uncertainty here can be defined as proportional to the difference in volume between the gap and data regions.
Since we are comparing volumes of hypersphere of differing radii, the volume ratios between data and gap regions in $D$ dimensions is $1/2^D$.
This leads to 'ideal' uncertainties of $0.5$, $0.25$, and $0.125$ for one, two, and three dimensions, respectively.
This volume ratio is compared to the $99.7$th percentile uncertainty because that level captures all functions that we can reasonably expect from a given data set.
Because there is signal surrounding the gap region and we expect the uncertainty to be smooth, peak uncertainty should be slightly less than this as the uncertainty meets in the center.
One downside of this method is it requires many forward passes from each model.
Models such as BBVI that use sampled weights and biases are prohibitively memory-intensive for this task.

\begin{figure}[H]
    \centering
    \includegraphics[width=\linewidth]{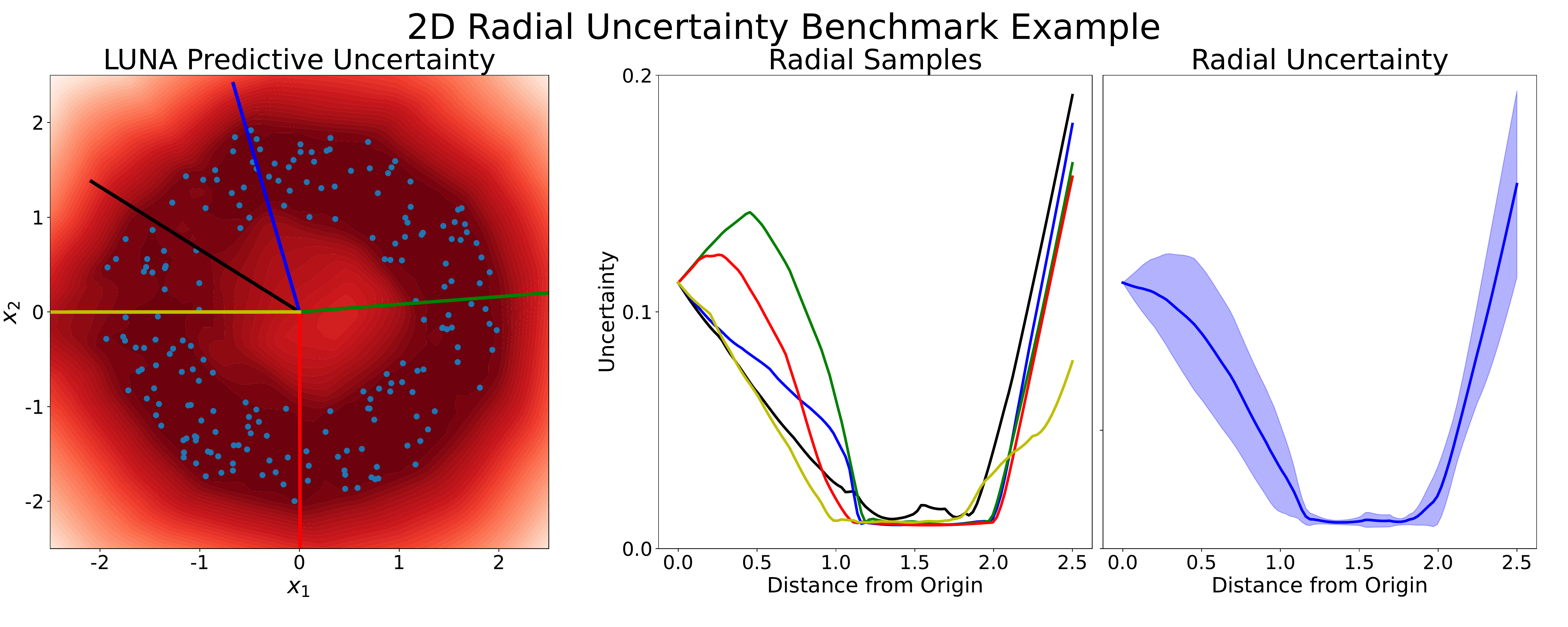}
    \caption{The radial uncertainty benchmark samples radially distributed vectors over which predictive uncertainty is calculated. These sampled vectors are averaged over distance from origin. The resulting mean and standard deviation provide a useful look at predictive uncertainty.}
    \label{fig:radial_uncertainty_example}
\end{figure}
This can be seen in Figure \ref{fig:radial_uncertainty_example}.
The green ray passes through an area of high uncertainty in the gap, which is seen in the middle plot as the peak uncertainty at about 0.75.
The yellow ray passes through a region of low uncertainty outside of the data region, which is seen as the lowest uncertainty at the highest distance from the origin.
These five samples are averaged, leading to the rightmost plot.
Since this is only five samples, the standard deviation is quite high, higher than we would expect, since the predictive uncertainty is fairly radially uniform.
In practice, more samples are taken.
In 1D, we simply take two samples, since there is only left and right.
In 2D and 3D, we take 1000 samples to get good coverage of the domain.
The exact experimental details are in Appendix \ref{sec:exp-setup}

\begin{figure}[h]
    \centering
    \includegraphics[width=\linewidth]{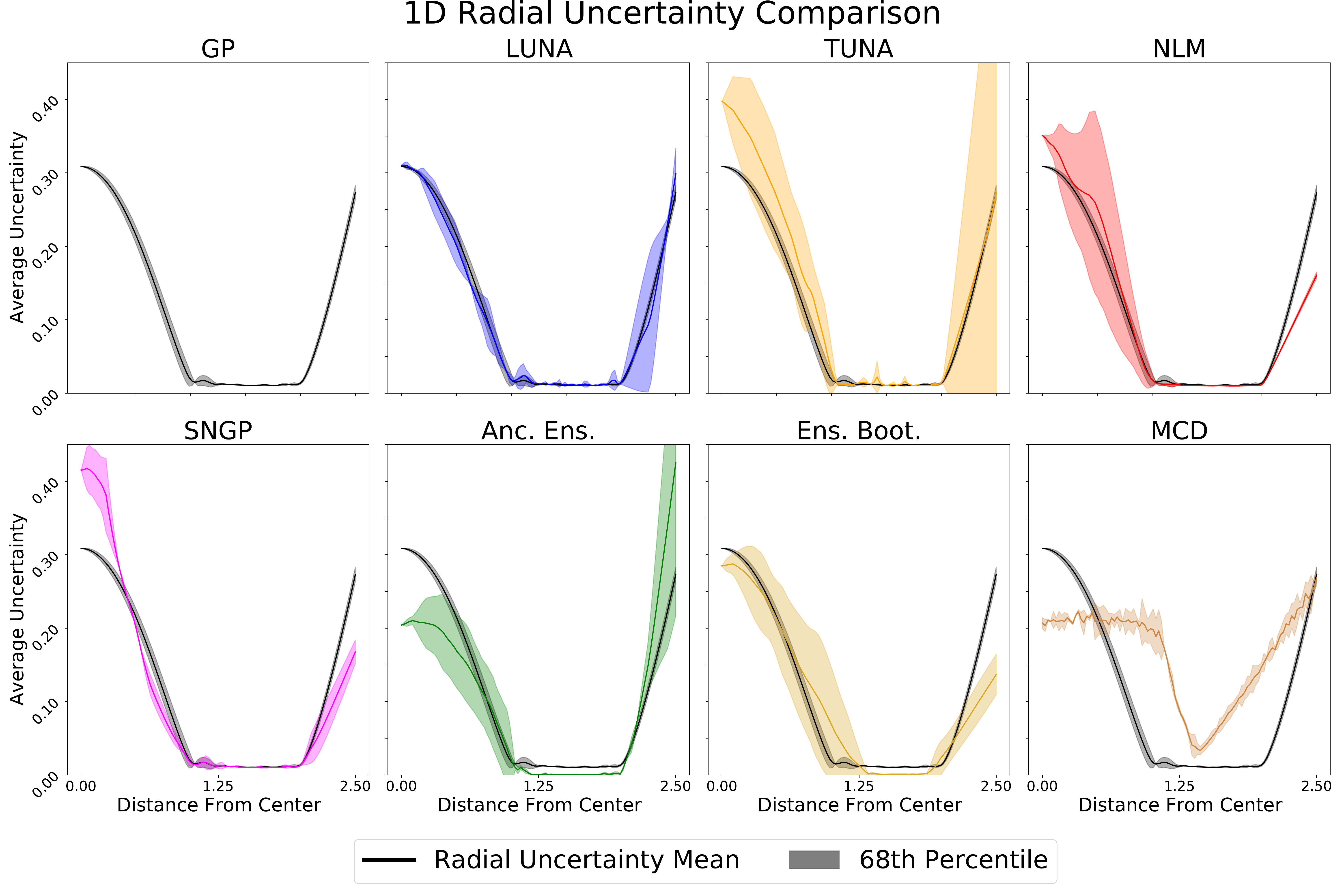}
    \caption{In one dimension, we see every method except for MCD is able to mimic GPs performance quite accurately. We also see TUNA has noisier predictive uncertainty.}
    \label{fig:ub_1d}
\end{figure}
For the 1D case, we see every method except MCD is able to reproduce the GP result with a high degree of accuracy.
On such a simple task, this is not unexpected, and we cannot reasonably rule out any methods except MCD as having good, GP-like uncertainty.

\begin{figure}[h]
    \centering
    \includegraphics[width=\linewidth]{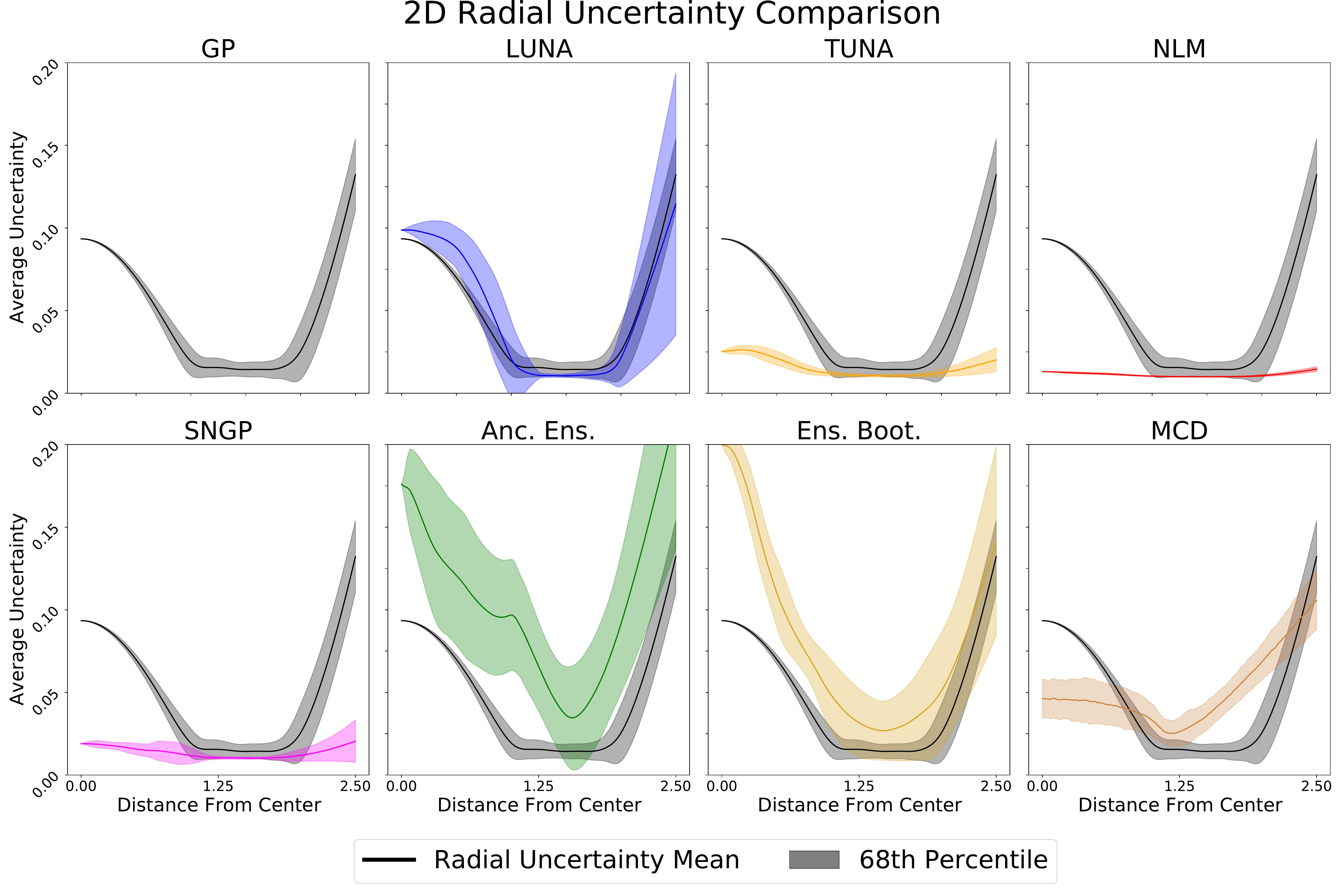}
    \caption{In two dimensions, we see that GP has slightly more variance in uncertainty. LUNA is the only method to closely mimic the results of GP.}
    \label{fig:ub_2d}
\end{figure}
In higher dimensions, we see fewer models are able to closely reproduce the GP result.
The Ensemble methods start to overestimate uncertainty, and have significantly more noise.
TUNA, NLM, and SNGP all have predictive uncertainty that collapses, even where there's no data.
LUNA is the only method that is able to produce uncertainty close to that of the GP.

\begin{figure}[h]
    \centering
    \includegraphics[width=\linewidth]{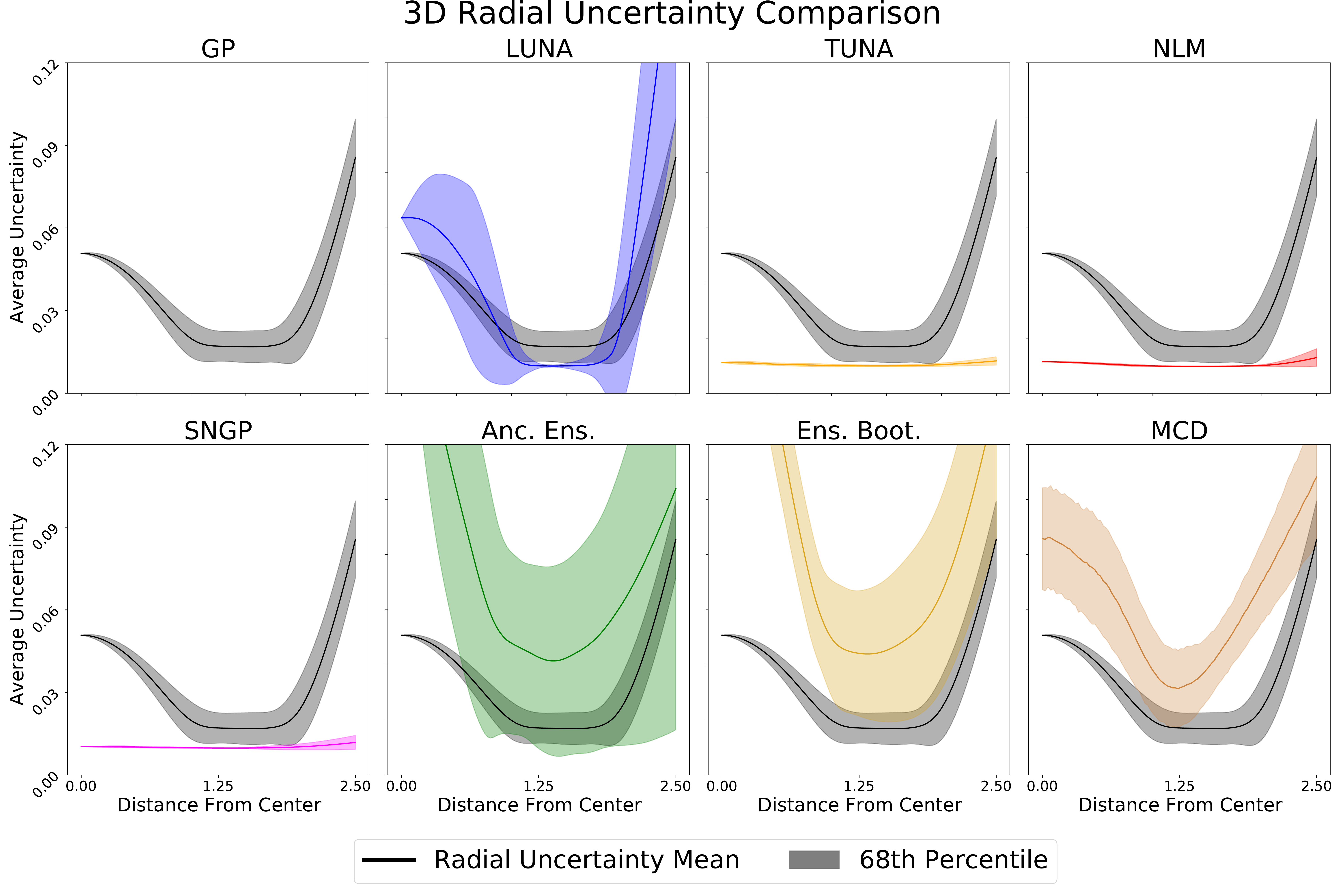}
    \caption{In three dimensions, we see that GP has slightly more variance in uncertainty. LUNA is the only method to closely mimic the results of GP.}
    \label{fig:ub_3d}
\end{figure}
In higher dimensions, the difference is even more pronounced.
GP prediction is slightly noisier. 
LUNA has noticeably more noise in its predictive uncertainty than the GP, but still is able to produce similar uncertainties.
TUNA, NLM, and SNGP have all but completely collapsed their uncertainty.
The ensemble methods are mostly dominated by noise, and tend to overestimate uncertainty to a great degree.
On this task, LUNA is the only method able to consistently match GP predictive uncertainty as dimensionality is increased.

\section{Toy Data}
Toy data sets and experiments can be designed to test and evaluate specific aspects of a model's performance.
Here, many such experiments are conducted in order to understand UNA's performance compared to other methods.
More examples are given in Appendix \ref{sec:appendix_toy_data_set}.

\subsection{Cubic Gap}
A very useful toy experiment to help demonstrate good predictive uncertainty is the cubic gap set (\cite{Rasmussen}).
The ground truth function is simply $f(x) = x^3$, with noise added.
Our sampled function is then $f(x) = x^3 + \epsilon$, $\epsilon \sim \mathcal{N}(0, 9)$, where the data set consists of 50 points sampled from $[-4, -2]$, and 50 points sampled from $[2,4]$.
This data set is well suited for this task because the ground truth is challenging, but not too challenging, and the gap in the data is large enough to show if models have unbounded uncertainty, while being small enough that models with poor uncertainty qualities fail to increase uncertainty in the gap.
\begin{figure}[H]
    \centering
    \includegraphics[width=\linewidth]{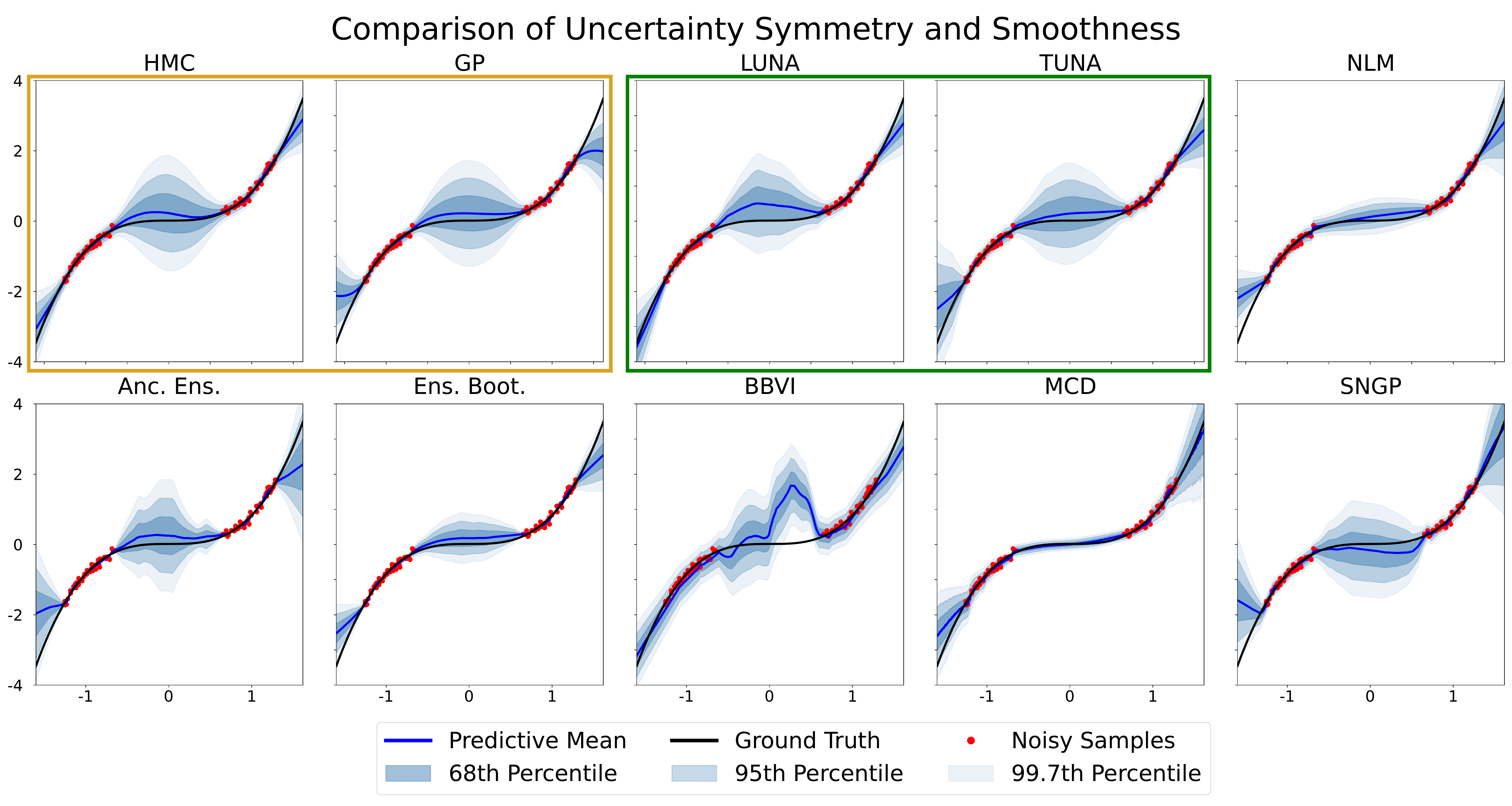}
    \caption{In a comparison of models on the normalized cubic gap data set, we see that the UNA models, LUNA and TUNA, have uncertainty very comparable to the gold-standard methods.}
    \label{fig:cubic_toy}
\end{figure}

Performance on a task such as this can serve to show whether or not a model can produce smoothly increasing uncertainty.
However, this limited view of uncertainty does not show the whole picture.
For instance, in all cases, the model's uncertainty is relatively constant where there is data.
This behavior is desirable in here because the data regions were uniformly sampled.
In cases where the data density is not uniform, we may want uncertainty to increase where there is less data, and increase yet faster where there is no data.
This work is focused primarily on data sets with uniform noise across all samples, and therefore the uncertainty we see in the gold-standard methods of figure \ref{fig:cubic_toy} are ideal.

\subsection{Transfer Learning}
\label{sec:transfer_learning}
We construct a synthetic 1-D dataset where train $x$ is uniformly sampled from the range $[-4,-2]\cup[2,4]$
and $y =x^3 + 20\exp(-x^2) \cdot \sin(10x) + \epsilon, \epsilon\sim\mathcal{N}(0, 3^2)$.
As seen in Figures \ref{fig:squiggle} and \ref{fig:low_good_high}, this function is like the Cubic Gap Example, but with unexpected variations in the gap.
For the generalization experiment, test $x$ is sampled from the same range. 
For the transfer learning experiment, test $x$ is sampled from the range $[-2,2]$, inside the gap.
\begin{figure}[H]
    \centering
    \includegraphics[width=0.5\linewidth]{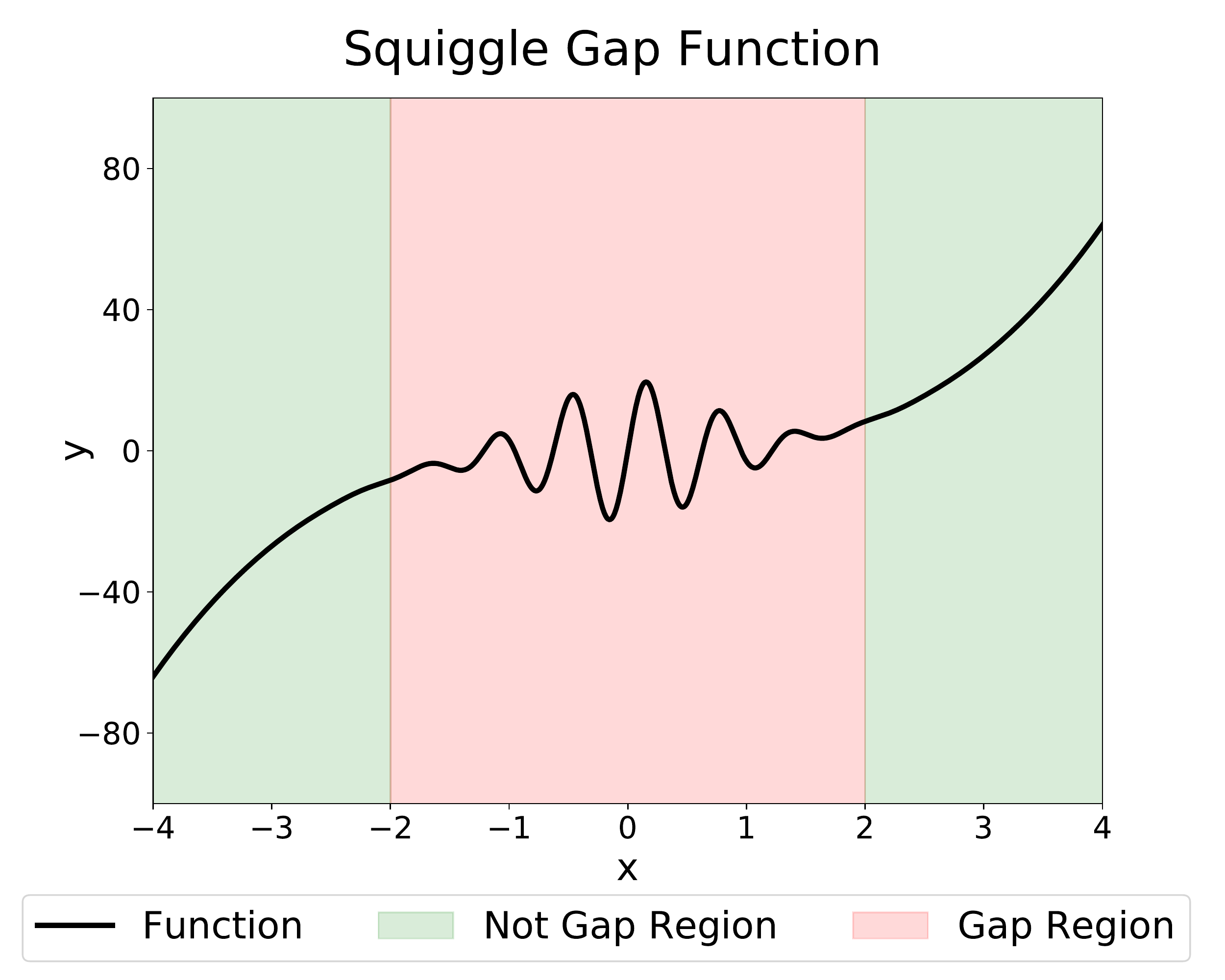}
    \caption{The squiggle gap function $y =x^3 + 20\exp(-x^2) \cdot \sin(10x)$.}
    \label{fig:squiggle}
\end{figure}
The experiment is conducted by training on data sampled from the not gap region, and relearning the posterior for predicting the gap region.
Data in this experiment was noramlized.
Effectively, we are testing how well features learned from data outside the gap perform at predicting data inside the gap.
As we would expect, methods with diversity in the gap region would perform best.
The average log-likelihood of LUNA increases the fastest as the number of features increases, beating both NLM and SNGP.
\begin{figure}[H]
    \centering
    \includegraphics[width=0.6\linewidth]{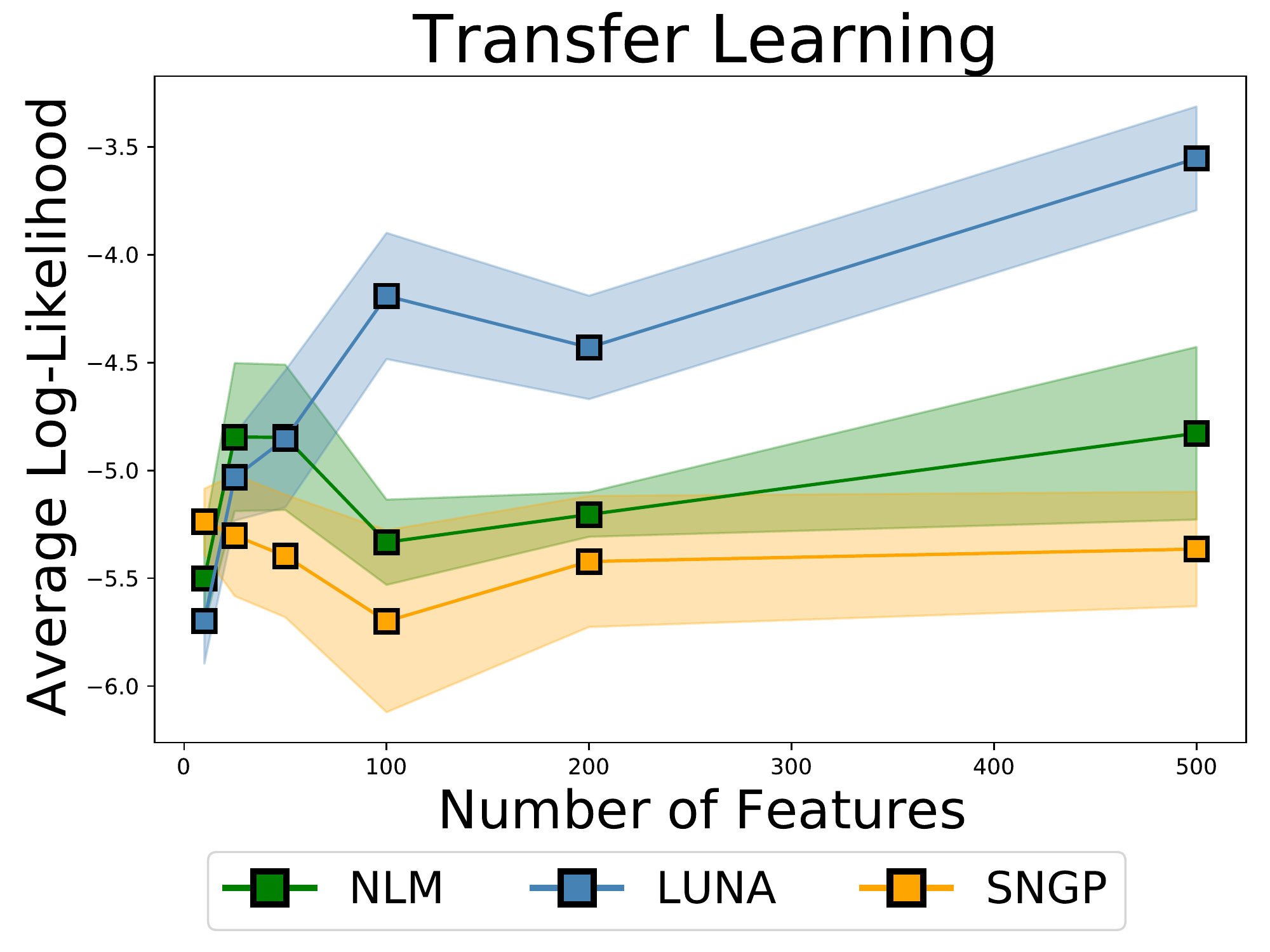}
    \caption{Log-likelihood of LUNA, SNGP, NLM on ``Squiggle Gap'' (Appendix \ref{sec:synthetic-data}) given number of features. With any number of features, LUNA outperform NLM and SNGP when the features are transferred to new data.}
    \label{fig:ll_vs_features_tf_relearn}
\end{figure}

\subsection{Encoding Domain Knowledge for TUNA}
\label{sec:encoding_uncertainty}
TUNA's reliance on reference functions for training allows us to incorporate domain knowledge into both predictions and predictive uncertainty.
For example, if we know the ground truth is sinusoidal, we can use noisy sine curves as the reference functions.
This domain knowledge allows TUNA's predictions to extrapolate much further away than if we were to use poorly chosen reference functions, such as samples from an RBF kernel, as seen in Figure \ref{fig:tuna_sine_extrapolate}.
\begin{figure}
    \centering
    \includegraphics[width=\linewidth]{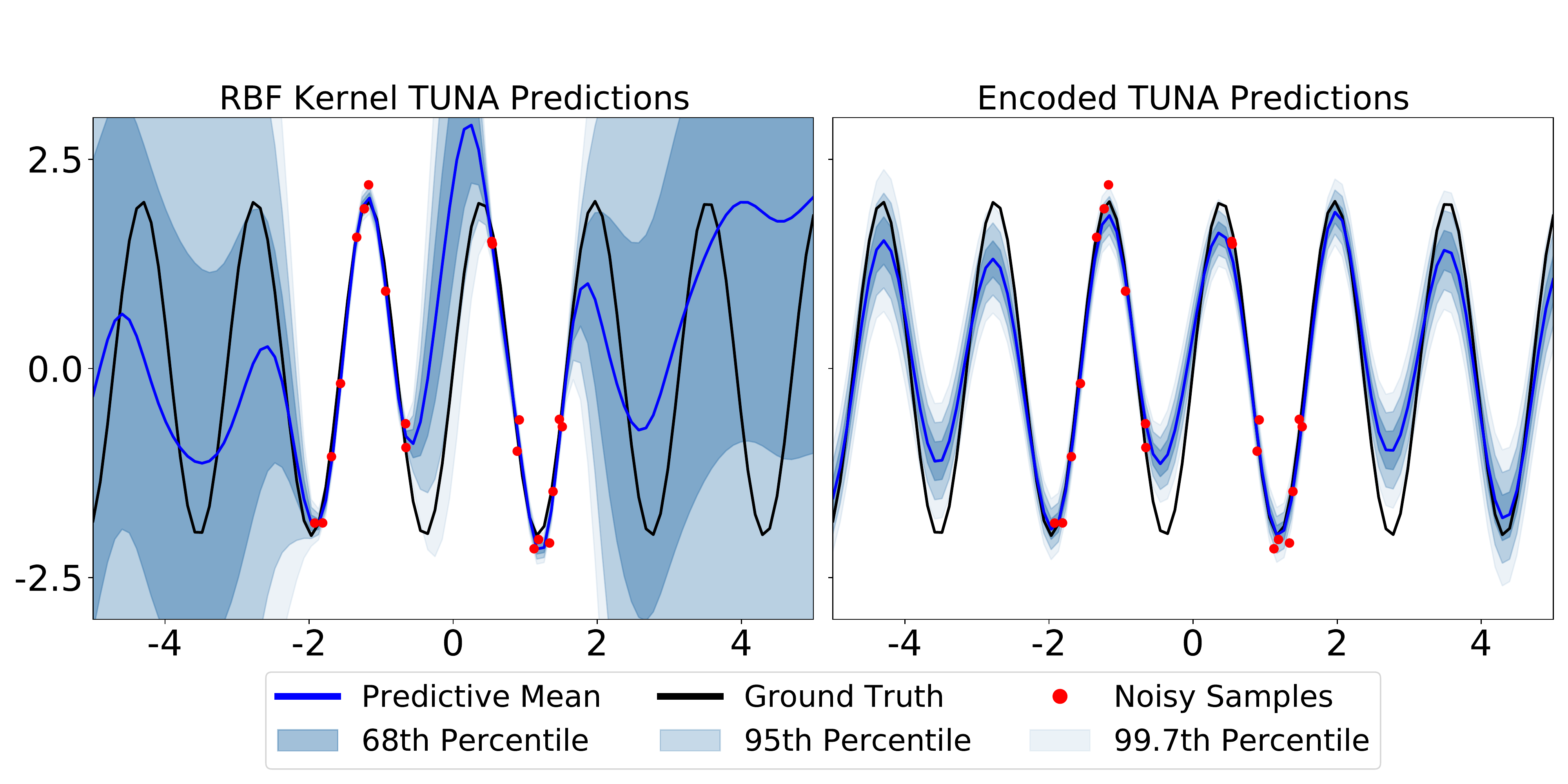}
    \caption{TUNA is able to extrapolate domain knowledge outside of the data region. This is done by choosing reference functions that capture the ground truth function.}
    \label{fig:tuna_sine_extrapolate}
\end{figure}

Domain knowledge can also be encoded into the predictive uncertainty of TUNA.
In this example, the ground truth function is $f(\mathbf{x}) = x_1^3$.
By using a kernel that does not incorporate this knowledge, we see that uncertainty is high where there is no data, and low where there is data in Figure \ref{fig:gp_tuna_circle}.
By choosing a kernel that incorporates the knowledge that $f(\mathbf{x})$ is independent of $x_2$, we are able to get uncertainties that are also constant in the direction of $x_2$, seen in Figure \ref{fig:gp_tuna_strip}.

\subsection{Controlling Uncertainty}

Through the use of pseudo data, we are able to control predictive uncertainty directly, tuning it to belarger or smaller.
First we see by using two auxiliary regressors and psuedo data in the middle, we can control the predictive uncertainty to be smaller than it otherwise would be in Figure \ref{fig:tuna_pseudo_low}.
\begin{figure}[H]
    \centering
    \includegraphics[width=0.8\linewidth]{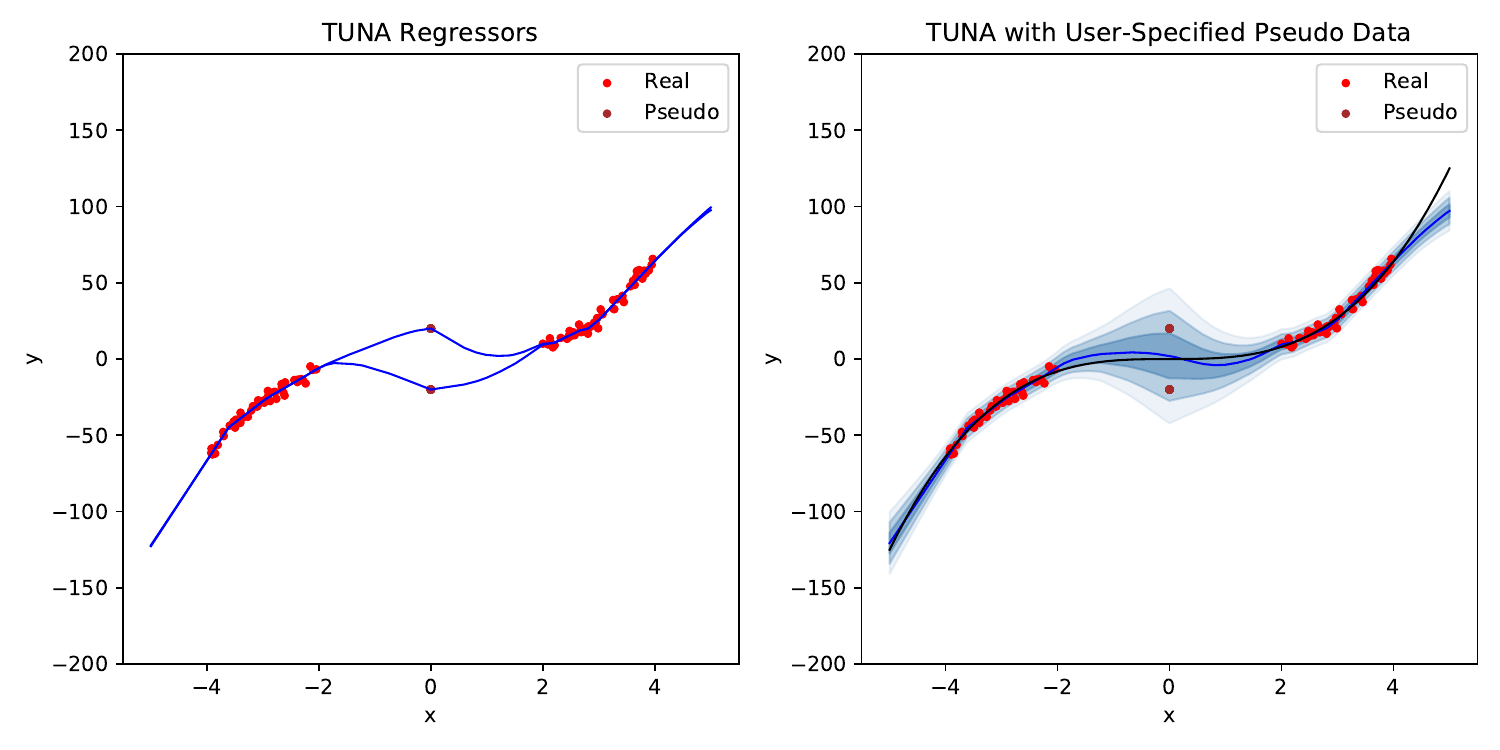}
    \caption{Pseudo data reduces predictive uncertainty in the middle of the gap.}
    \label{fig:tuna_pseudo_low}
\end{figure}
We can also use pseudo data to increase the predictive uncertainty by spreading the points out, seen in Figure \ref{fig:tuna_pseudo_hi}.
\begin{figure}[H]
    \centering
    \includegraphics[width=0.8\linewidth]{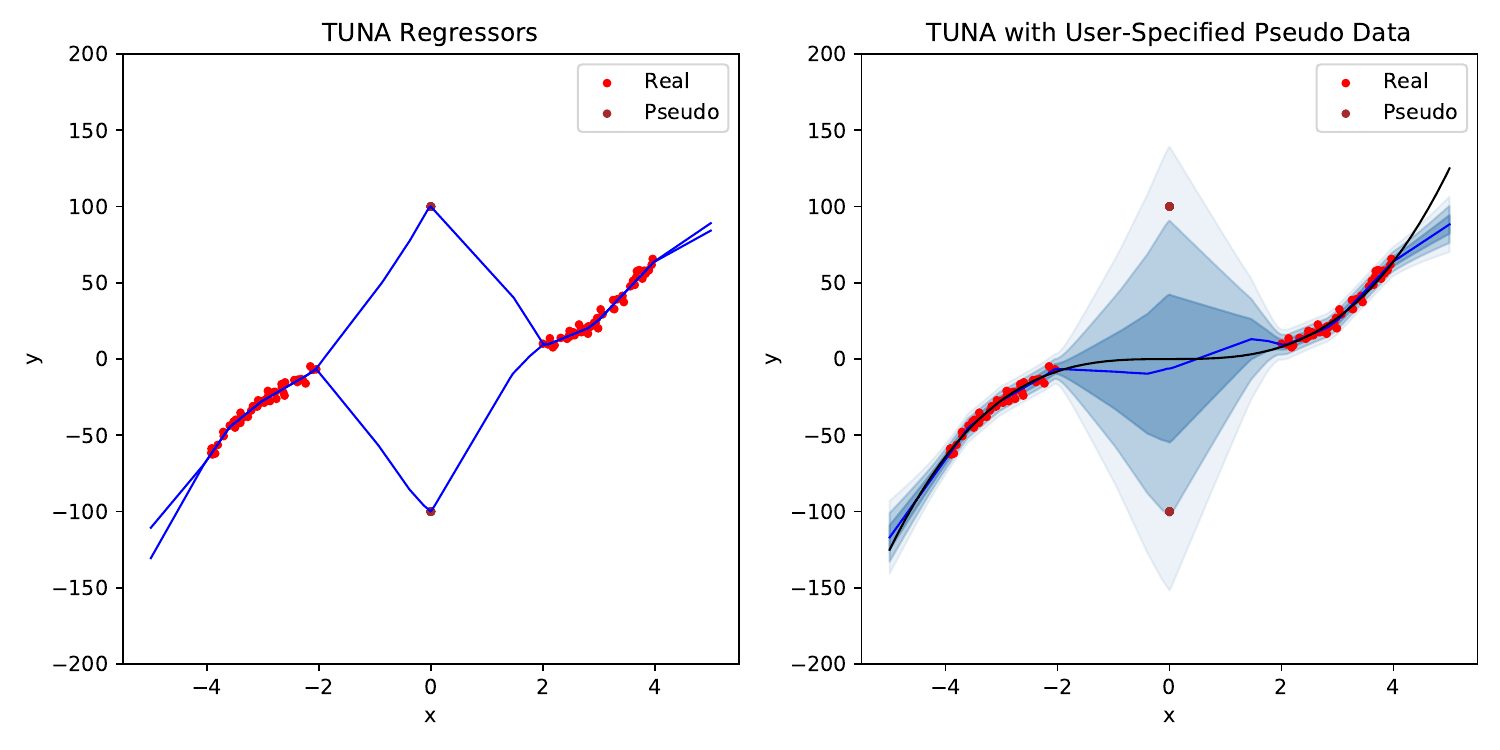}
    \caption{Pseudo data increases predictive uncertainty in the middle of the gap.}
    \label{fig:my_label}
\end{figure}
\begin{figure}
    \centering
    \begin{subfigure}[b]{0.45\textwidth}
    \centering
    \includegraphics[width=1.1\textwidth]{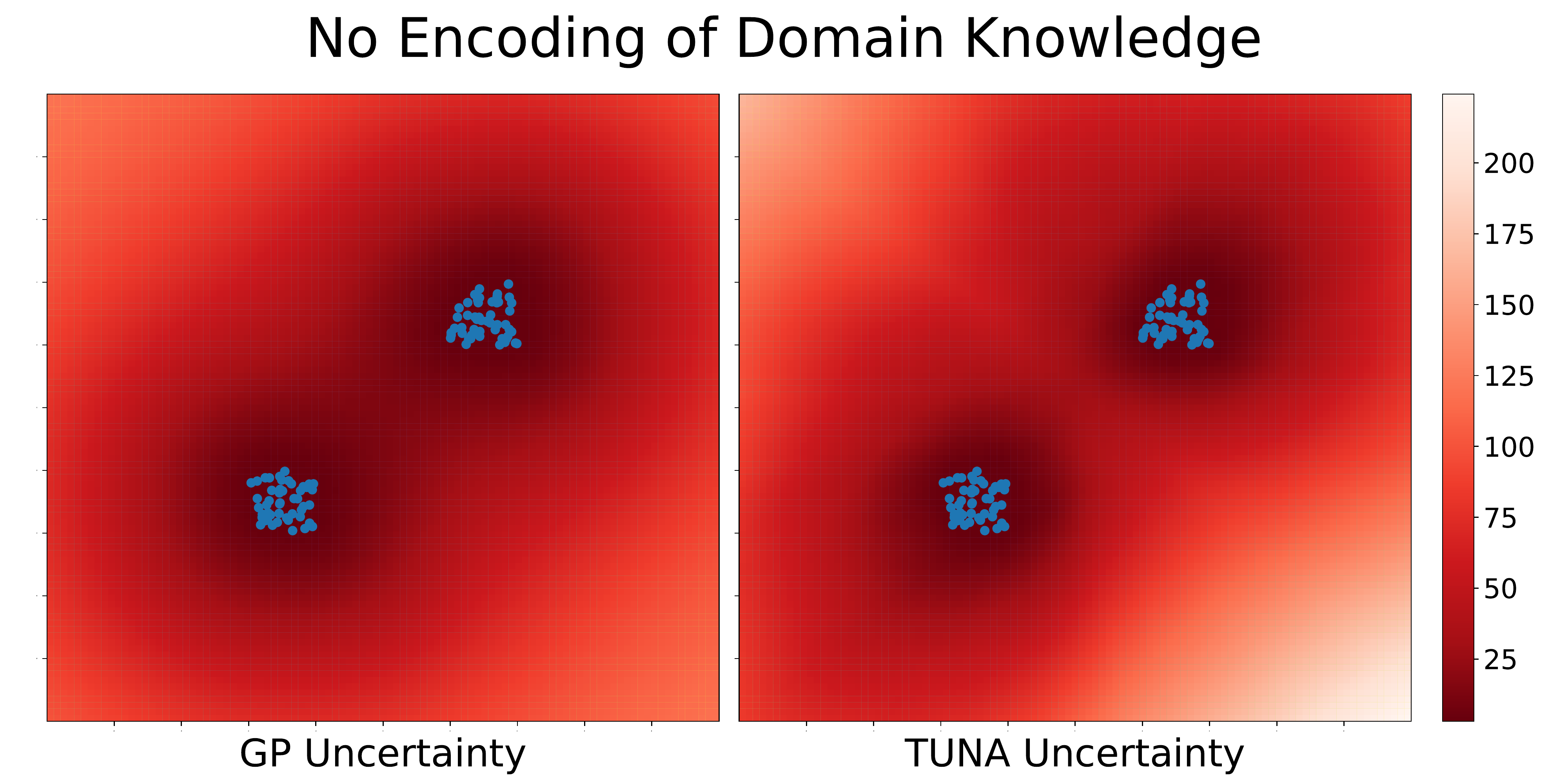}
    \caption{Without domain knowledge, GP and TUNA uncertainty is low where there is data, and high where there is no data.}
    \label{fig:gp_tuna_circle}
    \end{subfigure}
    \hfill
    \begin{subfigure}[b]{0.45\textwidth}
    \centering
    \includegraphics[width=1.1\textwidth]{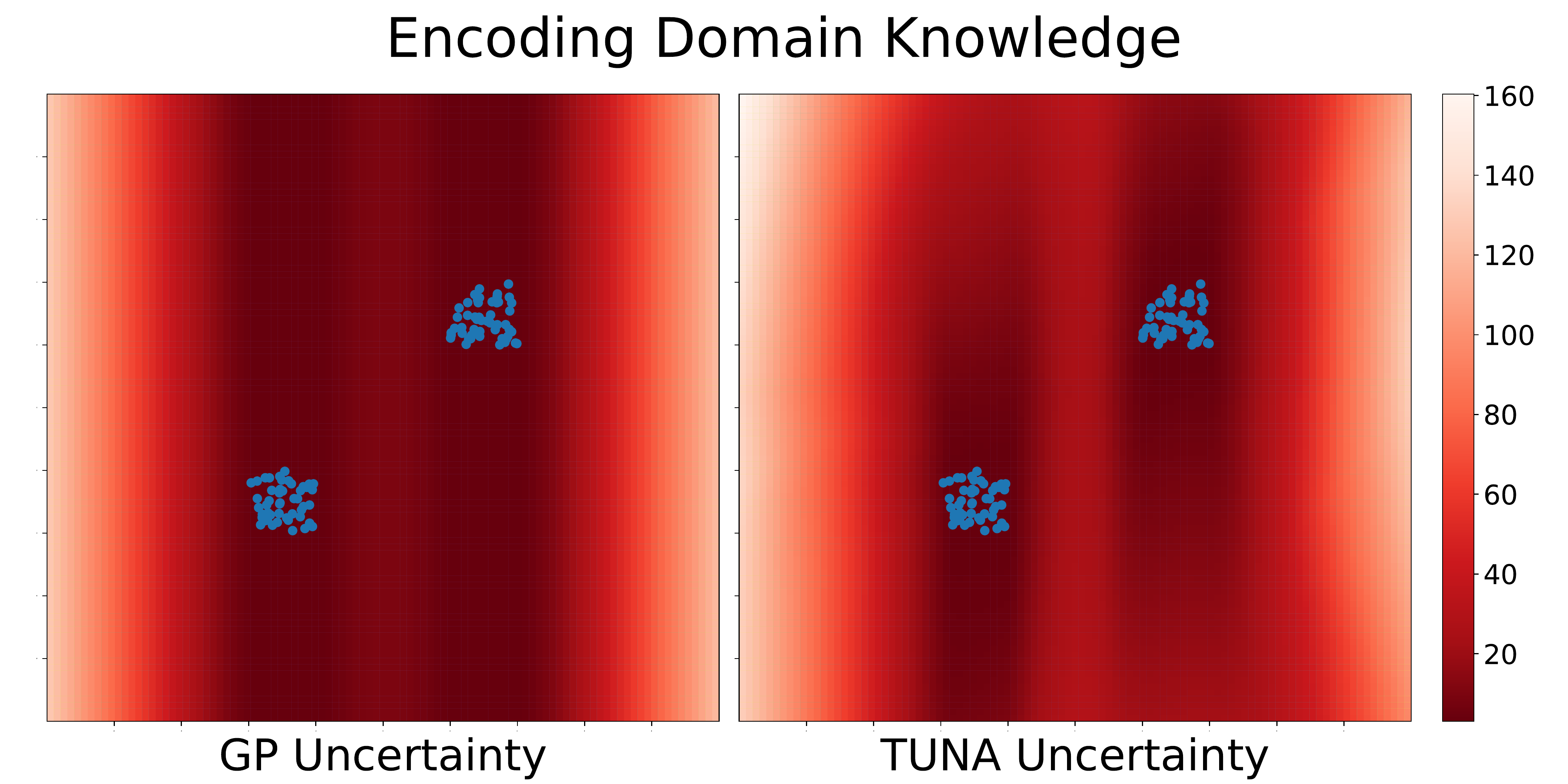}
    \caption{By selecting the GP kernel based on knowledge that $x_2$ is constant, uncertainty is also constant in that direction.}
    \label{fig:gp_tuna_strip}
    \end{subfigure}
\end{figure}

\section{UCI Data}
\label{sec:uci_experiments}
University of California at Irvine provides many real-world data sets that have become a standard benchmark for machine learning models (\cite{uci_data}).
Data sets cover a wide array settings for both regression and classification tasks.

\subsection{Standard UCI Regression}
\label{sec:standard_uci}
The standard regression tasks involve using an 80-10-10 train, test, validation split.
Hyperparameter selection was done on a grid using the validation set.
The grid search specifications are given in section \ref{sec:uci-gap-examples}.
Error bars are obtained by averaging across different data splits.
Presented in Tables \ref{tab:uci_regression_rmse} and \ref{tab:uci_regression_ll} are the RMSE and log-likelihood results of our benchmark models, respectively.
We see LUNA performs comparably to or better than all of our benchmarks across all of the data sets for both RMSE and average log-likelihood.

\begin{table*}
    \centering
    \text{Root Mean Square Error}
    \resizebox{\textwidth}{!}{
    \begin{tabular}{ccccccc}
        \toprule
        & Boston & Concrete & Yacht & Kin8nm & Energy & Wine \\ 
		\cmidrule(lr){2-2} \cmidrule(lr){3-3} \cmidrule(lr){4-4} \cmidrule(lr){5-5} \cmidrule(lr){6-6} \cmidrule(lr){7-7}
		Ens. Boot. & 2.86 $\pm$ 0.97 & 4.68 $\pm$ 0.50 & 0.81 $\pm$ 0.38 & 0.08 $\pm$ 0.00 & 0.50 $\pm$ 0.07 & 0.57 $\pm$ 0.06 \\
		Ensemble & 2.78 $\pm$ 0.91 & 4.48 $\pm$ 0.57 & 0.53 $\pm$ 0.24 & 0.08 $\pm$ 0.00 & 0.42 $\pm$ 0.06 & 0.58 $\pm$ 0.07 \\
		Anc. Ens. & 2.80 $\pm$ 0.80 & 4.61 $\pm$ 0.52 & 0.67 $\pm$ 0.23 & 0.08 $\pm$ 0.00 & 0.53 $\pm$ 0.06 & 0.58 $\pm$ 0.06 \\
		\cmidrule{1-7}
		NLM & 3.11 $\pm$ 0.93 & 4.68 $\pm$ 0.65 & 0.55 $\pm$ 0.30 & 0.08 $\pm$ 0.00 & 0.37 $\pm$ 0.06 & 0.59 $\pm$ 0.04 \\
		GP & 4.21 $\pm$ 1.22 & 11.94 $\pm$ 0.40 & 3.45 $\pm$ 0.90 & 0.09 $\pm$ 0.01 & 2.58 $\pm$ 0.23 & 0.60 $\pm$ 0.05 \\
		MCD & 2.94 $\pm$ 0.86 & 4.36 $\pm$ 0.68 & 0.58 $\pm$ 0.22 & 0.09 $\pm$ 0.01 & 0.40 $\pm$ 0.08 & 0.58 $\pm$ 0.05 \\
		SNGP & 3.06 $\pm$ 0.93 & 5.00 $\pm$ 0.50 & 1.10 $\pm$ 0.43  & 0.10 $\pm$ 0.01 & 0.87 $\pm$ 0.50 & 0.62 $\pm$ 0.07 \\
		BBVI & 5.31 $\pm$ 1.29 & 16.45 $\pm$ 0.62 & 1.91 $\pm$ 0.51 & 0.09 $\pm$ 0.00 & 2.39 $\pm$ 0.16 & 0.64 $\pm$ 0.05 \\
		\cmidrule{1-7}
		LUNA & 3.18 $\pm$ 1.00 & 4.70 $\pm$ 0.56 & 0.51 $\pm$ 0.20 & 0.08 $\pm$ 0.00 & 0.40 $\pm$ 0.06 & 0.62 $\pm$ 0.08 \\
	\end{tabular}}
	\caption{RMSE for our benchmark models on different UCI regression data sets.}
	\label{tab:uci_regression_rmse}
\end{table*}

\begin{table*}
    \centering
    \text{Avg. Log-Likelihood}
    \resizebox{\textwidth}{!}{
    \begin{tabular}{ccccccc}
        \toprule
        & Boston & Concrete & Yacht & Kin8nm & Energy & Wine \\ 
		\cmidrule(lr){2-2} \cmidrule(lr){3-3} \cmidrule(lr){4-4} \cmidrule(lr){5-5} \cmidrule(lr){6-6} \cmidrule(lr){7-7}
		Ens. Boot. & N/A & N/A & N/A & N/A & N/A & N/A \\
		Ensemble & N/A & N/A & N/A & N/A & N/A & N/A \\
		Anc. Ens. & N/A & N/A & N/A & N/A & N/A & N/A \\
		\cmidrule{1-7}
		NLM & -3.67 $\pm$ 0.01 & -5.33 $\pm$ 0.00 & -2.32 $\pm$ 0.01 & 1.03 $\pm$ 0.03 & -2.78 $\pm$ 0.00 & -1.02 $\pm$ 0.03 \\
		GP & -3.69 $\pm$ 0.02 & -5.34 $\pm$ 0.00 & -2.69 $\pm$ 0.13 & 0.91 $\pm$ 0.03 & -2.86 $\pm$ 0.01 & -1.03 $\pm$ 0.04 \\
		MCD & -3.67 $\pm$ 0.01 & -5.32 $\pm$ 0.00 & -2.31 $\pm$ 0.01 & 0.93 $\pm$ 0.04 & -2.77 $\pm$ 0.00 & -1.02 $\pm$ 0.04 \\
		SNGP & -3.66 $\pm$ 0.01 & -5.33 $\pm$ 0.00 & -2.34 $\pm$ 0.03  & 0.86 $\pm$ 0.15 & -2.79 $\pm$ 0.02 & -1.05 $\pm$ 0.05 \\
		BBVI & -3.77 $\pm$ 0.02 & -5.37 $\pm$ 0.00 & -2.61 $\pm$ 0.07 & 0.97 $\pm$ 0.02 & -2.90 $\pm$ 0.01 & -1.08 $\pm$ 0.04 \\
		\cmidrule{1-7}
		LUNA & -3.67 $\pm$ 0.01 & -5.33 $\pm$ 0.00 & -2.31 $\pm$ 0.01 & 1.02 $\pm$ 0.03 & -2.79 $\pm$ 0.00 & -1.05 $\pm$ 0.06 \\
	\end{tabular}}
	\caption{Average Log-Likelihood for our benchmark models on different UCI regression data sets.}
	\label{tab:uci_regression_ll}
\end{table*}

\subsection{UCI Gap Data Sets}
\label{sec:uci_gap}

The UCI gap sets take the standard UCI sets and introduce a gap into one of the features (\cite{uci_gap}).
The procedure converts these UCI data sets into UCI gap data sets.
For a selected input dimension,
we (1) sort the data in increasing order in that dimension,
and (2) remove middle $1/3$ to create a gap.
Figure \ref{fig:gap_high_d} shows the gap clearly, in the feature in which it was introduced.
We took three standard UCI sets, and modify them according to the procedure above to create five gap sets.
The features that correlated most strongly with the output were chosen and are:
\begin{itemize}
    \item Boston Housing: ``Rooms per Dwelling'' (RM), ``Percentage Lower Status of the Population'' (LSTAT), and ``Parent-Teacher Ratio by Town'' (PTRATIO)
    \item Concrete Compressive Strength: ``Cement'' and ``Superplasticizer''
    \item Yacht Hydrodynamics: ``Froude Number''
\end{itemize}
\begin{table*}
    \centering
    \text{Root Mean Square Error}
    \resizebox{\textwidth}{!}{
    \begin{tabular}{ccccccccccccc}
        \toprule
        & \multicolumn{2}{c}{Yacht - FROUDE} & \multicolumn{2}{c}{Concrete - CEMENT} 
        & \multicolumn{2}{c}{Concrete - SUPER} & \multicolumn{2}{c}{Boston - RM}
        & \multicolumn{2}{c}{Boston - LSTAT} & \multicolumn{2}{c}{Boston - PTRATIO} \\
		\cmidrule(lr){2-3} \cmidrule(lr){4-5} \cmidrule(lr){6-7} \cmidrule(lr){8-9} \cmidrule(lr){10-11} \cmidrule(lr){12-13}
		& Not Gap & Gap & Not Gap & Gap & Not Gap & Gap & Not Gap & Gap & Not Gap & Gap & Not Gap & Gap \\
		\cmidrule{2-13}
		Ens. Boot. & 1.21 $\pm$ 0.47 & 0.51 $\pm$ 0.08 & 5.27 $\pm$ 0.97 & 6.04 $\pm$ 0.10 & 4.70 $\pm$ 0.93 & 7.55 $\pm$ 0.26 & 2.81 $\pm$ 0.86 & 3.07 $\pm$ 0.10 & 3.28 $\pm$ 1.04 & 3.47 $\pm$ 0.14 & 3.46 $\pm$ 0.87 & 3.20 $\pm$ 0.08 \\
		Ensemble & 0.84 $\pm$ 0.39 & 0.40 $\pm$ 0.04 & 5.06 $\pm$ 0.90 & 6.10 $\pm$ 0.18 & 4.44 $\pm$ 0.78 & 7.49 $\pm$ 0.18 & 2.78 $\pm$ 0.90 & 3.04 $\pm$ 0.08 & 3.12 $\pm$ 1.12 & 3.20 $\pm$ 0.13 & 3.41 $\pm$ 0.77 & 3.27 $\pm$ 0.08 \\
		Anc. Ens. & 0.91 $\pm$ 0.33 & 0.86 $\pm$ 0.07 & 5.54 $\pm$ 0.87 & 6.36 $\pm$ 0.22 & 5.33 $\pm$ 0.69 & 7.77 $\pm$ 0.46 & 2.90 $\pm$ 0.74 & 3.09 $\pm$ 0.12 & 3.30 $\pm$ 1.17 & 3.33 $\pm$ 0.15 & 3.40 $\pm$ 0.87 & 3.16 $\pm$ 0.07 \\
		\cmidrule{1-13}
		NLM & 0.65 $\pm$ 0.26 & 0.72 $\pm$ 0.14 & 5.31 $\pm$ 0.97 & 7.01 $\pm$ 0.49 & 4.60 $\pm$ 0.94 & 8.47 $\pm$ 0.36 & 3.02 $\pm$ 0.87 & 3.21 $\pm$ 0.11 & 3.69 $\pm$ 1.51 & 3.93 $\pm$ 0.35 & 3.70 $\pm$ 0.67 & 3.68 $\pm$ 0.12 \\
		GP & 1.89 $\pm$ 0.54 & 1.37 $\pm$ 0.21 & 6.01 $\pm$ 0.89 & 6.21 $\pm$ 0.07 & 5.91 $\pm$ 0.78 & 8.01 $\pm$ 0.16 & 3.10 $\pm$ 0.91 & 3.27 $\pm$ 0.19 & 3.52 $\pm$ 1.16 & 3.32 $\pm$ 0.15 & 3.45 $\pm$ 0.78 & 3.28 $\pm$ 0.04 \\
		MCD & 0.89 $\pm$ 0.31 & 6.78 $\pm$ 0.37 & 5.09 $\pm$ 1.07 & 7.27 $\pm$ 0.40 & 4.80 $\pm$ 0.91 & 7.93 $\pm$ 0.34 & 3.45 $\pm$ 1.15 & 3.17 $\pm$ 0.10 & 3.40 $\pm$ 1.09 & 4.08 $\pm$ 0.36 & 3.69 $\pm$ 1.02 & 3.27 $\pm$ 0.18 \\
		SNGP & 1.04 $\pm$ 0.68 & 1.31 $\pm$ 1.11 & 5.15 $\pm$ 0.74 & 5.93 $\pm$ 0.17 & 5.00 $\pm$ 0.69 & 7.33 $\pm$ 0.32 & 3.07 $\pm$ 0.56 & 3.41 $\pm$ 0.18 & 3.79 $\pm$ 1.02 & 4.18 $\pm$ 0.26 & 3.75 $\pm$ 1.00 & 3.77 $\pm$ 0.21 \\
		BBVI & 17.27 $\pm$ 5.87 & 30.05 $\pm$ 2.99 & 5.68 $\pm$ 0.80 & 6.36 $\pm$ 0.07 & 24.17 $\pm$ 7.56 & 54.56 $\pm$ 4.58 & 3.47 $\pm$ 0.87 & 3.53 $\pm$ 0.05 & 3.76 $\pm$ 1.22 & 3.82 $\pm$ 0.12 & 9.16 $\pm$ 3.29 & 30.52 $\pm$ 2.84 \\
		\cmidrule{1-13}
        LUNA & 1.16 $\pm$ 0.42 & 0.57 $\pm$ 0.10 & 5.50 $\pm$ 1.32 & 7.12 $\pm$ 0.36 & 4.92 $\pm$ 0.66 & 10.13 $\pm$ 1.05 & 3.34 $\pm$ 1.09 & 3.17 $\pm$ 0.38 & 3.57 $\pm$ 1.44 & 3.92 $\pm$ 0.26 & 3.58 $\pm$ 1.09 & 3.34 $\pm$ 0.14 \\
	\end{tabular}}
	\caption{RMSE for our benchmark models on different UCI gap data sets.}
	\label{tab:uci_gap_rmse}
\end{table*}

\begin{table*}
    \centering
    \text{Avg. Log-Likelihood}
    \resizebox{\textwidth}{!}{
    \begin{tabular}{ccccccccccccc}
        \toprule
        & \multicolumn{2}{c}{Yacht - FROUDE} & \multicolumn{2}{c}{Concrete - CEMENT} 
        & \multicolumn{2}{c}{Concrete - SUPER} & \multicolumn{2}{c}{Boston - RM}
        & \multicolumn{2}{c}{Boston - LSTAT} & \multicolumn{2}{c}{Boston - PTRATIO} \\
		\cmidrule(lr){2-3} \cmidrule(lr){4-5} \cmidrule(lr){6-7} \cmidrule(lr){8-9} \cmidrule(lr){10-11} \cmidrule(lr){12-13}
		& Not Gap & Gap & Not Gap & Gap & Not Gap & Gap & Not Gap & Gap & Not Gap & Gap & Not Gap & Gap \\
		\cmidrule{2-13}
		Ens. Boot. & N/A & N/A & N/A & N/A & N/A & N/A & N/A & N/A & N/A & N/A & N/A & N/A \\
		Ensemble & N/A & N/A & N/A & N/A & N/A & N/A & N/A & N/A & N/A & N/A & N/A & N/A \\
		Anc. Ens. & N/A & N/A & N/A & N/A & N/A & N/A & N/A & N/A & N/A & N/A & N/A & N/A \\
		\cmidrule{1-13}
		NLM & -1.29 $\pm$ 0.90 & -1.45 $\pm$ 0.52 & -3.15 $\pm$ 0.26 & -3.71 $\pm$ 0.19 & -2.97 $\pm$ 0.25 & -4.29 $\pm$ 0.16 & -2.56 $\pm$ 0.30 & -2.58 $\pm$ 0.03 & -2.82 $\pm$ 0.70 & -2.79 $\pm$ 0.12 & -2.73 $\pm$ 0.25 & -2.72 $\pm$ 0.04 \\
		GP & -1.76 $\pm$ 0.30 & -1.56 $\pm$ 0.04 & -3.18 $\pm$ 0.14 & -3.17 $\pm$ 0.01 & -3.19 $\pm$ 0.19 & -3.40 $\pm$ 0.01 & -2.53 $\pm$ 0.12 & -2.60 $\pm$ 0.02 & -2.62 $\pm$ 0.18 & -2.61 $\pm$ 0.02 & -2.63 $\pm$ 0.19 & -2.63 $\pm$ 0.01 \\
		MCD & -1.13 $\pm$ 0.27 & -32.49 $\pm$ 12.07 & -2.94 $\pm$ 0.18 & -3.54 $\pm$ 0.06 & -2.96 $\pm$ 0.18 & -3.71 $\pm$ 0.06 & -2.59 $\pm$ 0.22 & -2.52 $\pm$ 0.02 & -2.59 $\pm$ 0.20 & -2.69 $\pm$ 0.11 & -2.59 $\pm$ 0.16 & -2.58 $\pm$ 0.05 \\
		SNGP & -4.35 $\pm$ 5.66 & -8.14 $\pm$ 13.55 & -3.12 $\pm$ 0.21 & -3.35 $\pm$ 0.05 & -3.07 $\pm$ 0.20 & -3.87 $\pm$ 0.14 & -2.56 $\pm$ 0.17 & -2.65 $\pm$ 0.06 & -2.81 $\pm$ 0.39 & -2.90 $\pm$ 0.10 & -2.81 $\pm$ 0.39 & -2.77 $\pm$ 0.07 \\
		BBVI & -68.40 $\pm$ 31.42 & -207.53 $\pm$ 34.99 & -3.15 $\pm$ 0.17 & -3.30 $\pm$ 0.02 & -6.74 $\pm$ 2.24 & -27.14 $\pm$ 3.45 & -2.63 $\pm$ 0.09 & -2.64 $\pm$ 0.01 & -2.70 $\pm$ 0.21 & -2.63 $\pm$ 0.02 & -3.49 $\pm$ 0.35 & -8.14 $\pm$ 1.22 \\
		\cmidrule{1-13}
        LUNA & -2.82 $\pm$ 2.09 & -0.96 $\pm$ 0.16 & -3.14 $\pm$ 0.30 & -3.55 $\pm$ 0.13 & -3.00 $\pm$ 0.13 & -4.28 $\pm$ 0.36 & -2.56 $\pm$ 0.15 & -2.54 $\pm$ 0.03 & -2.72 $\pm$ 0.44 & -2.75 $\pm$ 0.07 & -2.69 $\pm$ 0.31 & -2.65 $\pm$ 0.06 \\
	\end{tabular}}
	\caption{Average Log-Likelihood for our benchmark models on different UCI gap data sets.}
	\label{tab:uci_gap_ll}
\end{table*}

\begin{table*}
    \centering
    \text{Avg. Epistemic Uncertainty Gap-Not Gap Ratio}
    \resizebox{\textwidth}{!}{
    \begin{tabular}{ccccccc}
        \toprule
        & Yacht - FROUDE & Concrete - CEMENT
        & Concrete - SUPER & Boston - RM
        & Boston - LSTAT  & Boston - PTRATIO\\
		\cmidrule(lr){2-2} \cmidrule(lr){3-3} \cmidrule(lr){4-4} \cmidrule(lr){5-5} \cmidrule(lr){6-6} \cmidrule(lr){7-7}
        Ens. Boot. & 0.00\% $\pm$ 0.00\% & \textbf{40.85\% $\pm$ 7.60\%} & \textbf{87.64\% $\pm$ 15.79\%} & -9.22\% $\pm$ 9.57\% & 6.40\% $\pm$ 10.97\% & -1.33\% $\pm$ 8.13\% \\
        Ensemble & \textbf{26.68\% $\pm$ 11.75\%} & \textbf{120.70\% $\pm$ 26.68\%} & \textbf{163.28\% $\pm$ 26.81\%} & -5.61\% $\pm$ 11.41\% & \textbf{18.84\% $\pm$ 11.47\%} & \textbf{39.83\% $\pm$ 16.37\%} \\
        Anc. Ens. & 8.66\% $\pm$ 23.98\% & \textbf{93.03\% $\pm$ 25.66\%} & \textbf{149.31\% $\pm$ 31.30\%} & -8.94\% $\pm$ 11.15\% & 10.03\% $\pm$ 10.72\% & \textbf{31.12\% $\pm$ 22.25\%} \\
        \cmidrule{1-7}
        NLM & \textbf{24.08\% $\pm$ 17.23\%} & -5.92\% $\pm$ 3.63\% & 4.50\% $\pm$ 5.89\% & -7.47\% $\pm$ 3.57\% & -5.79\% $\pm$ 7.53\% & -3.81\% $\pm$ 6.32\% \\
        GP & \textbf{71.73\% $\pm$ 19.75\%} & \textbf{75.59\% $\pm$ 15.12\%} & \textbf{119.48\% $\pm$ 19.55\%} & -16.34\% $\pm$ 6.76\% & -2.40\% $\pm$ 9.20\% & \textbf{17.74\% $\pm$ 13.17\%} \\
        MCD & -52.43\% $\pm$ 10.17\% & \textbf{6.86\% $\pm$ 5.92\%} & \textbf{11.95\% $\pm$ 7.30\%} & -16.09\% $\pm$ 9.55\% & -11.76\% $\pm$ 9.48\% & -5.22\% $\pm$ 6.88\% \\
        SNGP & 38.33\% $\pm$ 63.97\% & -6.05\% $\pm$ 4.39\% & -2.93\% $\pm$ 4.23\% & -10.61\% $\pm$ 3.88\% & -6.43\% $\pm$ 5.46\% & -6.92\% $\pm$ 5.94\% \\
        BBVI & -10.08\% $\pm$ 6.15\% & -18.15\% $\pm$ 5.58\% & 10.85\% $\pm$ 12.68\% & -13.74\% $\pm$ 5.15\% & -31.91\% $\pm$ 3.27\% & \textbf{20.30\% $\pm$ 8.54\%} \\
        \cmidrule{1-7}
        LUNA & \textbf{59.17\% $\pm$ 36.87\%} & \textbf{55.93\% $\pm$ 28.83\%} & \textbf{416.02\% $\pm$ 197.69\%} & -11.09\% $\pm$ 5.20\% & \textbf{29.09\% $\pm$ 18.15\%} & \textbf{60.60\% $\pm$ 45.32\%} \\
	\end{tabular}}
	\caption{Ratio of predictive epistemic uncertainty for data in the gap to data out of the gap for our benchmark models on different UCI gap data sets.}
	\label{tab:uci_gap_eps_ratio}
\end{table*}

It is important to note we are primarily concerned with each model's performance on the not gap region.
This data comes from the same region of data as the train data, and is representative of each model's performance on standard regression tasks.
We see that LUNA is within error bars of NLM for both RMSE and average log-likelihood for all data sets except Yahct - Froude in tables \ref{tab:uci_gap_rmse} and  \ref{tab:uci_gap_ll}, respectively.
LUNA's RMSE is higher and average log-likelihood lower than NLM, with much larger error bars on the Yacht data set.
We see LUNA performs comparably to our other benchmark models, with Ensemble tending to have the best RMSE and GP and MCD tending to have the best average log-likelihood.
LUNA is within error bars of these models' performance.

The goal of this experiment for each model is to detect the gap region.
We have seen in section \ref{sec:nlm_pathologies} that average log-likelihood cannot reliably determine this, so we must look at the predictive uncertainties.
Epistemic uncertainty is calculated by taking predictive uncertainty and subtracting the aleatoric uncertainty, or data noise.
This data noise is set for each Bayesian model and was hand tuned according to the procedure in Appendix \ref{sec:uci-gap-examples}.
Because epistemic uncertainty lacks a solid interpretation in this context, we look at the ratio of epistemic uncertainty in the gap to the epistemic uncertainty not in the gap, seen in Table \ref{tab:uci_gap_eps_ratio}.
Raw values are provided in Appendix \ref{sec:appendix_uci_gap}.
A ratio that is at least one standard deviation above 0\% is considered detection.
We see, first that none of the models are able to detect this region for the Boston - RM data set.
We offer an explanation for this in Appendix \ref{sec:appendix_uci_gap}.
Only LUNA and Ensemble are able to detect this gap region for the other data sets, with LUNA being the only Bayesian model.
This offers substantial improvement over the traditional NLM, which only detects the gap for the Yacht - Froude data sets.

\section{Bayesian Optimization}
\label{sec:bayesopt_experiments}
We used common BayesOpt benchmarks to evaluate the usefulness of our uncertainties.
These benchmarks were adapted from HPOLib 1.5 \citep{eggensperger_toward} and represent a variety of tasks that are difficult or impossible for traditional optimization techniques.
BayesOpt works by fitting a surrogate function to our samples, then maximizing an acquisition function to select our next sample.
The acquisition function aims to balance exploitation and exploration of our function space.
In this case, we used the expected improvement acquisition function \cite{Mockus10.1007/3-540-07165-2_55}.
This acquisition function uses the predictive mean and uncertainty of our surrogate function to select the next point at which to evaluate our target function.
Good uncertainty here is critical for the acquisition function to take good samples.
Our surrogate function can be thought of as defining a distribution over diferent functions that our acquitision function then chooses from.
The distribution of functions needs to be reasonable in order to select good samples.
This can be seen in Figure \ref{fig:bayesopt_example}.
A more detailed exposition of Bayesopt is given in \cite{Lizotte10.5555/1626686}.

First, we used the Branin function, a 2-dimensional benchmark with multiple global minima and shallow valleys between the minima, features that traditional optimization techniques struggle with.
The function is defined as:
\begin{equation*}
    f(\textbf{x}) = \left(x_2 - \frac{5.1}{4\pi^2}x_1^2 + \frac{5}{\pi}x_1 - 6\right)^2 + 10\left(1-\frac{1}{8\pi}\right)\cos \left(x_1\right) + 10
\end{equation*}
The input domain used is the square $x_1 \in [-5, 10]$ and $x_2 \in [0, 15]$.
In this domain, the global minima occur at $\textbf{x}^* = (-\pi, 12.275)$, $(\pi, 2.275)$, and $(9.42478, 2,475)$, with minimum $f(\textbf{x}^*) = 0.397887$.

The Hartmann6 function was also used and is a higher dimensional function on a small domain.
It is defined as:
\begin{equation*}
    f(\textbf{x}) = -\sum_{i=1}^{4}\alpha_i \exp\left( -\sum_{j=1}^{6} A_{ij}\left(x_j - P_{ij} \right)^2 \right)
\end{equation*}
where $\alpha = [1.0, 1.2, 3.0, 3.2]^T$,
    
$$\mathbf{A} = \begin{bmatrix}
10 & 3 & 17 & 3.5 & 1.7 & 8 \\
0.05 & 10 & 17 & 0.1 & 8 & 14 \\
3 & 3.5 & 1.7 & 10 & 17 & 8 \\
17 & 8 & 0.05 & 10 & 0.1 & 14
\end{bmatrix}$$

$$\mathbf{P} = 
10^-4 \begin{bmatrix}
1312 & 1696 & 5569 & 124 & 8283 & 5886 \\
2329 & 4135 & 8307 & 3736 & 1004 & 9991 \\
2348 & 1451 & 3522 & 2883 & 3047 & 6650 \\
4047 & 8828 & 8732 & 5743 & 1091 & 381 
\end{bmatrix}
$$
The input domain used is the hypercube $x_i \in (0, 1)$ for all $x_i$, with global minimum 
$f(\textbf{x}^*) = -3.32237$ for
\begin{gather*}
\textbf{x}^* = (0.20169, 0.150011, 0.476874, 0.275332, 0.311652, 0.6573)
\end{gather*}
The Hartmann6 function has many local optima where traditional methods of optimization would get stuck.

\begin{figure}
    \centering
    \includegraphics[width=1.0\linewidth]{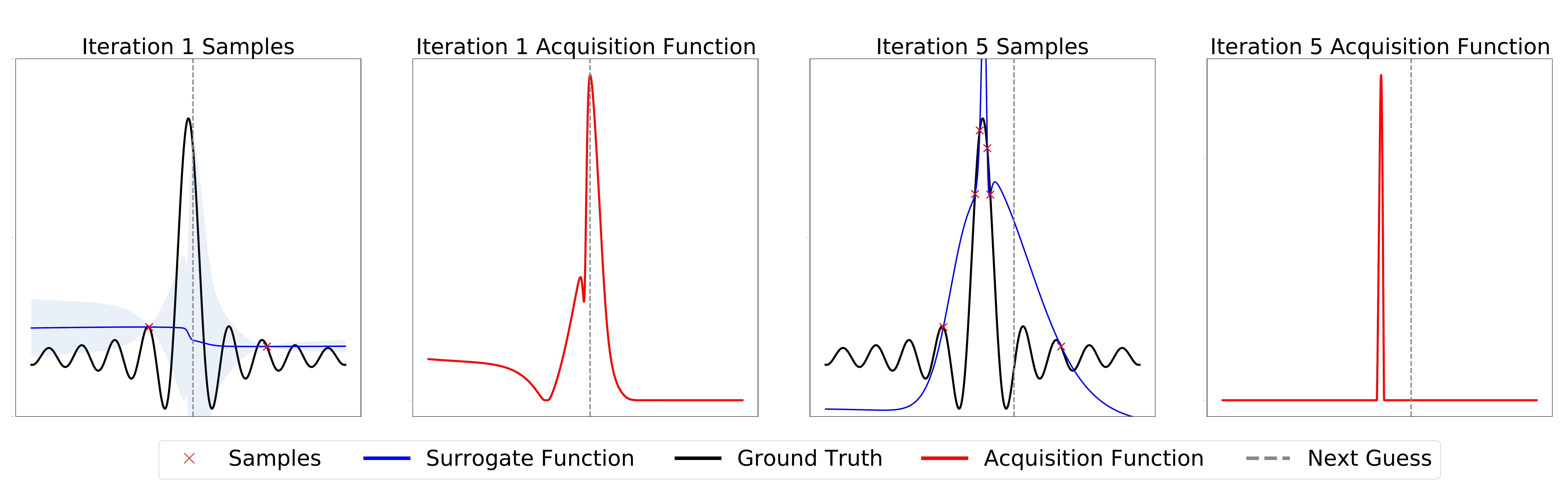}
    \caption{On this simple example, Bayesian Optimization using NLM is able to quickly reach the optimal value.}
    \label{fig:bayesopt_example}
\end{figure}

A popular application for BayesOpt is hyperparameter tuning.
In most cases it is impossible or intractable to backpropagate gradients through model training and prediction.
As such, gradient-based optimization methods cannot be used in this context.
We used each model to optimize classification models for MNIST data.
The SVM model was optimized over the regularization parameter and kernel coefficient, where the domain for each parameter is $[-10, 10]$ on the log-scale.
Due to computational complexity, the MNIST data was subsampled to contain equal parts 1's, 5's, 8's, and 9's, with 4,000 points in the train set, 1,000 points in the validation set, and 1,000 points in the test set.
Similarly, the logistic benchmark is a logistic regression classifier where the learning rate, $l_2$-regularization, batch size, and dropout ratio on inputs was tuned.
The learning rate domain is $[-6, 0]$ on the log-scale, the $l_2$-regularization domain is $[0,1]$, the batch size domain is $[20,2000]$, and the dropout ratio domain is $[0. 0.75]$.

The results are reported as error, that is $\left| f(\mathbf{x}) - f(\mathbf{x}^*) \right|$
where classification global minimum is at 0.
We used our feasible baseline models from before.

We see all of the models do well on the Branin, SVM , and logistic benchmarks, with GP performing slightly better than the others on logistic. The Hartmann6 benchmark shows a difference between models, where we see a GP performs the best, with LUNA having slightly higher error.

\begin{table*}
    \centering
    \text{Bayesian Optimization}
	\resizebox{\textwidth}{!}{
    \begin{tabular}{cc||c|cccccccc}
        \toprule
        Function & Steps & LUNA & GP & NLM & MCD & ANC. ENS. & ENS. BOOT. & Ensemble & SNGP & BBVI \\
        \midrule
		branin & 50 & 
		0.01 $\pm$ 0.00 & 	 
		0.00 $\pm$ 0.00 & 	 
		0.01 $\pm$ 0.01 & 	 
		0.01 $\pm$ 0.01 & 	 
		0.01 $\pm$ 0.01 & 	 
		0.06 $\pm$ 0.12 & 	 
		0.01 $\pm$ 0.02 & 	 
		0.00 $\pm$ 0.00 & 	 
		0.01 $\pm$ 0.00  	 
		 \\ 
		hartmann6 & 200 & 
		0.32 $\pm$ 0.02 & 	 
		0.01 $\pm$ 0.00 & 	 
		0.57 $\pm$ 0.44 & 	 
		0.76 $\pm$ 0.25 & 	 
		0.23 $\pm$ 0.21 & 	 
		0.65 $\pm$ 0.28 & 	 
		0.68 $\pm$ 0.28 & 	 
		0.22 $\pm$ 0.00 & 	 
		0.71 $\pm$ 0.23 	 
		\\ 
		svm & 30 & 
		1.19 $\pm$ 0.12 & 	 
		1.20 $\pm$ 0.00 & 	 
		1.20 $\pm$ 0.06 & 	 
		1.10 $\pm$ 0.00 & 	 
		1.18 $\pm$ 0.12 & 	 
		1.11 $\pm$ 0.18 & 	 
		1.13 $\pm$ 0.05 & 	 
		1.19 $\pm$ 0.14 & 	 
		1.30 $\pm$ 0.35  	 
		 \\ 
		logistic & 30 & 
		7.64 $\pm$ 0.06 & 	 
		7.40 $\pm$ 0.00 & 	 
		7.64 $\pm$ 0.10 & 	 
		7.91 $\pm$ 0.29 & 	 
		7.66 $\pm$ 0.09 & 	 
		7.64 $\pm$ 0.08 & 	 
		7.59 $\pm$ 0.07 & 	 
		7.64 $\pm$ 0.07 & 	 
		7.92 $\pm$ 0.33  	 
		 \\ 
    \end{tabular}
    }
    \caption{Results for several Bayesian optimization benchmarks.}
    \label{tab:bayesopt}
\end{table*}

\section{Detecting Sampling Bias in Data}
We study a case where the predictive uncertainties of LUNA models can be used to detect sampling bias in a dataset (full details in Appendix \ref{sec:training}). 
For this task, we use LUNA trained NLMs with a Resnet18 architecture to perform age regression on the Wikipedia faces dataset, containing 62,328 facial images of actors. To study LUNA’s performance on out-of-distribution data, we train on 26,375 faces of only male actors and test on 10,918 male (in-distribution) and 10918 female (out-of-distribution) faces. On the training data, we obtain an MAE (mean absolute error) of 9.52, while on in-distribution test data we obtain an MAE of 10.22 and on out-of-distribution test data an MAE of 11.78 (comparable with the performance of a vanilla Resnet18 trained for age regression). At the same time, the epistemic uncertainty is on average 14\% higher on the in-distribution test data than on training data, whereas the epistemic uncertainty is on average 168\% higher on the out-of-distribution test data than on training data. In a separate experiment, we train on 28271 faces of individuals who are younger than 30 or older than 40, then we test on 9424 in-distribution faces and 10376 faces of individuals between the ages of 30-40 (out-of-distribution). On this task, we again see higher average epistemic uncertainty on out-of-distribution test data (27\% increase) than on in-distribution test data (2.02\% increase).  

In these cases, we show that the predictive uncertainty provided by LUNA trained models can be used to identify test data from underrepresented sub-populations in the training data; predictions for such out-of-distribution test data can then be deferred to human experts. This task also shows that LUNA can leverage structured data more easily than GP models by using task-appropriate network architecture.

\chapter{Discussion}\label{ch:6}
The contributions of this work are multifaceted.
First, shortcomings of the NLM training for uncertainty quantification are explained.
Second, a novel training framework for NLM models is proposed.
Third, an uncertainty benchmark is proposed that allows easy comparison of multiple aspects of uncertainty across models.
Many experiments were run that demonstrate the efficacy of the UNA framework.

\section{Uncertainty}
\label{sec:uncertainty_discussion}
While prediction uncertainty is a fairly straightforward concept, what it means for that uncertainty to be 'good' is hard to quantify due to task and prior knowledge dependence.
A benchmark of uncertainty should, first, have easily definable ideal behavior.
Second, the benchmark should capture multiple aspects of 'good' uncertainty.
Third, the benchmark should scale to multiple dimensions in order to create more challenging tasks.
Therein lies the RUB.
The RUB has all of these characteristics, and shows LUNA is the only method explored here to be able to match GP uncertainty performance.

More work can be done developing additional benchmarks that target a specific aspect of a model's uncertainty.
The 

\section{UNA}
\label{sec:una_discussion}
The shortcomings of the NLM are seen in section \ref{sec:nlm}.
Largely, these shortcomings come in the form of uncertainty estimates.
Seen in Chapter \ref{ch:5}, the NLM tends to underestimate uncertainty when compared to the gold-standard methods.
Additionally, the NLM does not allow for encoding a priori knowledge.
Due to these shortcomings, a novel training framework is necessary for desireable uncertainty properties.
Both shortcomings are addressed by UNA.

The two instantiations of UNA (LUNA and TUNA) are designed to capture different GP-like model properties.
Training for both instantiations is tractable and scalable.

There are several lines of future work:
Firstly, both LUNA and TUNA require additional hyper-parameters - in addition to choosing a number of regressors $M$, LUNA requires a choice of diversity-penalty strength, and TUNA's target reference functions may require some hyper-parameters.
Although the number of additional hyperparameters introduced is modest, it would be worthwhile to be able to set these hyper-parameters based on properties of the data alone (i.e. without selection on a validation set).
Additionally, when TUNA's reference functions come from a GP prior, it would be ideal to tune the GP prior's hyperparameters during TUNA training. 

Empirically, we find that the capacity needed for TUNA training depends less on the number of reference functions (modest size neural networks can easily capture 100+ simple reference functions) and more on the complexity of the reference functions. Thus, to match the performance of LUNA on regression tasks, TUNA may need extra capacity (which it uses to encode specific functional knowledge). In the future, we want to quantify the trade-off between capacity and complexity of inductive bias between LUNA and TUNA.

The diversity penalty in LUNA training currently assumes a single-output model. In future work, we plan to extend LUNA to multi-output data in one of several ways: (1) measure diversity per output dimension, (2) penalize the Jacobian of the regressors or (3) use alternative penalties. 
Lastly, here we focus on commonly used ReLU activations; in practice, we find UNA works well with all activations.



\begin{singlespacing}
  \renewcommand{\bibname}{References}

  \bibliographystyle{ecca}
  \bibliography{main}
\end{singlespacing}

\begin{appendices}
\label{sec:appendix}

\chapter{Appendix to Chapter \ref{ch:4}}
\label{ch:una_appendix}
In proposing a novel framework, it is important to understand the effect of hyperparameters.
Here, the effect of various hyperparamters, and architectures are explored, as well as the effect of random restarts.
It is important to note this is not an exhaustive exploration.

\section{Neural Linear Model}
This experiment was run with a 2-layer ReLU network with 50 and 20 neurons in the first and second layers respectively (20 features). We used MAP training. With no regularization and very noisy priors, the NLM is able to model in-between uncertainty, albeit inconsistently, seen in Figure \ref{fig:rr_reg}. With regularization, we see the priors are not expressive enough and the NLM fails to ever capture in-between uncertainty.
\label{sec:appendix_nlm}
\begin{figure*}[p]
    \centering
    \includegraphics[width=\textwidth]{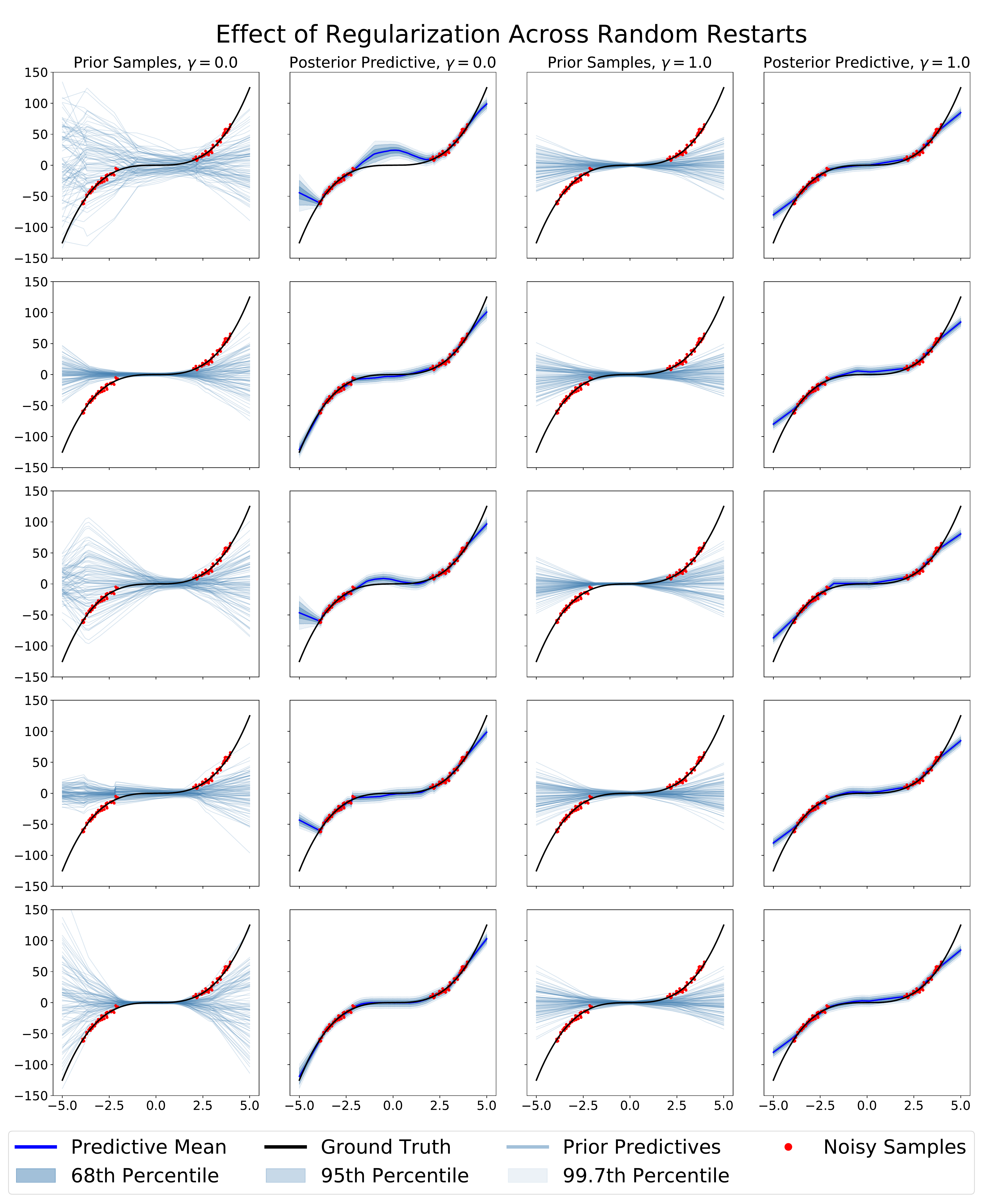}
    \caption{NLM is inconcistent in capturing the in-between uncertainty.}
    \label{fig:rr_reg}
\end{figure*}

This experiment was run with a 2-layer ReLU network with 50 neurons in the first layer and without regularization. We used MAP training. The number of neurons in the second layer correspond to the number of features. We see clearly as model capacity increases NLM better fits the data, seen in Figure \ref{fig:rr_features}. However, this increased capacity still fails to consistently model in-between uncertainty.
\begin{figure*}[p]
    \centering
    \includegraphics[width=\textwidth]{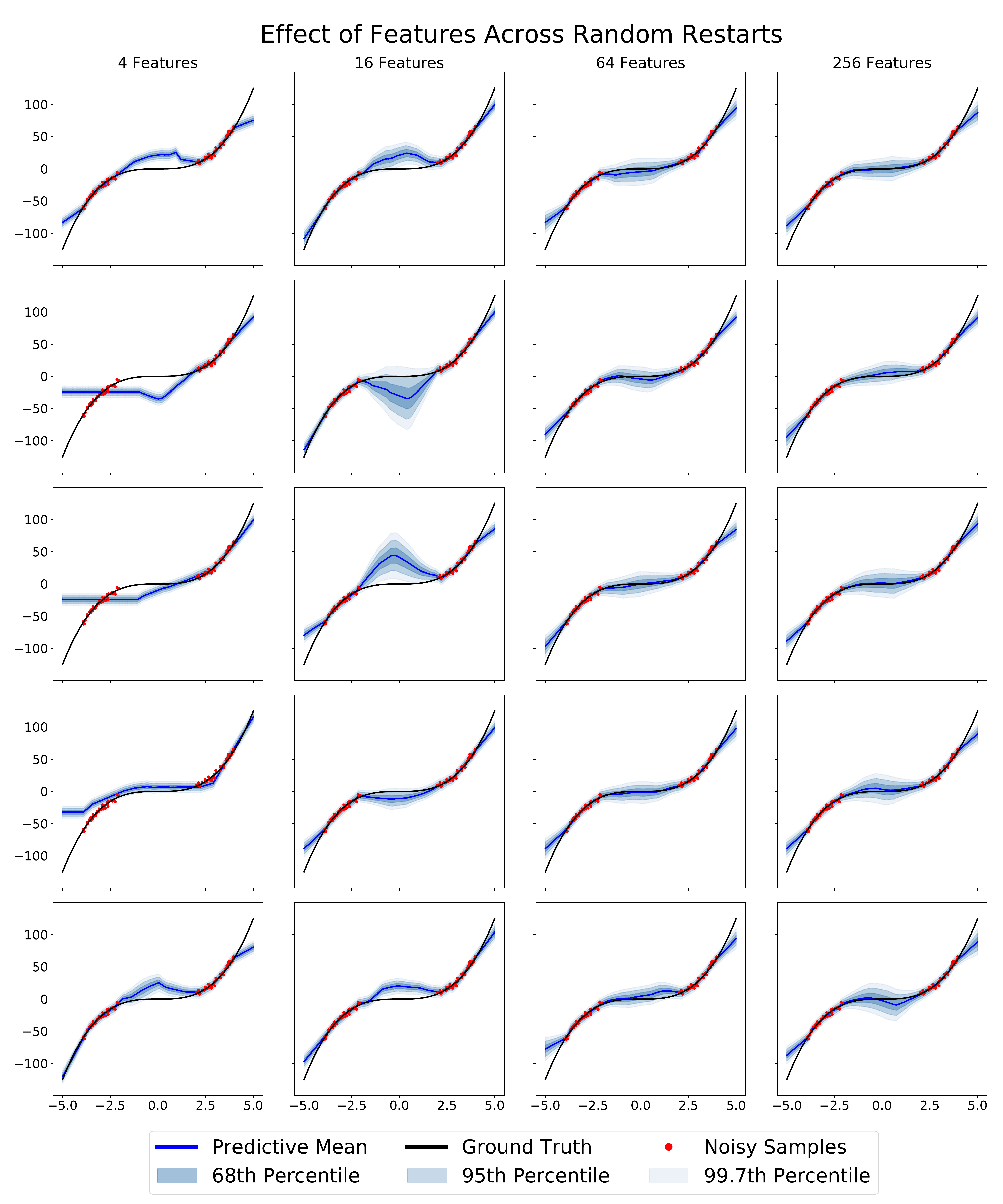}
    \caption{NLM is inconsistent in capturing hte in-between uncertainty even with increase capacity.}
    \label{fig:rr_features}
\end{figure*}

This experiment was run with a 2-layer ReLU network with 50 neurons in the first hidden layers and 20 neurons in the last hidden layer (20 features). We used MAP training. We see that NLM is able to capture more complex relationships as capacity increases, but this increased capacity does not lead to consistent in-between uncertainty in Figure \ref{fig:rr_depth}. Additionally, these added layers make NLM susceptible to overfitting.
\begin{figure*}[p]
    \centering
    \includegraphics[width=\textwidth]{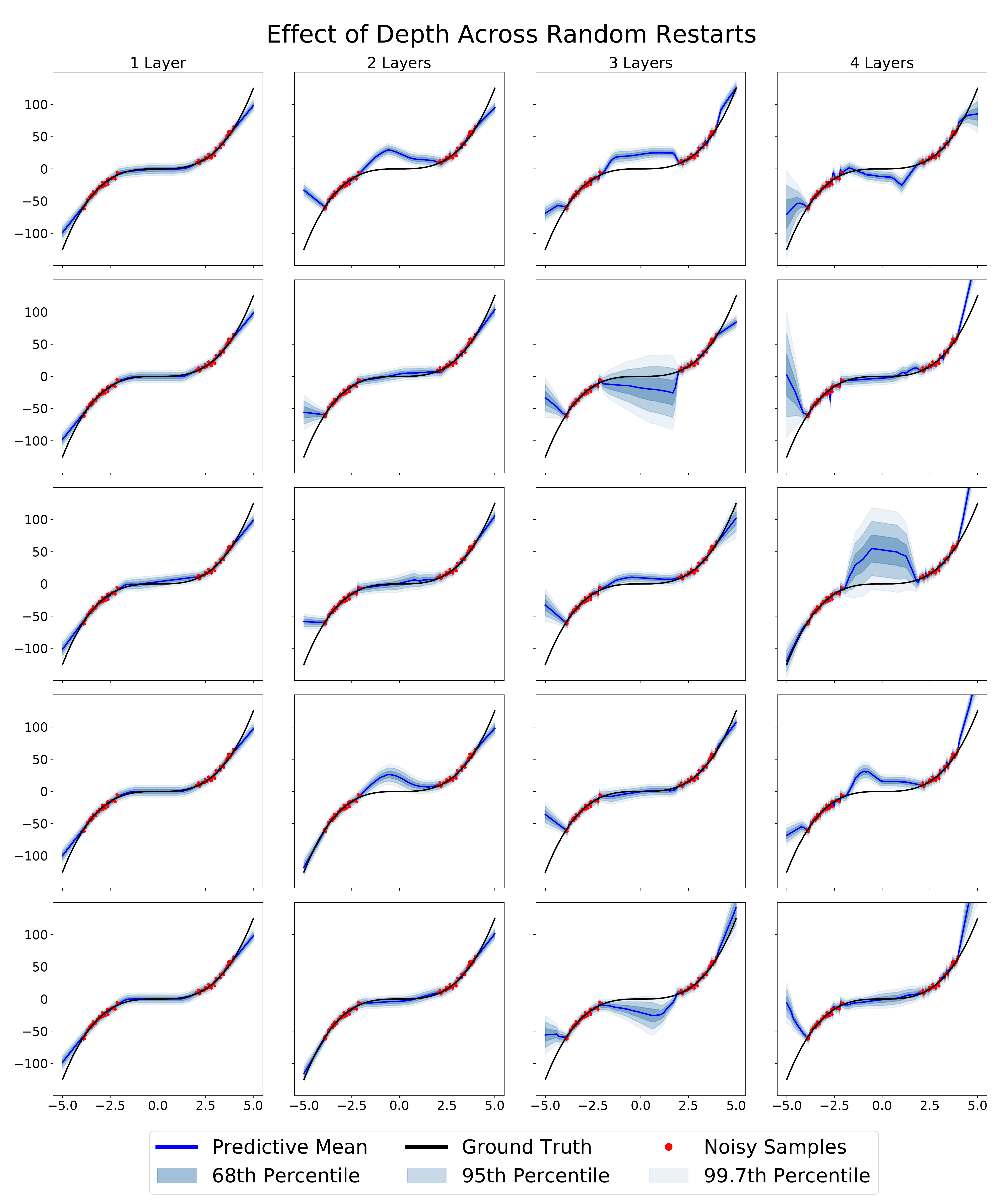}
    \caption{Deeper NLMs are still inconsistent in capturing in-between uncertainty.}
    \label{fig:rr_depth}
\end{figure*}

This experiment was run with a 2-layer ReLU network with 50 and 20 neurons in the first and second layers respectively (20 features). We used marginal likelihood training and a smaller $\gamma=0.1$. We see that this NLM is able to capture higher in-between uncertainty when $\alpha$ is high enough in Figure \ref{fig:marg_highgamma}, but is inconsistent in doing so.
\begin{figure*}[p]
    \centering
    \includegraphics[width=\textwidth]{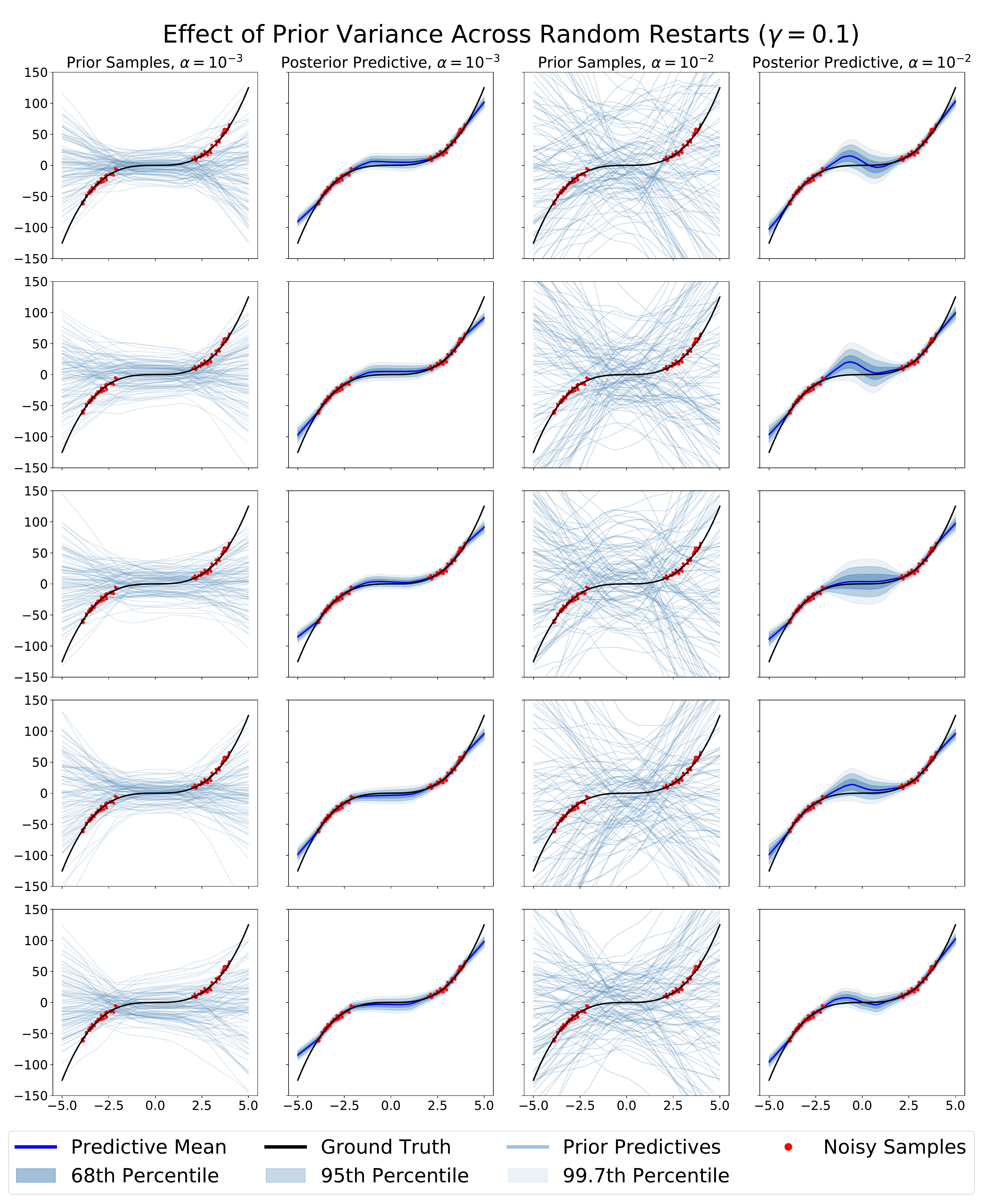}
    \caption{NLM is inconsistent in capturing the in-between uncertainty, even with higher $\alpha$.}
    \label{fig:marg_lowgamma}
\end{figure*}

This experiment was run with a 2-layer ReLU network with 50 and 20 neurons in the first and second layers respectively (20 features). We used marginal likelihood training and a larger $\gamma=1.0$. We see that this NLM is unable to capture higher in-between uncertainty even when $\alpha$ is high in Figure \ref{fig:marg_highgamma}.
\begin{figure*}[p]
    \centering
    \includegraphics[width=\textwidth]{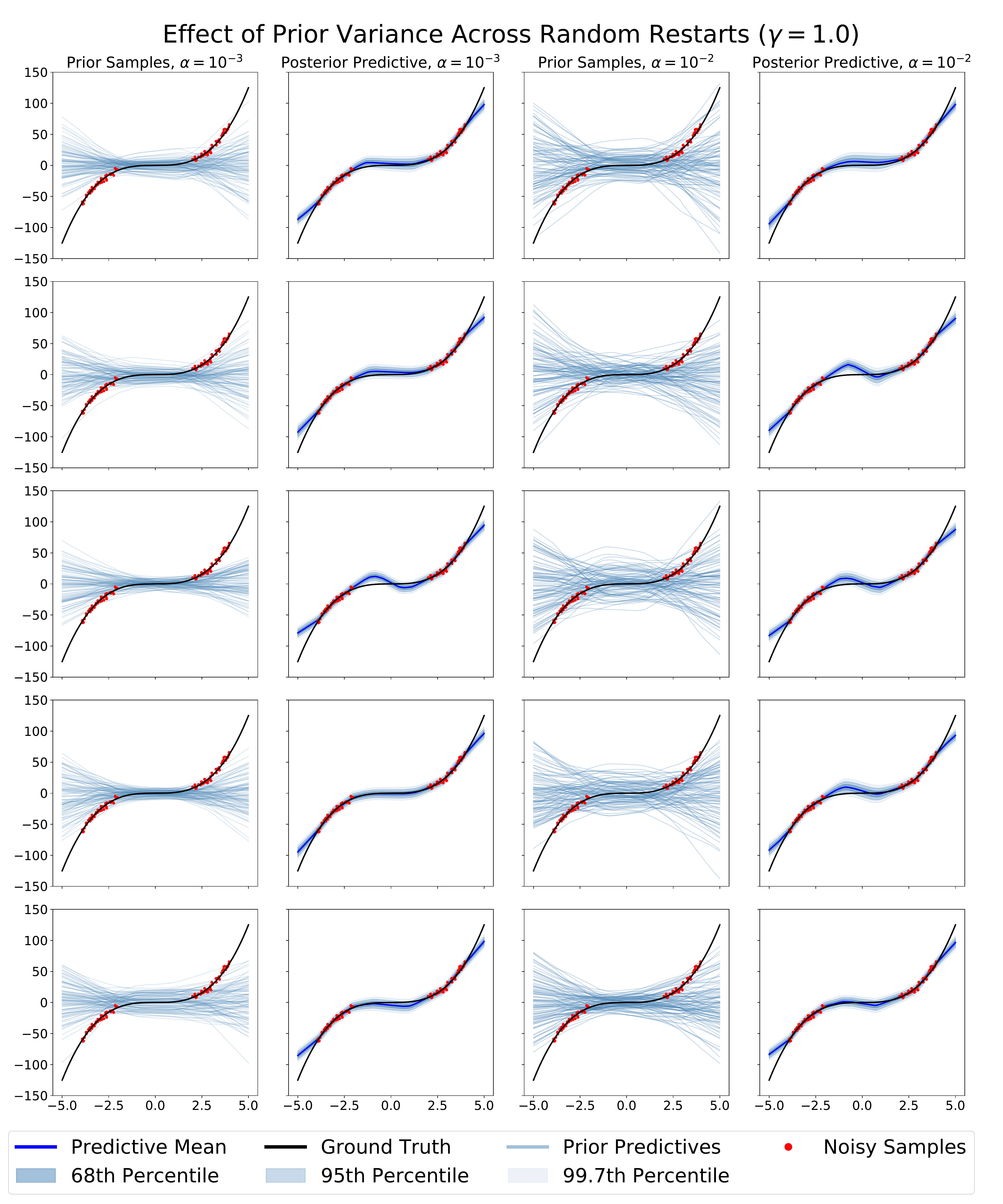}
    \caption{NLM is inconsistent in capturing the in-btween uncertainty, even with higher $\alpha$.}
    \label{fig:marg_highgamma}
\end{figure*}

\section{LUNA}
\label{sec:appendix_luna}

This experiment was run with a 2-layer ReLU network with 50 and 20 neurons in the first and second layers respectively (20 features). We used MAP training. We see that LUNA is able to capture uncertainty with diverse regressors. The regressors extrapolate neatly away from the data when there are 2 in Figure \ref{fig:aux_rr}. The regressors form a very diverse basis, with some fitting the data and extrapolating away from each other, and others approximating the means on each side of the gap.
\begin{figure*}[p]
    \centering
    \includegraphics[width=\textwidth]{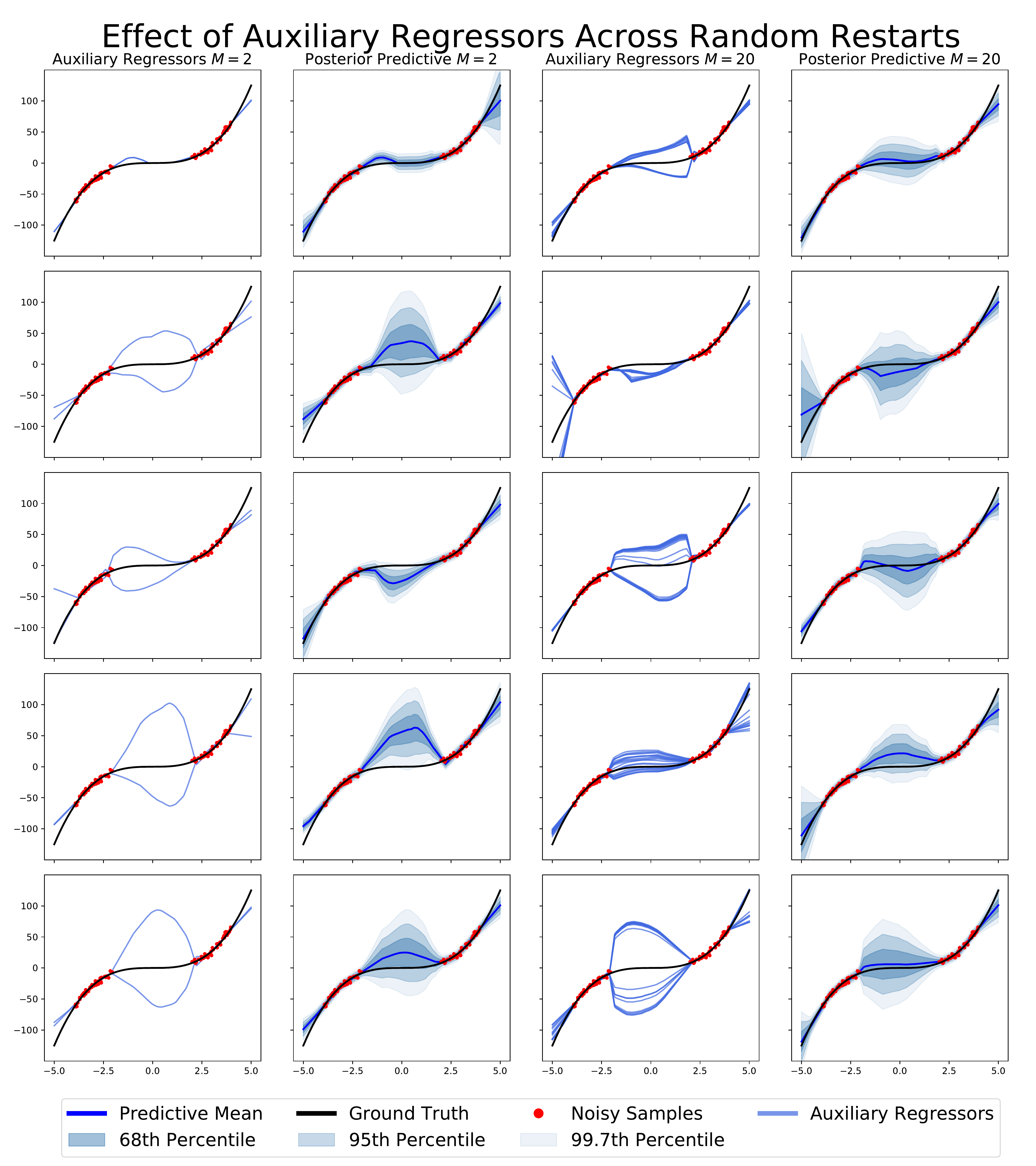}
    \caption{LUNA's auxiliary regressors consistently fit the data well and extrapolate differently away from the data.}
    \label{fig:aux_rr}
\end{figure*}

\section{TUNA}
\label{sec:appendix_tuna}

This experiment was run with a 2-layer ReLU network with 50 and 20 neurons in the first and second layers respectively (20 features). We used MAP training and an RBF kernel with length scale $l$. We see that TUNA is able to consistently capture uncertainty in the data scarce region when the auxiliary regressors are diverse. When the auxiliary regressors are not diverse, we see that TUNA fails to capture in-between uncertainty in Figure \ref{fig:kernel_rr}. \ref{fig:kernel_rr}.
\begin{figure*}[p]
    \centering
    \includegraphics[width=\textwidth]{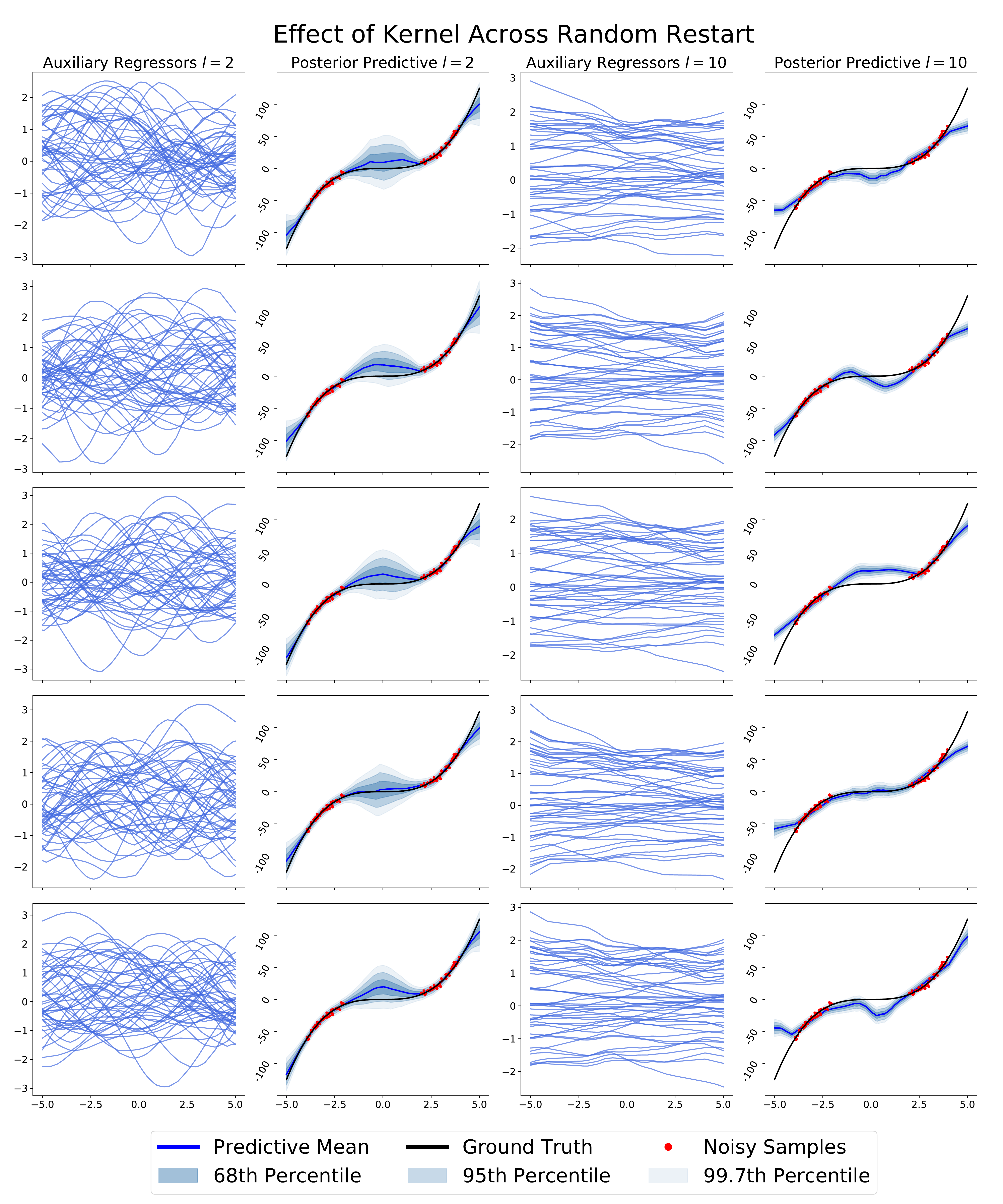}
    \caption{TUNA is able to consistently capture in-between uncertainty with appropriate kernel length scale.}
    \label{fig:kernel_rr}
\end{figure*}

\chapter{Appendix to Chapter \ref{ch:5}}\label{cha:append-chapt-refch:5}
\section{Experimental Results} \label{sec:appendix_experiment_results}
\subsection{Toy Experiments} \label{sec:appendix_toy}
Additional toy data experiments were run to showcase TUNA's predictive performance when compared to GPs.

\subsection{UCI Gap} \label{sec:appendix_uci_gap}

The raw values for epistemic uncertainty are given in Table \ref{tab:uci_gap_epistemic_values}.
We see that LUNA tends to have larger values of epistemic uncertainty in both the gap and not gap regions.
This is expected because we are explicitly penalizing the auxiliary regressors to be diverse.
All models have epistemic uncertainty within the same order of magnitude, with BBVI tending to have the largest.

\begin{table*}
    \centering
    \text{Avg. Epistemic Uncertainty}
    \resizebox{\textwidth}{!}{
    \begin{tabular}{ccccccccccccc}
        \toprule
        & \multicolumn{2}{c}{Yacht - FROUDE} & \multicolumn{2}{c}{Concrete - CEMENT} 
        & \multicolumn{2}{c}{Concrete - SUPER} & \multicolumn{2}{c}{Boston - RM}
        & \multicolumn{2}{c}{Boston - LSTAT} & \multicolumn{2}{c}{Boston - PTRATIO} \\
		\cmidrule(lr){2-3} \cmidrule(lr){4-5} \cmidrule(lr){6-7} \cmidrule(lr){8-9} \cmidrule(lr){10-11} \cmidrule(lr){12-13}
		& Not Gap & Gap & Not Gap & Gap & Not Gap & Gap & Not Gap & Gap & Not Gap & Gap & Not Gap & Gap \\
		\cmidrule{2-13}
		Ens. Boot. & 0.73 $\pm$ 0.11 & 0.58 $\pm$ 0.03 & 3.63 $\pm$ 0.24 & 5.10 $\pm$ 0.15 & 4.46 $\pm$ 0.39 & 8.32 $\pm$ 0.23 & 1.54 $\pm$ 0.18 & 1.38 $\pm$ 0.04 & 1.59 $\pm$ 0.20 & 1.67 $\pm$ 0.08 & 1.61 $\pm$ 0.17 & 1.58 $\pm$ 0.08 \\
		Ensemble & 0.39 $\pm$ 0.03 & 0.49 $\pm$ 0.02 & 2.10 $\pm$ 0.23 & 4.58 $\pm$ 0.21 & 1.92 $\pm$ 0.20 & 5.01 $\pm$ 0.15 & 0.92 $\pm$ 0.12 & 0.85 $\pm$ 0.02 & 0.92 $\pm$ 0.10 & 1.09 $\pm$ 0.03 & 0.90 $\pm$ 0.14 & 1.23 $\pm$ 0.04 \\
		Anc. Ens. & 0.60 $\pm$ 0.23 & 0.63 $\pm$ 0.19 & 2.76 $\pm$ 0.38 & 5.25 $\pm$ 0.29 & 2.53 $\pm$ 0.36 & 6.23 $\pm$ 0.49 & 1.18 $\pm$ 0.16 & 1.06 $\pm$ 0.05 & 1.10 $\pm$ 0.10 & 1.20 $\pm$ 0.08 & 1.13 $\pm$ 0.19 & 1.44 $\pm$ 0.07 \\
		\cmidrule{1-13}
		NLM & 0.12 $\pm$ 0.01 & 0.15 $\pm$ 0.02 & 0.77 $\pm$ 0.05 & 0.73 $\pm$ 0.04 & 0.74 $\pm$ 0.05 & 0.78 $\pm$ 0.04 & 0.37 $\pm$ 0.03 & 0.34 $\pm$ 0.03 & 0.90 $\pm$ 0.07 & 0.85 $\pm$ 0.06 & 0.76 $\pm$ 0.07 & 0.73 $\pm$ 0.04 \\
		GP & 0.94 $\pm$ 0.13 & 1.59 $\pm$ 0.10 & 3.04 $\pm$ 0.25 & 5.31 $\pm$ 0.07 & 2.76 $\pm$ 0.27 & 6.01 $\pm$ 0.13 & 1.88 $\pm$ 0.14 & 1.57 $\pm$ 0.02 & 1.76 $\pm$ 0.15 & 1.71 $\pm$ 0.06 & 1.93 $\pm$ 0.25 & 2.23 $\pm$ 0.08 \\
		MCD & 1.61 $\pm$ 0.23 & 0.75 $\pm$ 0.08 & 1.36 $\pm$ 0.07 & 1.45 $\pm$ 0.03 & 1.29 $\pm$ 0.08 & 1.44 $\pm$ 0.03 & 0.79 $\pm$ 0.08 & 0.65 $\pm$ 0.02 & 0.80 $\pm$ 0.07 & 0.70 $\pm$ 0.04 & 0.81 $\pm$ 0.07 & 0.77 $\pm$ 0.03 \\
		SNGP & 0.07 $\pm$ 0.01 & 0.10 $\pm$ 0.05 & 0.33 $\pm$ 0.03 & 0.31 $\pm$ 0.03 & 0.28 $\pm$ 0.04 & 0.27 $\pm$ 0.04 & 0.29 $\pm$ 0.04 & 0.26 $\pm$ 0.04 & 0.56 $\pm$ 0.05 & 0.52 $\pm$ 0.06 & 0.29 $\pm$ 0.03 & 0.27 $\pm$ 0.02 \\
		BBVI & 1.60 $\pm$ 0.11 & 1.43 $\pm$ 0.02 & 2.79 $\pm$ 0.16 & 2.28 $\pm$ 0.09 & 6.34 $\pm$ 0.35 & 7.00 $\pm$ 0.70 & 2.48 $\pm$ 0.20 & 2.13 $\pm$ 0.05 & 2.50 $\pm$ 0.18 & 1.70 $\pm$ 0.08 & 5.70 $\pm$ 0.38 & 6.84 $\pm$ 0.45 \\
		\cmidrule{1-13}
        LUNA & 0.44 $\pm$ 0.10 & 0.68 $\pm$ 0.14 & 1.32 $\pm$ 0.15 & 2.05 $\pm$ 0.37 & 1.45 $\pm$ 0.31 & 7.29 $\pm$ 2.61 & 1.12 $\pm$ 0.09 & 1.00 $\pm$ 0.07 & 1.81 $\pm$ 0.25 & 2.30 $\pm$ 0.22 & 1.14 $\pm$ 0.09 & 1.83 $\pm$ 0.52 \\
	\end{tabular}}
	\caption{Average Epistemic Uncertainty in the Not Gap and Gap regions.}
	\label{tab:uci_gap_epistemic_values}
\end{table*}

We do see that none of the models are able to detect the gap region for the Boston - RM data set.
We see in Figure \ref{fig:boston_rm} that the gap region is quite thin relative to the rest of the data.
The width of the gap region is approximately 0.5, where as the data has a range of 6.
Additionally, the gap occurs in a region of fairly steep signal.
The combination of these two factors makes detecting the gap region much more difficult than in other data sets, even from the same parent data set, such as Boston - LSTAT, where the gap is substantially wider, or Concrete - CEMENT, where the gap occurs in a relatively flat region.
\begin{figure}[H]
    \centering
    \includegraphics[width=0.9\linewidth]{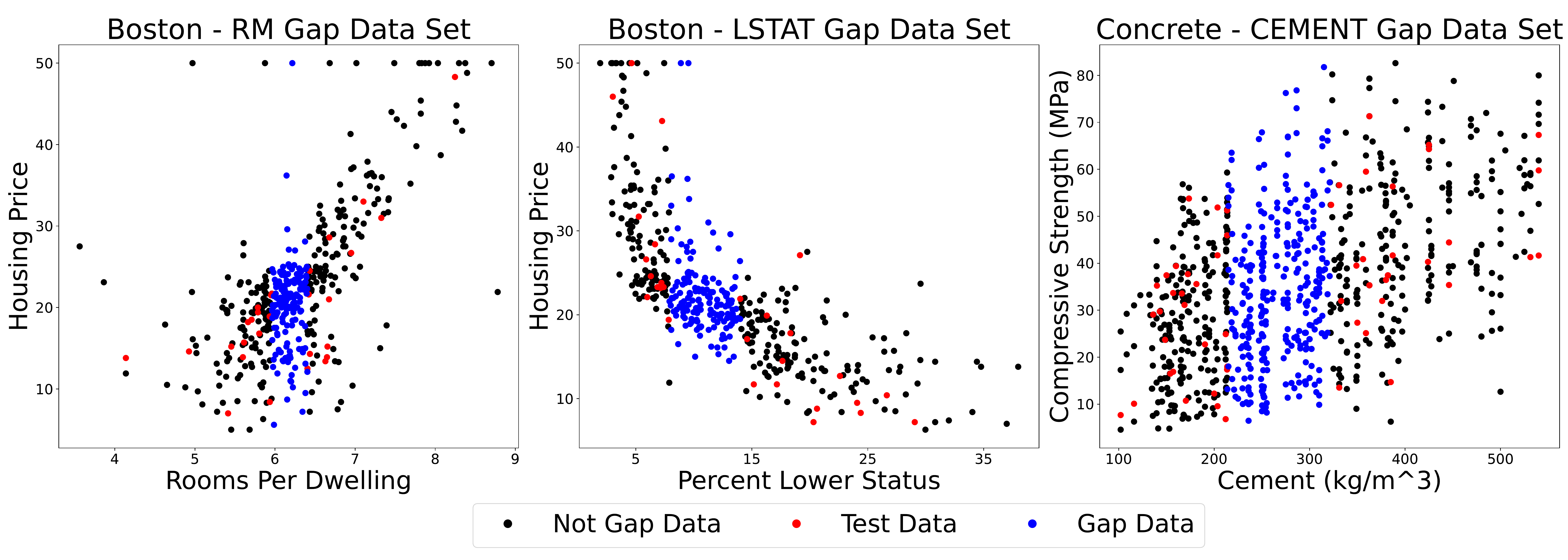}
    \caption{The Boston - RM UCI Gap Data Set is uniquely challenging for detecting the gap region.}
    \label{fig:boston_rm}
\end{figure}

\section{Experimental Setup} \label{sec:exp-setup}

\subsection{Synthetic Data} \label{sec:synthetic-data}

\paragraph{Cubic Gap Example}
Following the set-up in ~\citep{Rasmussen}, we construct a synthetic 1-D dataset comprising 100 train and 100 test pairs $(x, y)$, where $x$ is sampled uniformly in the range $[-4,-2]\cup[2,4]$ and $y$ is generated as $y =x^3 + \epsilon, \epsilon\sim\mathcal{N}(0, 3^2)$. 
All models were hand tuned using an architecture of two hidden layers with 50 nodes and ReLU activation.

\paragraph{Transfer Learning}
The transfer learning experiment was done on the squiggle gap function seen in Section \ref{sec:transfer_learning}.
100 data points were sampled from the not gap region for train, test, and validation, split using an 80-10-10 split.
10 random samples were taken for both the gap and not gap sections, and models were run on each sample with three random restarts.
Data was normalized for training and prediction, reported results were then unnormalized.
One hidden layer of 50 nodes, and one hidden layer of $N$ nodes, were used for both LUNA and NLM.
One layer of 50 nodes and $N$ Random Fourier Features were used for SNGP.
$N$ is the number of features, plotted along the x-axis in Section \ref{sec:transfer_learning}.
The Adam optimizer was used with a learning rate of $10^{-3}$ and 20000 epochs.
Hyperparameters were chosen over a grid with maximum log-likelihood on validation data used as the model selection criterion for both random restarts and on validation data.
No minibatching was used.

For LUNA, prior variance was selected over $\log \alpha \in [-1, 0, 0]$, regularization was selected over $\log \gamma \{-6, -5, -4\}$, diversity penalty was selected over $\log \lambda \in \{-4, -3\}$, or the sigmoid annealing schedule,  perturbation standard deviation was selected over $\log \sigma \in \{-2, -1\}$, and $N/2$ auxiliary regressors were used.
For NLM, prior variance was selected over $\log \alpha \in \{-1, 0, 1\}$, and regularization was selected over $\log \gamma \in \{-6, -5, -4\}$.
For SNGP, prior variance was selected over $\log \alpha \in \{-1, 0, 1\}$, regularization was selected over $\log \gamma \in \{-6, -5, -4\}$, and the normalizing factor was selected over $c \in \{1, 5\}$.

\paragraph{Radial Uncertainty Benchmark}
Hyperparameters were selected for all models by first using a BayesOpt procedure, then hand tuning each parameter, for each model, in each iteration of the experiment.
In one dimension, 10 random initializations with 20 iterations of BayesOpt were used.
In two and three dimensions, five random initializations and 10 iterations of BayesOpt were used.
All models were trained with 20,000 epochs using the Adam optimizer \cite{AdamOpt2014} with a learning rate of $10^-3$.
All NN-based models had an architecture of two layers of 50 hidden nodes with ReLU activation.
The gold-standard GP used the sum of a Matern Kernel starting at length scale 1 and a White Kernel with the noise level set to the sampled data noise in one and two dimensions.
In three dimensions, the Matern kernel started with a length scale of 0.1.
10 random restarts were used with the optimizer, as part of Scikit-Learn's built-in functionality \cite{scikit-learn}.
No minibatching was used.

For LUNA, a sigmoid diversity annealing schedule was used.
The scaling factor of the annealing, that is, $C$ in the equation $f_{sigmoid} = C/(1 + exp(-6x/N + 3))$, was searched for in log space in the range $\log C \in [-5 ,1]$.
Regularization was searched for in the range $\log \gamma \in [-10, 2]$.
Prior variance was searched for in the range $\alpha \in [10^{-7}, 10^{2}]$.
Perturbation standard deviation was searched for in the range $\log \sigma \in [-2, 0]$, and 25 auxiliary regressors were used.
For NLM, the prior variance was searched for in the range $\alpha \in [10^{-8}, 10^2]$ and the regularization parameter was searched for in the range $\log \gamma \in [-10, 2]$.
For TUNA, the prior variance was searched for in the range $\alpha \in [10^{-7}, 10^2]$, the regularization was searched for in the range $\log \gamma \in [-10, 2]$, the kernel scaling factor was searched for in the range $\log c \in [-7, 2]$, the perturbation standard deviation was searched for in the range $\log \sigma \in [-3, 0]$, and the length scale was searched for in the range $\log l \in [-3, 1]$.
For SNGP, the prior variance was searched for in the range $[10^{-8}, 10^2]$, the regularization was searched for in the range $\log \gamma \in [-10, 2]$, and the normalizing factor was searched for in the range $c \in [0, 2]$.
For Anchored Ensembles, 5 networks were used, with the initial variance, the data variance, and anchor variance all searched for in log space over the range $[-9, 1]$.
For Bootstrapped Ensembles, 20 networks were used and the regularizaion was searched for in the range $\log \gamma \in [-9, 1]$.
For BBVI, the weight variance was searched for in the range $\log \sigma^2 \in [-9, 1]$, and 1000 posterior samples were used for prediction.
For MCDropout, the regularization was searched for in the range $\log \gamma \in [-9, 1]$, the dropout was searched for in the range $[0, 1]$, and 100 output samples were used for prediction.

\paragraph{BayesOpt Example}
For all models a three layer, 50 hidden node architecture with Tanh activation was used unless noted otherwise.
For the Branin and Hartmann6 functions, we simply were searching for the global minimum.
For the logistic regression and SVM experiments, we first searched for hyperparameters on the validation set, then reported accuracy on the test set.
An 80-10-10 train-test-validation split was used.
Each experiment was run with 10 random restarts of three different random initial guesses.
A grid search over hyperparameters were chosen once at the beginning of each experiment.
The Adam optimizer with a learning rate of $10^{-3}$ and 1000 epochs was used in each experiment.
No minibatching was used.

For LUNA, the prior variance was searched for in log space over $\log \alpha \in [-1, 0, 1]$, the regularization was searched for in log space over $\log \gamma \in [-5, -4, -3]$, the diversity penalty was searched over $\lambda = 0.1$, tanh, sigmoid, square root annealing schedules specified in section \ref{sec:luna_main_body}, the perturbation standard deviation was searched for in $\sigma \in [0.1, 0.01]$, and 50 auxiliary regressors were used.
Additional hand-tuning was done for LUNA on the Hartmann6 experiment.
For NLM, the prior variance was searched for in log space over $\log \alpha \in [-1, 0, 1]$, the regularization was searched for in $\log \lambda \in [-5, -4, -3]$.
The GP length scale was searched for in $l \in [0.001, 0.01, 0.1, 1, 5, 10]$ with an RBF kernel.
SNGP prior variance was searched for over the range $\log \alpha \in [-1, 0, 1]$, the normalizing factor was searched for in the range $c \in [1, 5]$, no dropout, and 200 Random Fourier Features were used.
For Anchored Ensembles, 5 models were used, and the anchor variance, data variance, and initial variance were all searched for over the log space $[-1, 0, 1]$.
For both bootstrap and regular ensembles, 25 networks were used and regularization was searched for over the range $\log \gamma \in [-5, -4, -3, -2, -1]$.
For MCDropout, 5 forward passes were used due to the multiple evaluations needed during acquisition function optimization, dropout was searched for over the range $[0.01, 0.05, 0.1]$, and regularization was searched for over the range $\log \gamma \in [-4, -3, -2]$.
For BBVI, a reduced architecture of one layer of 50 nodes was used due to computational complexity and memory overhead, a learning rate of $10^{-2}$ was also used,, dropout rate was selected in the range $\{0.01, 0.05, 0.1, 0.2\}$ and regularization was automatically set to $1$ 1000 posterior samples were used, weight variance was searched for over the range $\sigma^2_w \in [1, 10, 50]$, and prior variance was searched for in the range $\log \alpha \in [-1, 0, 1]$.




\subsection{Real Data} \label{sec:uci-gap-examples}

\paragraph{UCI Regression}
We used six UCI regression data sets to benchmark our models.
Those are the Boston Housing, Concrete Compressive Strength, Yacht Hydrodynamics, Energy Efficiency, Abalone (Kin8nm), and Red Wine Quality datasets.
For each experiment, the data was split into 90\% train data and 10\% test data.
The train data was then split using the first 80\% train, last 20\% validation sets.
The Adam optimizer was used with 20,000 epochs and a learning rate of $10^{-3}$.
A two hidden layer, 50 node architecture with ReLU activation was used for each NN-based model.
Data noise was hand selected based on visual examination of the data.
Hyperparameters were selected using grid search, with minimum RMSE being used to select the best random restart and model for Anchored Ensembles, Bootstrap Ensembles, and Ensembles.
Maximum log-liklihood for Bayesian models, and minimum RMSE for non-Bayesian models, was used for random restart and model selection for each of the other models.
A minibatch size of 32 was used unless noted otherwise.

For LUNA, prior variance was selected over $\log \alpha \in \{-1, 0, 1\}$, regularization was selected over $\log \gamma \in \{-5, -4, -3, -2, -1\}$, diversity penalty was selected over $\log \lambda \in \{-8, -7, -6\}$ and the sigmoid annealing schedule, perturbation standard deviation was selected over $\log \sigma \in \{-2, -1\}$, minibatch size of 128 was used.
Additional hand tuning was done for the Concrete and Boston data sets.
For NLM, prior variance was selected over $\log \alpha \in \{-2, -1, 0, 1, 2\}$, regularization was selected over $\log \gamma \in \{-5, -4, -3, -2, -1\}$.
For GP, and RBF Kernel was used and length scale was searched for in $\log l \in \{-2, -1, 0, 1\}$.
For SNGP, prior variance was selected over $\log \alpha \in \{-2, -1, 0, 1, 2\}$, regularization was selected over $\log \gamma \in \{-5, -4, -3\}$, normalizing factor was searched for over $c \in \{1, 5, 10\}$, 200 Random Fourier Features were used.
For Anchored Ensembles, 5 models were used, and the anchor variance, data variance, and initial variance were all searched for over the log space $\{-2, -1, 0\}$.
For both Bootstrap Ensembles and Ensembles, 20 networks were used and regularization was searched for over the range $\log \gamma \in \{-5, -4, -3, -2, -1\}$.
For BBVI, 1000 posterior samples were used, weight variance was searched for over the range $\sigma^2_w \in [1, 10, 50]$, and prior variance was searched for in the range $\log \alpha \in [-1, 0, 1]$.
For MCDropout, dropout rate was selected in the range $p \in \{0.01, 0.05, 0.1, 0.2\}$ and regularization was automatically set to $1/\sigma^2_{data}$.

\paragraph{UCI Gap}
We used 3 standard UCI~\citep{uci_data} regression data sets and modify them to create 6 ``gap data-sets'', wherein we purposefully created a gap in the data where we can test our model's in-between uncertainty (i.e. we train our model on the non-gap data and test the model's epistemic uncertainty on the gap data).
We adapt the procedure from ~\cite{uci_gap} to convert these UCI data sets into UCI gap data sets. For a selected input dimension,
we (1) sort the data in increasing order in that dimension,
and (2) remove middle $1/3$ to create a gap.
We specifically selected input dimensions that have high correlation with the output in order to ensure that the learned model should have epistemic uncertainty in the gap;
that is, if we select a dimension that is not useful for prediction, any model need not have increased uncertainty in the gap.
The features we selected are:
\begin{itemize}
    \item Boston Housing: ``Rooms per Dwelling'' (RM), ``Percentage Lower Status of the Population'' (LSTAT), and ``Parent Teacher Ratio'' (PTRATIO)
    \item Concrete Compressive Strength: ``Cement'' and ``Superplasticizer''
    \item Yacht Hydrodynamics: ``Froude Number''
\end{itemize}
The not gap region of the data was then split into 10 different 80-10-10 train,test, validation splits.
Final results are computed as the mean and standard deviation of the predicions over all of the splits.
The same architecture was used here as in the UCI regression experiments above.
Data noise was hand tuned by maximizing validation log-likelihood of an NLM with prior variance $0.1$ and regularization $10^{-5}$.
Maximum log-liklihood for Bayesian models, and minimum RMSE for non-Bayesian models, was used for random restart and model selection for each of the other models.
All models used a batch size of 32 unless noted otherwise.

For LUNA, prior variance was selected over $\log \alpha \in \{-1, 0, 1\}$, regularization was selected over $\log \gamma \in \{-5, -4, -3, -2, -1\}$, diversity penalty was selected over $\log \lambda \in \{-2, -1\}$, the sigmoid, and tanh annealing schedules, perturbation standard deviation was set to $\log \sigma = -2$, minibatch size of 128 was used.
Model selection was done by first selecting the top 90th percentile runs based on validation log-likelihood, then selecting the lowest validation diversity penalty from these runs.
For NLM, prior variance was selected over $\log \alpha \in \{-1, 0, 1\}$, regularization was selected over $\gamma \in \{-5, -4, -3, -2, -1\}$.
For GP, and RBF Kernel was used and length scale was searched for in $l \in \{10^{-2}, 5\cdot10^{-2}, 10^{-1}, 5\cdot^{-1}, 0, 1, 5, 10, 50, 500\}$.
For SNGP, prior variance was selected over $\log \alpha \in \{-1, 0, 1\}$, regularization was selected over $\log \gamma \in \{-5, -4, -3, -2, -1\}$, normalizing factor was searched for over $c \in \{1, 5\}$, 200 Random Fourier Features were used.
For Anchored Ensembles, 5 models were used, and the anchor variance, data variance, and initial variance were all searched for over the log space $\{-2, -1, 0\}$.
For both Bootstrap Ensembles and Ensembles, 20 networks were used and regularization was searched for over the range $\log \gamma \in \{-5, -4, -3, -2, -1\}$.
For BBVI, 1000 posterior samples were used, weight variance was searched for over the range $\sigma^2_w \in \{1, 10, 50\}$, and prior variance was searched for in the range $\log \alpha \in [-1, 0, 1]$.
50,000 training epochs and no random restarts were used due to increased computational time.
For MCDropout, dropout rate was selected in the range $p \in \{0.005, 0.01, 0.05, 0.1\}$ and regularization was automatically set to $1/\sigma^2_{data}$.

\end{appendices}

\end{document}